\definecolor{greyC}{RGB}{180,180,180}
\definecolor{greyL}{RGB}{235,235,235}
\definecolor{citeColor}{RGB}{0,20,115}
\newcommand{\ie}{\emph{i.e., }}
\newcommand{\eg}{\emph{e.g., }}
\newcommand{\wrt}{\emph{w.r.t. }}
\newcommand{\green}[1]{\textcolor[rgb]{0 0.6 0}{#1}}
\newcommand{\needrevise}[1]{\textcolor{red}{#1}}
\newtheorem{theorem}{Theorem}[section]
\newtheorem{lemma}[theorem]{Lemma}
\newcommand{\dy}{C}
\def\eqref#1{equation~\ref{#1}}
\def\1{\bm{1}}
\def\*#1{\mathbf{#1}}
\DeclareMathAlphabet{\mathsfit}{\encodingdefault}{\sfdefault}{m}{sl}
\SetMathAlphabet{\mathsfit}{bold}{\encodingdefault}{\sfdefault}{bx}{n}
\def\sR{{\mathbb{R}}}
\newcommand{\cdotv}{\boldsymbol{\cdot}}
\newcommand{\thetav}{\boldsymbol{\theta}}
\DeclareMathOperator*{\argmin}{arg\,min}
\title{Generative Model Inversion Through the Lens of the Manifold Hypothesis}
\author{
\textbf{Xiong Peng}$^{1}$ \quad \textbf{Bo Han}$^{1}\thanks{Correspondence to Bo Han (bhanml@comp.hkbu.edu.hk).}$ \quad \textbf{Fengfei Yu}$^{1}$ \quad \textbf{Tongliang Liu}$^{2}$ \quad \textbf{Feng Liu}$^{3}$ \vspace{1mm}\quad \textbf{Mingyuan Zhou}$^{4}$ \\
$^{1}$TMLR Group, Department of Computer Science, Hong Kong Baptist University \\
$^{2}$Sydney AI Centre, The University of Sydney \\
$^{3}$School of Computing and Information Systems, The University of Melbourne \\\vspace{1mm}
$^{4}$McCombs School of Business, The University of Texas at Austin \\
\textnormal{\{csxpeng, bhanml\}@comp.hkbu.edu.hk}\quad
\textnormal{alvinfengfei@gmail.com}\\
\textnormal{tongliang.liu@sydney.edu.au} \quad
\textnormal{fengliu.ml@gmail.com} \quad 
\textnormal{mingyuan.zhou@mccombs.utexas.edu} 
}
\begin{document}

\makeatletter
\renewcommand*{\@fnsymbol}[1]{\ensuremath{\ifcase#1\or \dagger\or \ddagger\or
   \mathsection\or \mathparagraph\or \|\or **\or \dagger\dagger
   \or \ddagger\ddagger \else\@ctrerr\fi}}
\makeatother

\maketitle

\begin{abstract}
\emph{Model inversion attacks} (MIAs) aim to reconstruct class-representative samples from trained models. 
Recent generative MIAs utilize generative adversarial networks to learn image priors that guide the inversion process, yielding reconstructions with high visual quality and strong fidelity to the private training data.
To explore the reason behind their effectiveness, we begin by examining the gradients of inversion loss \wrt synthetic inputs, and find that these gradients are surprisingly noisy. Further analysis reveals that generative inversion implicitly denoises these gradients by projecting them onto the tangent space of the generator manifold, thereby filtering out off-manifold components while preserving informative directions aligned with the manifold.
Our empirical measurements show that, in models trained with standard supervision, loss gradients often exhibit large angular deviations from the generator manifold, indicating poor alignment with class-relevant directions.
This observation motivates our central hypothesis: models become more vulnerable to MIAs when their loss gradients align more closely with the generator manifold. We validate this hypothesis by designing a novel training objective that explicitly promotes such alignment.
Building on this insight, we further introduce a \emph{training-free} approach to enhance gradient-manifold alignment during inversion, leading to consistent improvements over state-of-the-art generative MIAs. Code will be made publicly available at \url{https://github.com/tmlr-group/AlignMI}.

\end{abstract}

\section{Introduction}

Machine learning (ML) models are increasingly deployed in high-stakes domains such as finance~\citep{rundo2019machine}, healthcare~\citep{richens2020improving}, and biometrics~\citep{jain2012biometric}. Trained on sensitive data, these models are attractive targets for adversarial threats~\citep{fredrikson2014,label-only, privacy_attacks}. One emerging threat is the \emph{model inversion attack} (MIA), which exploits model outputs to infer class-sensitive attributes or reconstruct representative samples, thereby posing serious risks to user privacy and security.

Early work by \citet{Second_MI} formulated MIAs as an input-space optimization problem, using gradient descent to find inputs that maximize the prediction score of a target class. This method effectively reconstructed low-resolution grayscale faces from shallow models.
However, this approach performs poorly on deep neural networks (DNNs) trained on high-dimensional data (\eg RGB facial images). Since direct optimization in the input space is often ill-posed: natural images are not uniformly distributed across the input domain, but are instead concentrated on a low-dimensional manifold embedded in a high-dimensional ambient space~\citep{fefferman2016testing}. Consequently, reconstructions often fall off the manifold and produce semantically irrelevant features.

To address this challenge, \citet{GMI} introduced the \emph{generative model inversion} framework, which leverages generative adversarial networks (GANs)~\citep{GAN, DCGAN} to learn an image prior from public auxiliary datasets, such as web-scraped facial images. The learned prior constrains the inversion process to the generator's latent space, significantly improving the visual quality and semantic relevance of reconstructed samples. This paradigm has spurred notable progress in the MIA field~\citep{VMI, PPA, rethink_MI, peng2024PPDG}, enabling recovery of samples that closely resemble the private training data.

\begin{figure*}[t!]
  \centering
    \subfigure[Framework overview]{
    \includegraphics[width=0.528\textwidth]{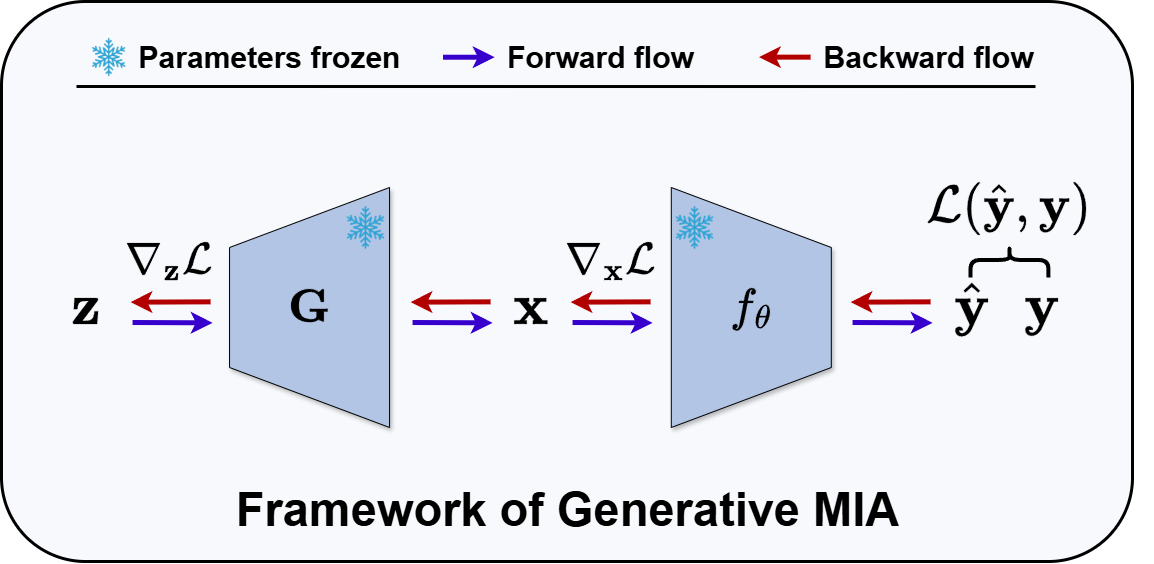}
    \label{fig:framework}
  }
    \subfigure[Loss gradient visualization]{
    \includegraphics[width=0.373\textwidth]{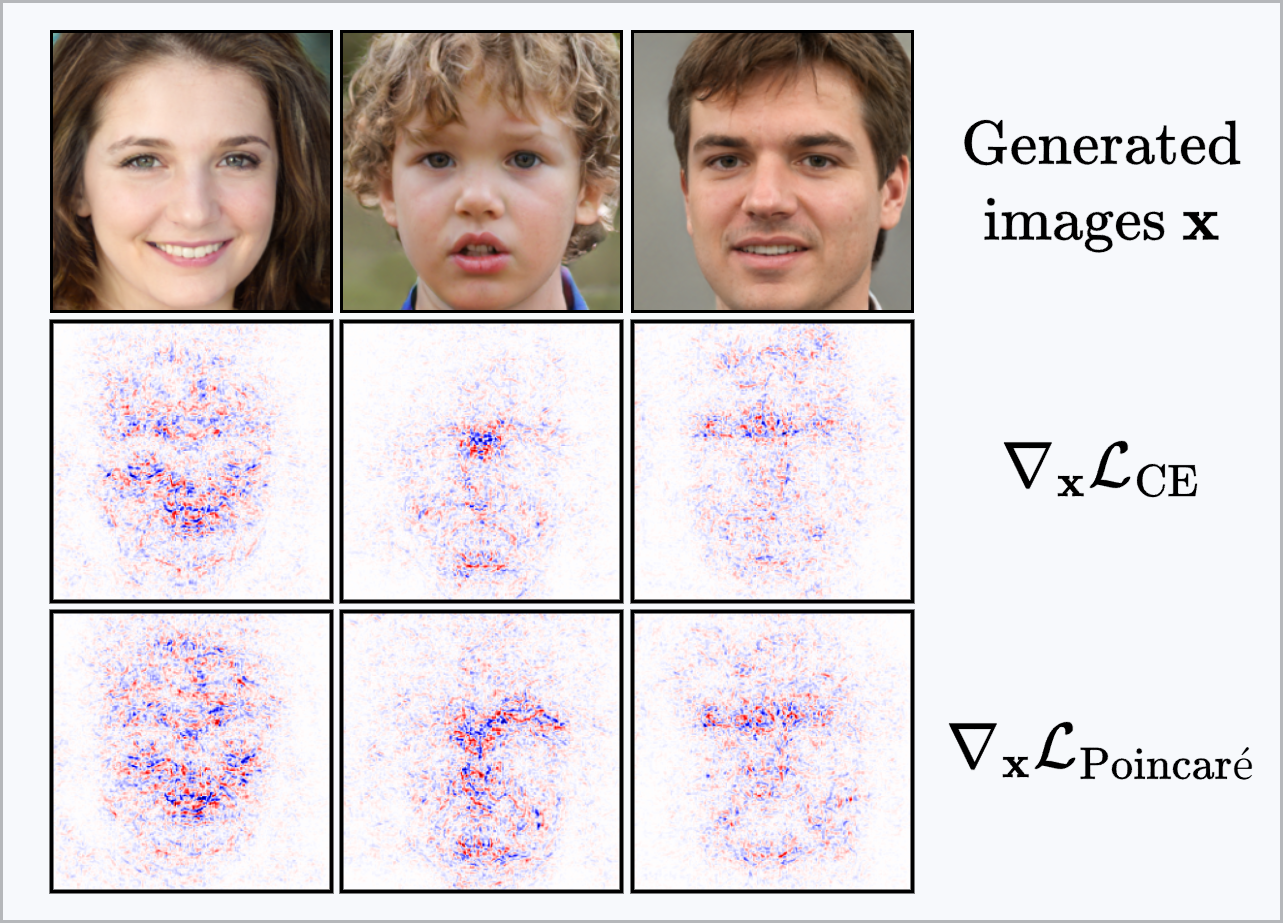}
    \label{fig:dL/dx}
  }
  \vspace{-4pt}
    
    \caption{\textbf{Generative MIA framework and loss gradient visualization.} \textbf{(a)} Generative model inversion extracts private information via inversion-time classification loss gradients $\nabla_{\*x} \mathcal{L}(\hat{y}, y)$ from model $f_{\thetav}$. \textbf{(b)} Loss gradient visualizations under PPA method in the high-resolution setting, comparing cross-entropy ($\mathcal{L}_\text{CE}$) and Poincar\'e ($\mathcal{L}_{\text{Poincar\'e}}$) losses. In both cases, the loss gradients appear highly noisy. Additional low-resolution visualizations are provided in Appendix~\ref{app:gradvis}.}

  \vspace{-15pt}

\label{fig:motivation}
\end{figure*}

Despite the empirical success of generative MIAs, it remains insufficiently understood how private information encoded in the target model is exploited during the inversion process.
To bridge this research gap, we adopt a gradient-based perspective, and begin by closely examining the gradients of the \emph{inversion-time} classification loss \wrt the synthetic inputs (hereafter referred to as loss gradients) during the inversion optimization process. Surprisingly, we observe that these gradients are highly noisy (see Fig.~\ref{fig:dL/dx}).
Based on the manifold hypothesis and a geometric analysis, we show that the generative MIA approach implicitly performs gradient denoising by projecting the loss gradients onto the tangent space of the generator manifold. This projection preserves informative components that lie along the manifold while filtering out noisy directions that deviate from it (see Fig.~\ref{fig:manifold}). 

To assess how well the loss gradients capture semantically meaningful directions, we measure their alignment with the tangent space of the generator manifold. Specifically, we quantify this alignment by computing the cosine of the angle between the gradient and its projection onto the manifold. Empirical results show that models trained under standard classification supervision (\ie vanilla models) exhibit consistently low alignment (see Fig.~\ref{fig:cos_measure}), suggesting that their loss gradients often deviate from the generator manifold and therefore encode limited class-relevant information.
Motivated by this observation, as well as the intuition that stronger alignment with the generator manifold indicates more informative gradients,
we propose the following hypothesis: 
\emph{Models are more vulnerable to MIAs when their loss gradients are more aligned with the tangent space of the generator manifold.}


To validate this hypothesis, we design a training objective that promotes alignment between loss gradients and the generator manifold during the inversion process. Although this alignment is not directly measurable during training, a key observation bridges the gap: by the chain rule, loss gradients can be expressed as linear combinations of input gradients---\ie the gradients of the model’s outputs \wrt its inputs. This insight allows us to shift our focus: during training, we can instead encourage alignment between input gradients and the tangent space of the data manifold. To estimate this tangent space, we leverage a pre-trained variational autoencoder (VAE) from Stable Diffusion~\citep{vae,ldm}, which approximates the natural image manifold. Based on this estimate, we propose a novel training objective that augments the standard classification loss with an auxiliary term that promotes input gradients to align with the estimated tangent space.


Motivated by this observation, we propose AlignMI, a training-free approach designed to enhance gradient-manifold alignment during inversion. The key idea is to sample multiple variants of a synthetic input within a local neighborhood and average their corresponding loss gradients. This operation attenuates noisy, off-manifold directions while amplifying consistent components aligned with the manifold, resulting in a more informative and semantically meaningful gradient signal.




In summary, our contributions are: (1) We present the first geometric analysis of generative model inversion, revealing that it fundamentally operates as an implicit gradient denoising mechanism via projection onto the generator manifold (Sec.~\ref{sec:understanding}). (2) This perspective leads us to hypothesize, and empirically validate that stronger gradient-manifold alignment increases a model’s vulnerability to MIAs, revealing a previously underexplored dimension of model inversion vulnerability (Sec.~\ref{sec:hypothesis}). (3) Based on this insight, we propose AlignMI, a training-free approach to enhance gradient-manifold alignment during inversion. We instantiate AlignMI with two concrete techniques: \emph{perturbation-averaged alignment} (PAA) and \emph{transformation-averaged alignment} (TAA) (Sec.~\ref{sec:method}), both of which consistently improve the performance of state-of-the-art (SOTA) generative MIAs (Sec.~\ref{sec:exp}).



\section{Background}\label{sec:background}
In this section, we formalize the problem setup of generative MIAs and introduce the necessary geometric concepts and notations. A detailed discussion of related work is provided in Appendix~\ref{app:detailed_related_work}.

\textbf{Problem Setup of Generative MIAs.}
Let the ambient space be $\mathcal{X} = \mathbb{R}^d$, and the private label space be $\mathcal{Y}_{\text{pri}} = \{1, \dots, \dy\}$. The target model $f(\cdotv; \thetav): \mathcal{X} \rightarrow \mathbb{R}^\dy$  is a classifier that outputs \emph{class logits}, trained on a private dataset $\mathcal{D}_{\text{pri}}$ sampled from distribution $p_{\text{pri}}(\*x, y)$. We presume the \emph{manifold hypothesis}: the private data distribution $p_{\text{pri}}$ is supported on a low-dimensional submanifold $\mathcal{M}_{\text{pri}} \subset \mathbb{R}^d$ with intrinsic dimension $k \ll d$.
In MIAs, the adversary aims to synthesize inputs that reveal class-sensitive features of the private training data for a target class $y \in \mathcal{Y}_{\text{pri}}$. The adversary is assumed to have white-box access to the model $f$, as well as general knowledge of the data domain (\eg the data consists of facial images),  but no direct access to the private dataset $\mathcal{D}_{\text{pri}}$.

MIAs are typically framed as an optimization problem: given a target class $y$, the adversary seeks for an input $\*x \in \mathcal{X}$ that maximizes the likelihood of $y$ under model $f$~\citep{Second_MI}. However, when $f$ is a DNN trained on high-dimensional data, direct optimization in the ambient space $\mathcal{X}$ often results in unrealistic samples that lack semantic relevance~\citep{szegedy2013intriguing}, due to the ill-posed nature of the problem.
To address this challenge, \citet{GMI} proposed a two-stage generative model inversion framework, which we outline below. 

In the first stage, the adversary collects a public auxiliary dataset $\mathcal{D}_{\text{aux}}$ drawn from a distribution $p_{\text{aux}}$, with a label set disjoint from that of the private training dataset (\ie $\mathcal{Y}_{\text{aux}} \cap \mathcal{Y}_{\text{pri}} = \emptyset$). The distribution $p_{\text{aux}}$ is assumed to be supported on a submanifold $\mathcal{M}_{\text{aux}}$ that approximates the private data manifold $\mathcal{M}_{\text{pri}}$, even though $p_{\text{pri}}$ and $p_{\text{aux}}$ may differ.
A GAN is then trained on $\mathcal{D}_{\text{aux}}$ to estimate $p_{\text{aux}}$, consisting of a generator $\mathrm{G}: \mathcal{Z} \rightarrow \mathcal{X}$ that maps latent variables $\*z \in \mathcal{Z}=\mathbb{R}^k$ to samples $\*x \in \mathcal{M}_{\text{aux}}$, and a discriminator $\mathrm{D}: \mathcal{X} \rightarrow \mathbb{R}$ that distinguishes real from generated data.
In the second stage, the adversary performs attack optimization in the latent space $\mathcal{Z}$ of the generator $\mathrm{G}$, effectively restricting the search space to the manifold $\mathcal{M}_{\text{aux}}$, which can be formulated as:
\begin{equation} \label{eq:gen_model_inv}
  \*z^*   = \argmin_{\*z} \ \mathcal{L}_{\text{inv}}(\*z; y, f, \mathrm{G}, \mathrm{D})=\mathcal{L}_{\text{cls}}(\*z;y,f, \mathrm{G}) + \lambda \mathcal{L}_{\text{prior}}(\*z;\mathrm{G},\mathrm{D}).
\end{equation}
Here, $\mathcal{L}_{\text{cls}}$ denotes the \emph{inversion-time} classification loss, \eg the logit loss $- f_y(\mathrm{G}(\*z))$~\citep{rethink_MI}, which drives the optimization toward a synthetic sample $\*x^*=\mathrm{G}(\*z^*)$ that maximally activates class $y$. The term $\mathcal{L}_{\text{prior}}$ regularizes the latent code $\*z$, promoting plausible generations. The hyperparameter $\lambda$ controls the trade-off between the two loss terms.

\textbf{Geometric Preliminaries and Notation.}
Let $\mathcal{M} \subset \mathbb{R}^d$ be a $k$-dimensional differentiable manifold. At any point $\*x \in \mathcal{M}$, the tangent space $T_{\*x} \mathcal{M}$ is a $k$-dimensional linear subspace of $\mathbb{R}^d$ that locally approximates $\mathcal{M}$, capturing the directions of infinitesimal motion that remain on the manifold.
To formalize projections onto this space, we denote by $\mathbf{P}_{\*x} \in \mathbb{R}^{d\times d}$ the projection matrix that projects vectors in $\mathbb{R}^d$ onto $T_{\*x} \mathcal{M}$. 
When $\mathbf{P}_{\*x}$ is symmetric and idempotent, it defines an orthogonal projection.

Now consider a differentiable generator $\mathrm{G}: \mathcal{Z} \rightarrow \mathcal{X}$. For any latent vector $\*z \in \mathcal{Z}$, the Jacobian matrix $J_\mathrm{G}(\*z) = \frac{\partial \mathrm{G}}{\partial \*z} \in \mathbb{R}^{d \times k}$ characterizes how infinitesimal perturbations in the latent space map to changes in data space. If $J_\mathrm{G}(\*z)$ has full column rank $k$, then the image of $\mathrm{G}$ forms a $k$-dimensional differentiable manifold $\mathcal{M} \subset \mathbb{R}^d$ \citep{lee2003smooth}. Moreover, the column span (\ie range) of $J_\mathrm{G}(\*z)$ equals the tangent space at the corresponding point $\*x = \mathrm{G}(\*z)$:
\begin{equation}\label{eq:tangent}
T_{\*x} \mathcal{M} =\text{span} \left\{ \frac{\partial \mathrm{G}}{\partial \*z_1}, \ldots, \frac{\partial \mathrm{G}}{\partial \*z_k} \right\} \eqqcolon \mathrm{Range}(J_{\mathrm{G}}(\mathbf{z})).
\end{equation}
The Jacobian $J_\mathrm{G}$ plays a key role in our analysis of how loss gradients interact with the generator manifold in the context of generative model inversion, as detailed in the next section.

\section{A Geometric Lens for Understanding Generative MIAs}\label{sec:understanding}


\begin{wrapfigure}{r}{0.42\textwidth}
  \centering
  \includegraphics[width=0.42\textwidth]{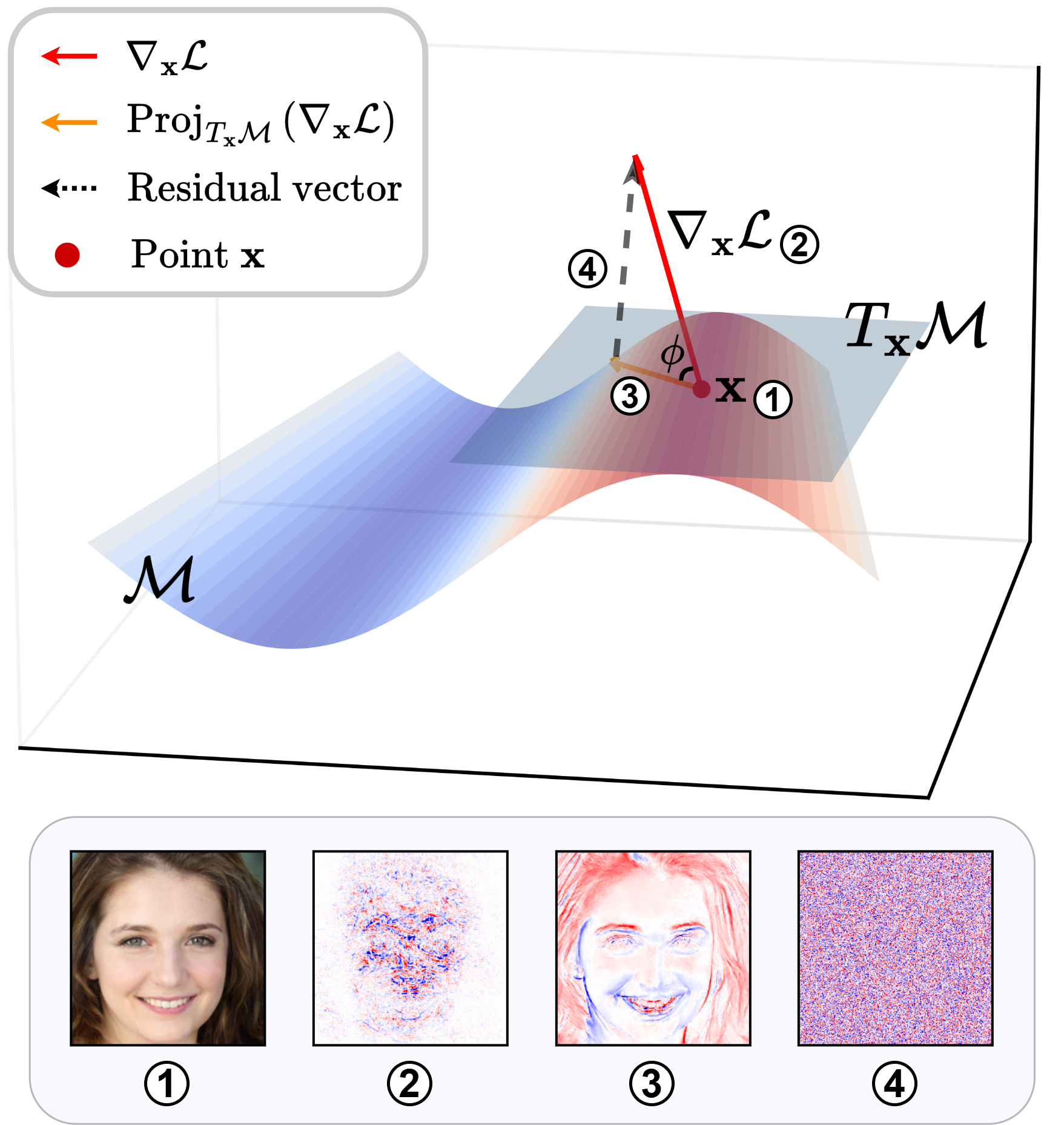}
    \vspace{-15pt}
  \caption{
    \textbf{Geometric interpretation of loss gradients projection onto the generator manifold.} The generative model inversion process implicitly denoises the loss gradients $\nabla_{\*x} \mathcal{L}$ by projecting them onto the tangent space $T_{\*x} \mathcal{M}$ of the generator manifold $\mathcal{M}$. The bottom panel illustrates the reconstructed image, its inversion-time loss gradients, the manifold-projected gradients, and the residual component.
  }\label{fig:manifold}
  \vspace{-20pt}
\end{wrapfigure}

In this section, we analyze how generative model inversion exploits private information encoded in a target model $f$ to reconstruct input samples. Central to this process are the the inversion-time loss gradients $\nabla_{\*x} \mathcal{L}_{\text{cls}}(f(\*x), y)$ (with $\*x = \mathrm{G}(\*z)$), which are backpropagated to optimize the latent variable $\*z$ (see Fig.~\ref{fig:framework}). Empirically, we find that these gradients are often highly noisy (see Figs.~\ref{fig:dL/dx} and \ref{fig:manifold}), with many components misaligned with the intrinsic structure of the generator manifold. This observation may explain why direct optimization in the ambient space frequently leads to semantically meaningless samples~\citep{Second_MI, GMI, VMI}.
To better understand how the generator $\mathrm{G}$ processes this gradient signal, we analyze the transformation of the gradient via the Jacobian $J_\mathrm{G}$. By the chain rule, the loss gradients \wrt the latent variable $\*z$ can be expressed as:
\begin{equation*}
\begin{aligned}
\nabla_{\mathbf{z}} \mathcal{L}_{\text{cls}} 
&= (J_\mathrm{G})^\top \nabla_{\mathbf{x}} \mathcal{L}_{\text{cls}} \in \mathbb{R}^k \\
&= \left[ \left\langle \frac{\partial \mathrm{G}}{\partial \mathbf{z}_1}, \nabla_{\mathbf{x}} \mathcal{L}_{\text{cls}} \right\rangle, \ldots, \left\langle \frac{\partial \mathrm{G}}{\partial \mathbf{z}_k}, \nabla_{\mathbf{x}} \mathcal{L}_{\text{cls}} \right\rangle \right]^\top.
\end{aligned}
\end{equation*}
Thus, $\nabla_{\mathbf{z}} \mathcal{L}_{\text{cls}}$ can be interpreted as expressing the ambient loss gradients $\nabla_{\mathbf{x}} \mathcal{L}_{\text{cls}}$ in the basis formed by the columns of the generator's Jacobian (\ie the tangent basis at $\*x$). In other words, each component of the latent gradient represents the directional derivative of the loss along one of the generator’s valid, manifold-constrained directions.
This “pullback” maps the high-dimensional loss gradient into the latent space, yielding a structured signal aligned with the generator manifold.
To understand how these latent gradients influence updates in data space, we now analyze their pushforward by applying a first-order Taylor approximation:
\begin{equation*}
\mathrm{G}(\*z - \eta \nabla_{\*z} \mathcal{L}_{\text{cls}}) - \mathrm{G}(\*z) \approx - \eta J_\mathrm{G} \nabla_{\*z} \mathcal{L}_{\text{cls}}.
\end{equation*}
Here, $\eta$ denotes the step size for updating $\*z$. The term $J_\mathrm{G} \nabla_{\*z} \mathcal{L}_{\text{cls}}$ represents a linear combination of the tangent basis vectors at $\*x$, where $\nabla_{\*z} \mathcal{L}_{\text{cls}}$ serves as the coordinate vector in this basis. As a result, it lies entirely within the tangent space $T_{\mathbf{x}} \mathcal{M}$\footnote{For notational simplicity, we use $\mathcal{M}$ to denote $\mathcal{M}_\text{aux}$ throughout this section.} (see Eq.~(\ref{eq:tangent})).
More importantly, the resulting vector can be interpreted as the projection of the ambient gradients $\nabla_{\*x} \mathcal{L}_{\text{cls}}$ onto the tangent space $T_{\mathbf{x}} \mathcal{M}$:
\begin{equation*}
\mathrm{Proj}_{T_{\mathbf{x}} \mathcal{M}}\left( \nabla_{\*x} \mathcal{L}_{\text{cls}} \right) = J_\mathrm{G} \nabla_{\*z} \mathcal{L}_{\text{cls}} = \left[ J_\mathrm{G} (J_\mathrm{G})^\top \right] \nabla_{\*x} \mathcal{L}_{\text{cls}} = \widetilde{\mathbf{P}}_{\*x} \nabla_{\*x} \mathcal{L}_{\text{cls}},
\end{equation*}
where $\widetilde{\mathbf{P}}_{\*x} = J_\mathrm{G} (J_\mathrm{G})^\top$ is an unnormalized projector onto the tangent space $T_{\mathbf{x}} \mathcal{M}$. This projection operation has a critical denoising effect: it preserves only the gradient components aligned with the tangent space of the generator manifold, while filtering out directions that deviate from it (see Fig.~\ref{fig:manifold}). Thus, backpropagation through the generator fundamentally acts as a geometric filter, allowing optimization to proceed along semantically meaningful directions. 

To assess how well the loss gradient $\nabla_{\*x} \mathcal{L}_{\text{cls}}$ aligns with informative directions, we measure its alignment with the tangent space $T_{\mathbf{x}} \mathcal{M}$ at point $\*x$. Specifically, we quantify this alignment by computing the \emph{cosine of the angle} between the loss gradients and the projection onto $T_{\mathbf{x}} \mathcal{M}$. Note that while $\widetilde{\mathbf{P}}_{\*x}$ performs a projection onto $T_{\mathbf{x}} \mathcal{M}$, it is not a valid orthogonal projection operator unless the columns of $J_{\mathrm{G}}$ are orthonormal. To construct an orthogonal projector, we perform singular value decomposition (SVD) on the $J_{\mathrm{G}}$:
$
    J_{\mathrm{G}} = \mathbf{U} \mathbf{\Sigma} \mathbf{V}^\top,
$
where $\mathbf{U} \in \sR^{d\times d}$, $\mathbf{\Sigma} \in \sR^{d\times k}$ and $\mathbf{V} \in \sR^{k\times k}$. Let $\mathbf{U}_k \in \sR^{d\times k}$ denote the matrix consisting of the first $k$ left-singular vectors, which form an orthonormal basis for $\mathrm{Range}(J_{\mathrm{G}})$, \ie the tangent space $T_{\mathbf{x}} \mathcal{M}$ (see Eq.~(\ref{eq:tangent})). The corresponding orthogonal projection matrix is then given by
$
\mathbf{P}_{\*x} = \mathbf{U}_k \mathbf{U}_k^\top.
$
Consequently, we compute the cosine of the angle $\phi$ between the loss gradients and the projection on the tangent space as:
\begin{equation}\label{eq:cos_phi}
\cos(\phi)
=
\bigl\lVert \mathbf{P}_{\*x}\,\nabla_{\*x}\mathcal{L}_{\mathrm{cls}}\bigr\rVert
~\big/ ~
\bigl\lVert \nabla_{\*x}\mathcal{L}_{\mathrm{cls}}\bigr\rVert.
\end{equation}
We refer to this quantity as the \emph{alignment score}, denoted $\operatorname{AS}(\nabla_\*x \mathcal{L}_{\text{cls}}):= \cos(\phi)$, which quantifies the extent to which the loss gradient lies within the tangent space at point $\*x$. Higher values correspond to smaller angles and thus indicate stronger alignment. 
When evaluating Eq.~(\ref{eq:cos_phi}), it is important to note that even random vectors exhibit non-zero projections onto the tangent space purely due to geometric effects. In expectation, a random vector aligns with a $k$-dimensional subspace with a magnitude of approximately $\sqrt{k/d}$~\citep{vershynin2018high}.
To assess the informativeness of the loss gradients, we track the alignment score throughout the inversion process. Empirically, we observe that in models trained with standard supervision, the alignment score remains consistently low (see Fig.~\ref{fig:cos_measure}). This suggests that the loss gradients frequently point in directions misaligned with the underlying data manifold, and therefore carry limited semantically meaningful information for guiding inversion.



\begin{figure*}[t!]
  \centering
    \subfigure[Low resolution (DCGAN)]{
    \includegraphics[width=0.306\textwidth]{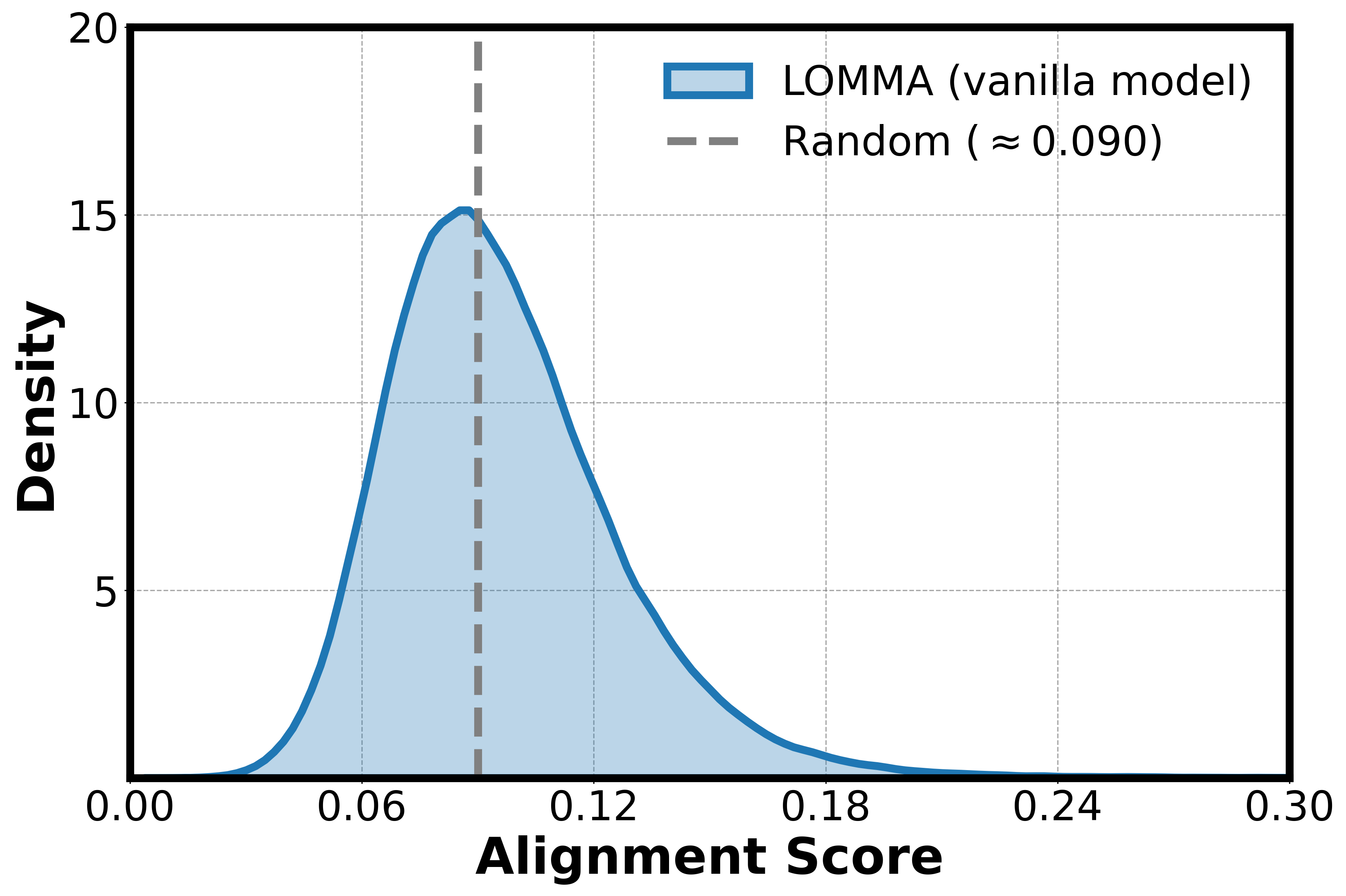}
    \label{fig:low_res_cos}
  }
    \subfigure[High resolution (StyleGAN)]{
    \includegraphics[width=0.3\textwidth]{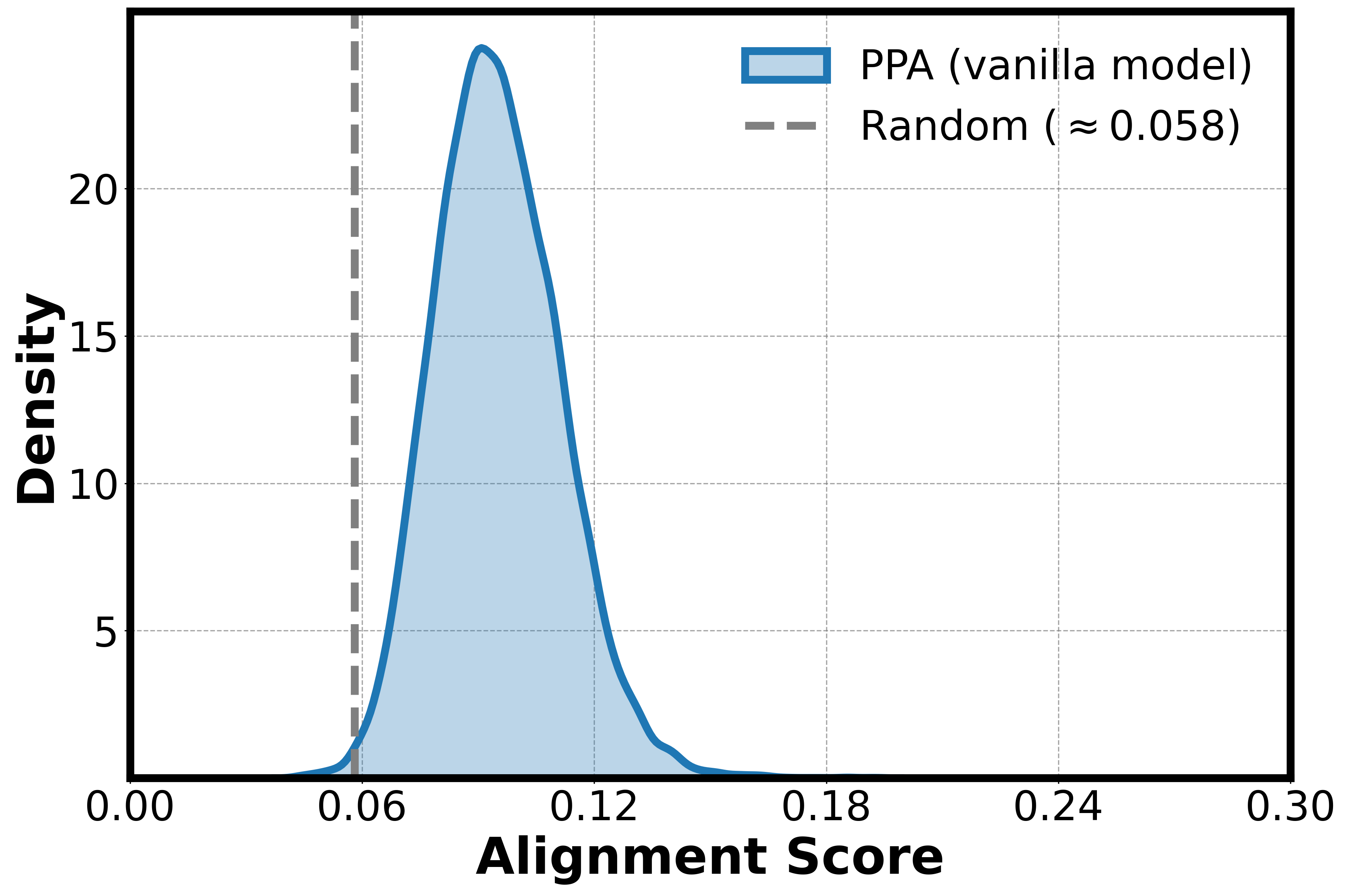}
    \label{fig:high_res_cos}
  }
    \subfigure[Inversion-phase dynamics]{
    \includegraphics[width=0.288\textwidth]{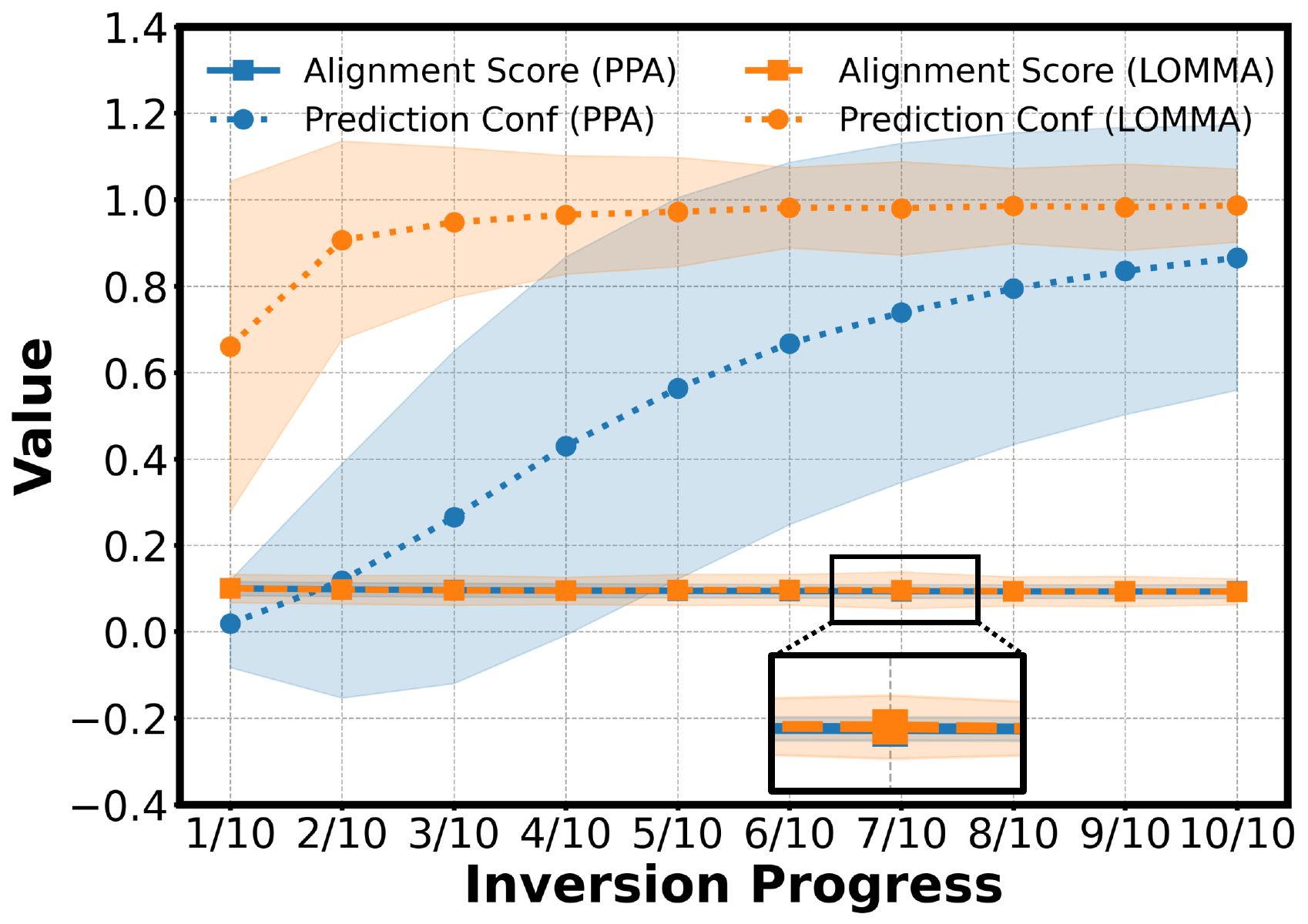}
    \label{fig:dynamics}
  }
  \vspace{-5pt}
    \caption{\textbf{gradient-manifold alignment during the inversion process.} 
    \textbf{(a)} Alignment score distribution in the low-resolution setting using LOMMA with a DCGAN trained on CelebA.
    \textbf{(b)} High-resolution counterpart using PPA with a StyleGAN trained on FFHQ. In both cases, alignment scores are only slightly above those of random vectors, suggesting weak alignment along the generator manifold.
    \textbf{(c)} Evolution of the average alignment score and prediction confidence over the inversion process. While the model’s prediction confidence steadily increases, the gradient-manifold alignment remains consistently low, indicating no direct dependence between the two. For additional details on this experiment, as well as results from other attack methods, refer to Appendix~\ref{app:cos_measure}.
    }
    \label{fig:cos_measure}
    \vspace{-6pt}
\end{figure*}

\section{Does Gradient-Manifold Alignment Indicate MIA Vulnerability?}\label{sec:hypothesis}

Motivated by the previous empirical findings and the intuition that stronger alignment with the generator manifold reflects more informative gradients, we propose the following hypothesis:
\begin{center}
\begin{minipage}{0.8\linewidth}
\centering
\emph{Models are more vulnerable to MIAs when their loss gradients are more aligned with the tangent space of the generator manifold.}
\end{minipage}
\end{center}


To validate our hypothesis, we aim to design a training objective that promotes stronger alignment between loss gradients and the generator manifold during inversion. A key challenge, however, is that this alignment is \emph{not directly accessible} during training. To bridge this gap, we analyze the inversion-time classification loss, which is the only term that directly interacts with the target model $f$ to extract private information. By the chain rule, its gradients can be expressed as a linear combination of input gradients (\ie the gradients of model outputs \wrt inputs)~\citep{rethinking2021gradient,bhalla2023discriminative}.
Formally, let $f(\*x) = [f_1(\mathbf{x}), f_2(\mathbf{x}), \ldots, f_{\dy}(\mathbf{x})]$ denote the model’s logits for $\dy$ classes. 
The inversion-time classification loss $\mathcal{L}_{\text{cls}}(f(\*x), y)$ is a function of these logits, \ie
$
\mathcal{L}_{\mathrm{cls}}(f(\*x), y) = \mathcal{L}_{\mathrm{cls}}\bigl(f_1(\mathbf{x}), f_2(\mathbf{x}), \ldots, f_{\dy}(\mathbf{x})\bigr).
$ 
Thus, by the chain rule, we obtain:
\begin{equation}
\begin{split}
  \nabla_{\mathbf{x}} \mathcal{L}_{\mathrm{cls}}
  &= \sum_{i=1}^{\dy} \frac{\partial \mathcal{L}_{\mathrm{cls}}}{\partial f_i} \,\nabla_{\mathbf{x}} f_i.
\end{split}
\end{equation}
In other words, the loss gradient is a weighted sum of input gradients, where the weights $\frac{\partial \mathcal{L}_{\mathrm{cls}}}{\partial f_i}$ quantify the sensitivity of the loss to each logit. 
This structural insight allows us to shift our focus from loss gradients to input gradients, which are directly accessible during training~\citep{axiomatic,dwivedi2023explainable}. Moreover, if the input gradients align well with the data manifold, then by construction, the loss gradients will also exhibit improved alignment.
Thus, during training, we propose to encourage alignment between input gradients and the tangent space of the data manifold, thereby indirectly promoting alignment of loss gradients during the inversion phase, making alignment-aware training feasible without requiring access to the inversion process.

\textbf{Gradient-Manifold Alignment Training.}
Building on the above analysis, we propose a novel training objective to validate our hypothesis. It consists of two components: (1) the standard cross-entropy (CE) loss for training-time classification, and (2) an auxiliary term that explicitly encourages the model’s input gradients to align with the estimated tangent space of the data manifold. Crucially, the second term leverages the fact that the input gradients of the classifier, $\nabla_{\mathbf{x}} f_i(\mathbf{x}; \thetav)$, are differentiable \wrt model parameters $\thetav$, and can therefore be directly optimized during training.

To estimate the tangent space of the data manifold, we leverage a powerful pre-trained variational autoencoder (VAE), specifically, the one used in Stable Diffusion~\citep{ldm}. Trained on large-scale datasets~\citep{openimagesv4,laion5b}, this VAE provides a strong approximation of the natural image manifold.
A VAE consists of an encoder $\mathcal{E}$ and a decoder $\mathcal{D}$. Given an input image $\*x$, the encoder maps it to a latent vector $\*z = \mathcal{E}(\*x)$, and the decoder reconstructs the image as $\hat{\*x} = \mathcal{D}(\*z)$\footnote{To align with VAE literature, we use $\mathcal{D}$ to denote the VAE decoder, despite its earlier use for the dataset in Sec.~\ref{sec:background}. The latent representation $\mathbf{z}$ is similarly reused for notational convenience.}.
The decoder $\mathcal{D}$ implicitly defines a data manifold with intrinsic dimension equal to that of the latent space. Then, for any data point $\*x \in \mathcal{D}_{\text{pri}}$, we estimate its tangent space via the Jacobian of $\mathcal{D}$, approximated by $\mathrm{Range}(J_{\mathcal{D}}(\*z))$.
Following the method in Sec.~\ref{sec:understanding}, we construct the orthogonal projection matrix $\mathbf{P}_{\*x}$ onto the tangent space at $\*x$.
To promote alignment between the model’s input gradients and the data manifold, we propose the following training objective:
\begin{equation}\label{eq:align}
\mathcal{L}_{\text{align}}(\boldsymbol{\theta}) =
\mathbb{E}_{(\mathbf{x}, y) \sim \mathcal{D}_{\text{pri}}}
\left[
\mathcal{L}_{\text{CE}}(f(\mathbf{x}; \boldsymbol{\theta}), y) - 
\beta\,
\frac{1}{C}
\sum_{i=1}^{C}
\frac{
\bigl\lVert \mathbf{P}_{\mathbf{x}}\, \nabla_{\mathbf{x}} f_{i}(\mathbf{x}; \boldsymbol{\theta}) \bigr\rVert
}{
\bigl\lVert \nabla_{\mathbf{x}} f_{i}(\mathbf{x}; \boldsymbol{\theta}) \bigr\rVert
}
\right],
\end{equation}
where the first term is the standard cross-entropy loss for classification, and the second term encourages the input gradients to align with the estimated tangent space. The hyperparameter $\beta$ controls the trade-off between the two objectives.
However, computing this alignment term requires $\dy$ projection operations per training example (one per class logit), which can become computationally expensive when the input dimension is high or the number of classes is large. To reduce this cost, we derive the following upper bound for the alignment promotion term (see Appendix~\ref{app:align_proof} for proof):
\begin{equation}
\label{eq:relation}
-
\frac{\bigl\lVert \mathbf{P}_{\mathbf{x}} \sum_{i=1}^{C} \nabla_{\mathbf{x}} f_{i}(\mathbf{x; \thetav}) \bigr\rVert}
     {\bigl\lVert \sum_{i=1}^{C} \nabla_{\mathbf{x}} f_{i}(\mathbf{x; \thetav}) \bigr\rVert}
\;\ge\;
-\frac{1}{C}
\sum_{i=1}^{C}
\frac{\bigl\lVert \mathbf{P}_{\mathbf{x}}\, \nabla_{\mathbf{x}} f_{i}(\mathbf{x; \thetav}) \bigr\rVert}
     {\bigl\lVert \nabla_{\mathbf{x}} f_{i}(\mathbf{x; \thetav}) \bigr\rVert}.
\end{equation}
This inequality allows us to define a more efficient surrogate objective that only requires a single projection operation per data point (the algorithmic implementation is provided in Appendix~\ref{app:alg}):
\begin{equation}\label{eq:align2_full}
\mathcal{L}_{\text{align}}(\boldsymbol{\theta}) =
\mathbb{E}_{(\mathbf{x}, y) \sim \mathcal{D}_{\text{pri}}}
\left[
\mathcal{L}_{\text{CE}}\bigl(f(\mathbf{x}; \boldsymbol{\theta}),\, y\bigr)
-
\beta\,
\frac{
\bigl\lVert \mathbf{P}_{\mathbf{x}}\, \sum_{i=1}^{C} \nabla_{\mathbf{x}} f_i(\mathbf{x}; \boldsymbol{\theta}) \bigr\rVert
}{
\bigl\lVert \sum_{i=1}^{C} \nabla_{\mathbf{x}} f_i(\mathbf{x}; \boldsymbol{\theta}) \bigr\rVert
}
\right].
\end{equation}
Empirical results confirm that this alignment-aware training increases the model's vulnerability to generative MIAs, thereby validating our hypothesis (see Sec.~\ref{sec:exp_hypothesis}). Moreover, this finding motivates the design of a training-free method to further enhance gradient-manifold alignment during the inversion process, as detailed in the next section.

\section{Enhancing gradient-manifold Alignment Without Training}\label{sec:method}
Motivated by the previous observations, we propose AlignMI, a \emph{training-free} approach to enhance gradient-manifold alignment during the inversion process. The core idea is geometric: since informative gradients lie along the tangent space of the generator manifold, we aim to suppress off-manifold components and enhance alignment with semantically meaningful directions.
To this end, rather than relying on a single gradient estimate at a synthetic input $\*x$, we sample multiple variants of $\*x$ within a local neighborhood and average their corresponding loss gradients. This averaging process attenuates noisy, off-manifold directions while amplifying consistent components aligned with the manifold,  yielding a more semantically meaningful and geometrically coherent signal.
Formally, let $N(\*x) \subset \mathbb{R}^d$ denote a measurable neighborhood around $\*x$, and let $p(\cdotv \mid \*x)$ be a probability distribution supported on $N(\*x)$. We define the smoothed, alignment-enhanced gradient as: 
\begin{equation}\label{eq:avg_grad}
\widetilde{\nabla}\mathcal{L}(\*x)
= 
\mathbb{E}_{\*x'\sim p(\cdotv\mid \*x)}\bigl[\nabla \mathcal{L}(\*x')\bigr].
\end{equation}
This technique is entirely training-free and can be applied directly at the inversion time. We instantiate AlignMI with two concrete strategies for sampling from the neighborhood $N(\*x)$.

\textbf{(1) Perturbation-Averaged Alignment (PAA).}
In this realization, the neighborhood distribution is defined as an isotropic Gaussian centered at the synthetic input:
\begin{equation*}
p(\cdotv\mid \*x) = \mathcal{N}(\*x, \sigma^2 \mathbf{I}).
\end{equation*}
This corresponds to sampling within a spherical region around $\*x$, smoothing the gradient by averaging over random perturbations. The process suppresses high-frequency, noisy components that are likely to deviate from the generator manifold.

\textbf{(2) Transformation-Averaged Alignment (TAA).}
Alternatively, in this realization, we define the distribution as uniform over a set of semantically invariant transformations:
\begin{equation*}
p(\cdotv\mid \*x) = \text{Uniform}\{\tau(\*x) \mid \tau \in \mathcal{T}\},
\end{equation*}
where $ \mathcal{T}$ is a predefined set of semantic-preserving transformations, such as random cropping, flipping, or affine warping. This formulation captures local perturbations along the manifold, encouraging alignment with directions that preserve perceptual consistency and geometric semantics.
Both PAA and TAA are model-agnostic and fully post hoc. Their algorithmic implementations are provided in Appendix~\ref{app:alg}. As demonstrated in Sec.~\ref{sec:exp_method}, incorporating either strategy consistently improves inversion performance by producing loss gradients that are better aligned with the generator manifold.

\section{Experiments}\label{sec:exp}


In this section, we first validate the hypothesis proposed in Sec.~\ref{sec:hypothesis}, followed by a comprehensive evaluation of the training-free AlignMI approach introduced in Sec.~\ref{sec:method}. Our experiments focus on real-world face recognition tasks. 
To ensure computational efficiency, we perform hypothesis validation in the low-resolution setting ($64 \times 64$), where tangent space estimation is tractable.
For the method evaluation, we compare the performance of state-of-the-art generative MIAs before and after integrating our proposed techniques, \ie PAA and TAA. Specifically, in the high-resolution setting ($224 \times 224$), we evaluate on PPA~\citep{PPA}. For the low-resolution setting, we consider GMI (LOMMA) with StyleGAN~\citep{GMI, StyleGANv2, rethink_MI}, KEDMI (LOMMA) with DCGAN~\citep{KEDMI}, and PLG-MI~\citep{PLG-MI}. In addition, we evaluate the performance of these methods against strong MIA defenses, including BiDO~\citep{peng2022BiDO}, NegLS~\citep{struppek24smoothing}, and TL-DMI~\citep{TL-DMI}.

\subsection{Experimental Setup}
We begin with a brief overview of the experimental setup; refer to Appendix~\ref {app:exp_setup} for details. 

\textbf{Datasets and Models.}
In line with existing MIA literature, we use the CelebA~\citep{CelebA}, FaceScrub~\citep{FaceScrub}, and FFHQ datasets~\citep{StyleGANv1}. These datasets are divided into two parts: the private training dataset $\mathcal{D}_\text{pri}$ and the public auxiliary dataset $\mathcal{D}_\text{aux}$, with no overlapping classes. For high-resolution tasks, we use ResNet-18~\citep{ResNet}, DenseNet-121~\citep{DenseNet} and ResNeSt-50~\citep{ResNest} as target models. For low-resolution tasks, we use VGG16~\citep{VGG16}, FaceNet~\citep{face.evoLVe}, and IR152~\citep{ResNet}. Training details for these models are provided in Appendix~\ref{app:target_models}. A summary of the attack methods, target models, and datasets used is provided in Tab.~\ref{tab:setups}.

\textbf{Evaluation Metrics.}
(1) To quantify the alignment between gradients and the image manifold, we report two metrics: training-time alignment scores based on input gradients ($\operatorname{AS}_{\mathrm{tr}}$), and inversion-time alignment scores based on loss gradients ($\operatorname{AS}_{\text{inv}}$), as defined in Eq.~(\ref{eq:cos_phi}).
(2) To evaluate the inversion performance of MIAs, we follow standard metrics in the literature~\citep{GMI}, including top-1 (Acc@1) and top-5 (Acc@5) attack accuracy, as well as K-Nearest Neighbors Distance (KNN Dist). Details for these metrics are provided in Appendix~\ref{app:eval_metrics}.

\subsection{Empirical Validation of the Hypothesis}\label{sec:exp_hypothesis}

In this subsection, we empirically validate the hypothesis introduced in Sec.~\ref{sec:hypothesis}, by comparing standard (vanilla) models with alignment-aware models trained using the objective defined in Eq.~(\ref{eq:align2_full}). For this evaluation, we adopt GMI (LOM) as the inversion method, using a StyleGAN as the prior.

\begin{figure*}[t!]
  \centering
    \subfigure[]{
    \includegraphics[width=0.306\textwidth]{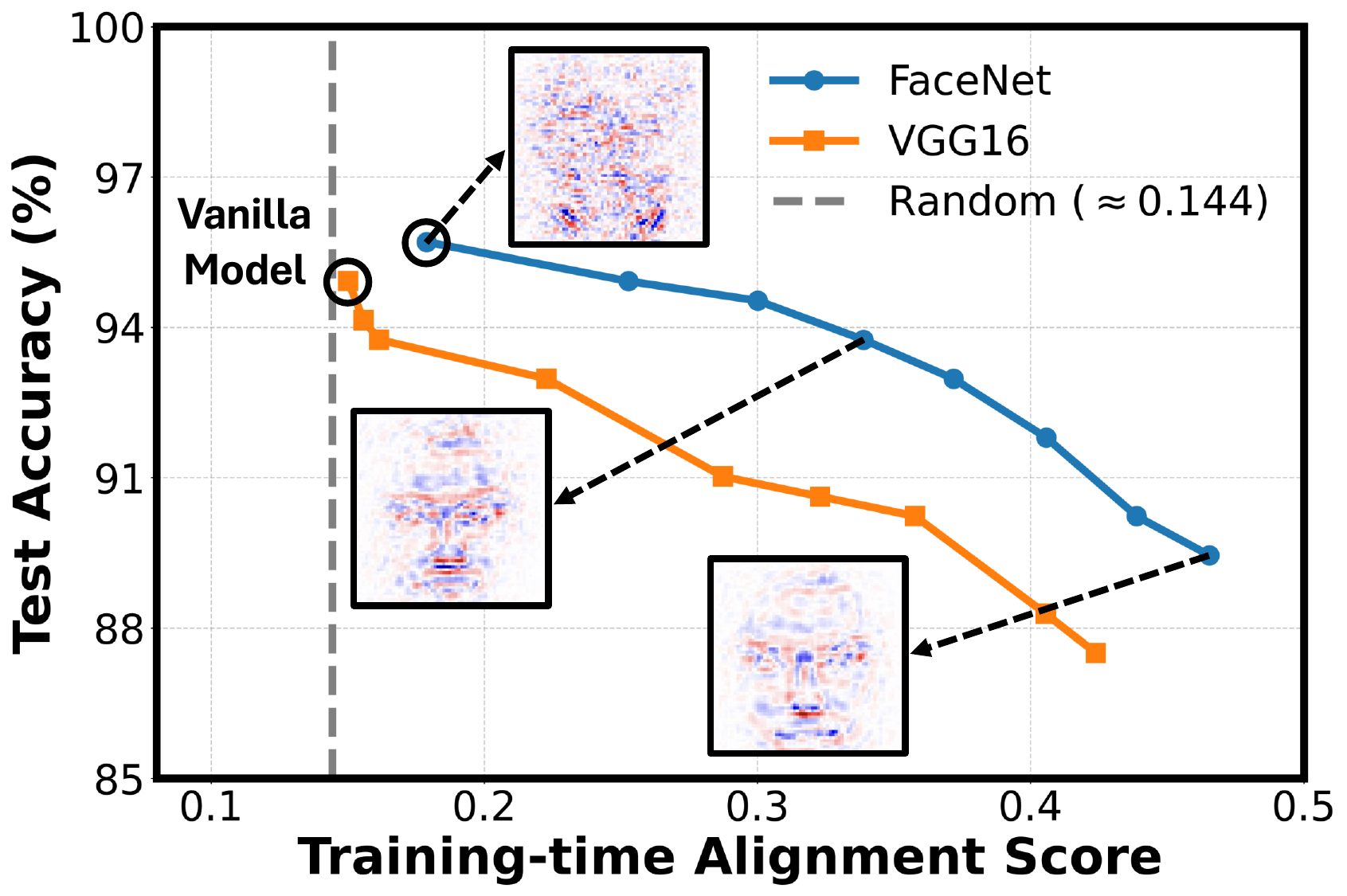}
    \label{fig:AS_vs_testAcc}
  }
    \subfigure[]{
    \includegraphics[width=0.306\textwidth]{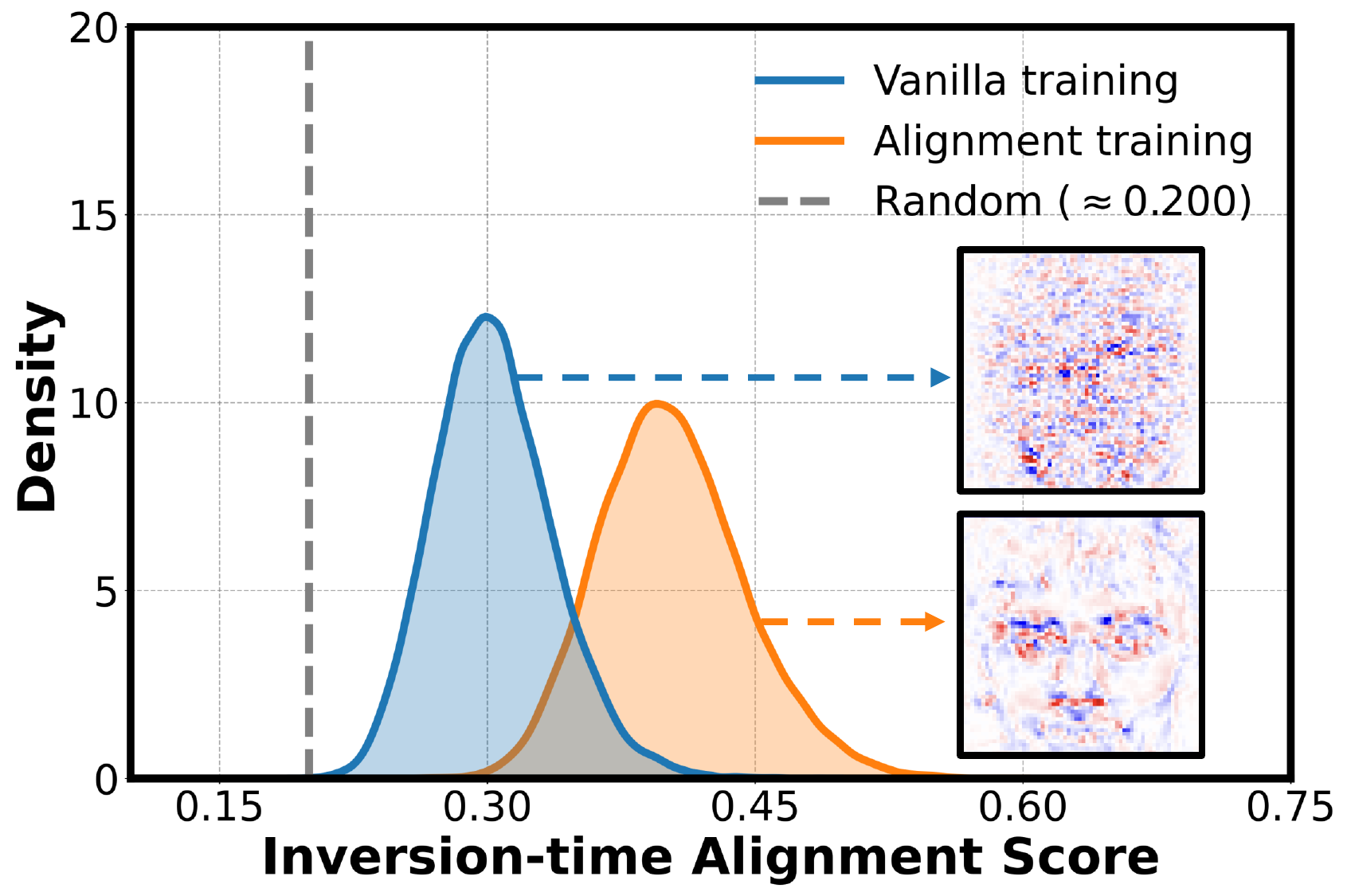}
    \label{fig:vanilla_vs_align}
  }
    \subfigure[]{
    \includegraphics[width=0.306\textwidth]{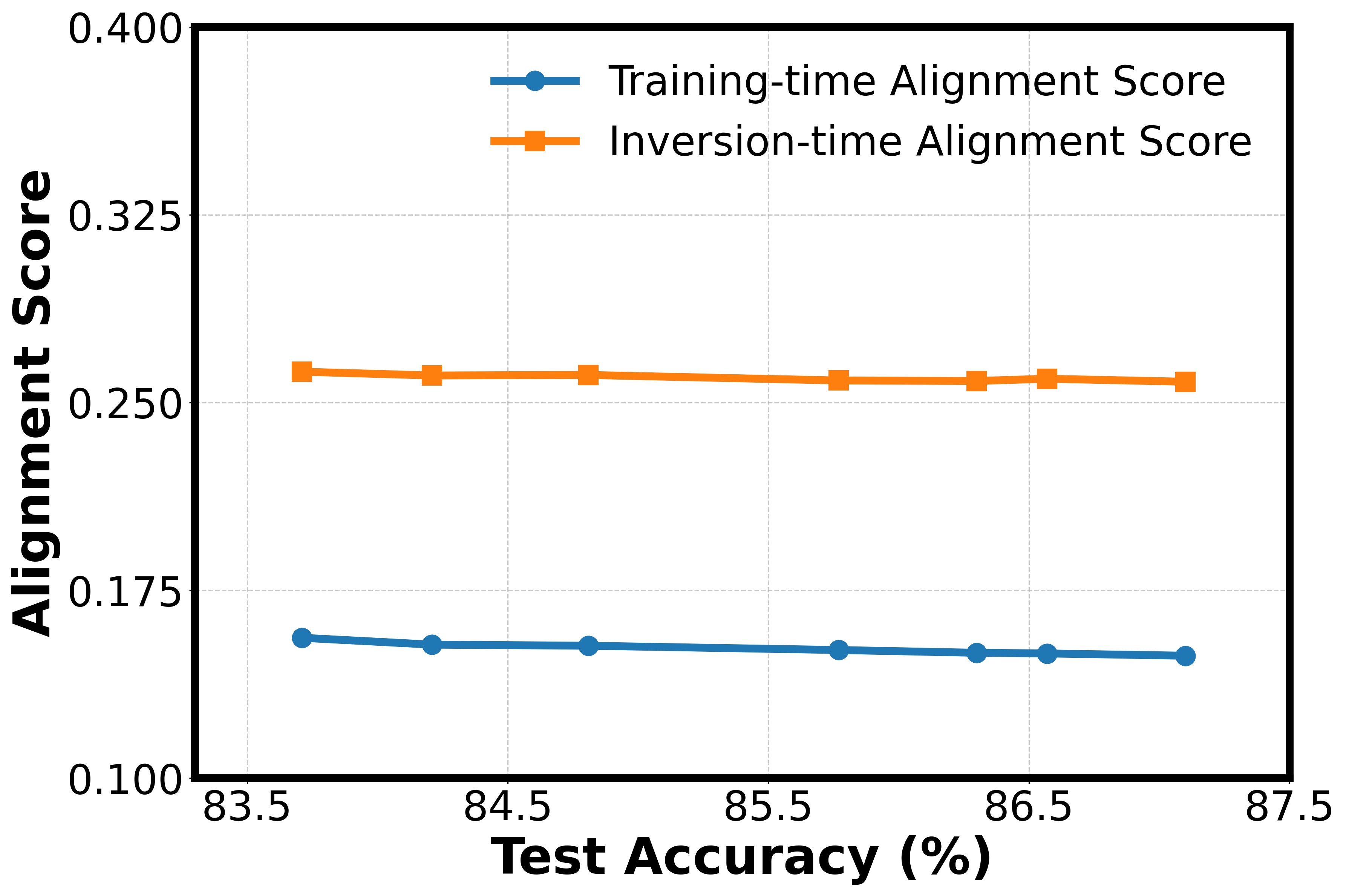}
    \label{fig:AS2_vs_testAcc}
  }
  \vspace{-5pt}
    \caption{\textbf{Empirical evaluation of gradient–manifold alignment.} 
    \textbf{(a)} Test accuracy vs. training-time alignment score ($\operatorname{AS}_{\mathrm{tr}}$) for models sampled during alignment-aware training. Insets show input gradient visualizations for models with varying degrees of alignment. \textbf{(b)} Distribution of inversion-time alignment scores ($\operatorname{AS}_{\text{inv}}$) for the vanilla model compared to the alignment-aware model. \textbf{(c)} Average alignment scores $\operatorname{AS}_{\mathrm{tr}}$ and $\operatorname{AS}_{\text{inv}}$ across models with varying test accuracy. Enlarged versions of (a) and (b), along with experimental details, are provided in Appendix~\ref{app:hypothesis_val}.
    }
    \label{fig:hypothesis_val}
    \vspace{-6pt}
\end{figure*}

\begin{figure}[t!]
    \captionsetup{skip=1pt}
    \begin{minipage}[h]{0.5\linewidth}
        \centering
        \includegraphics[width=0.8\linewidth]{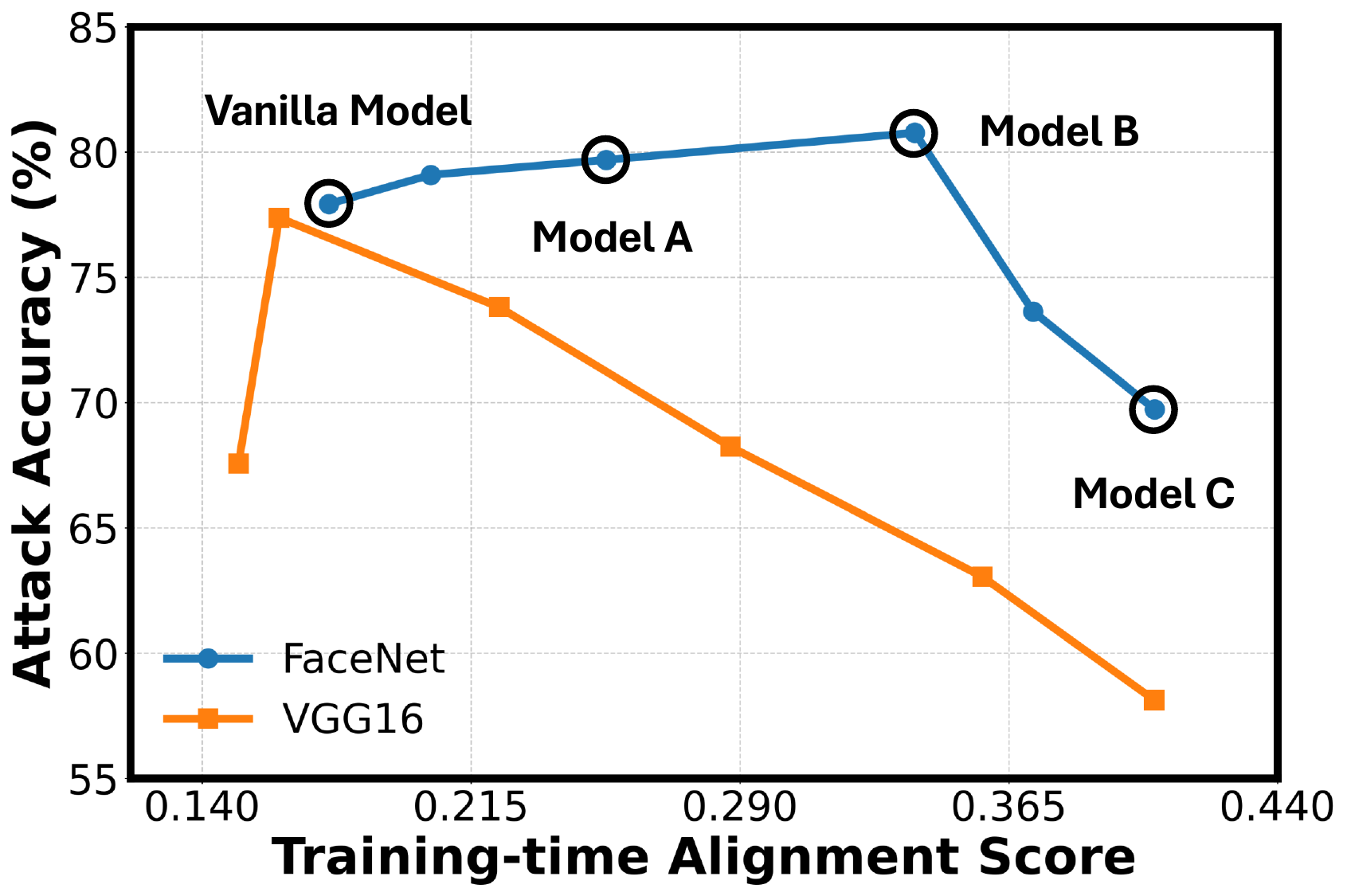}
        \caption{\textbf{MIA success on vanilla and alignment‐aware models with different $\operatorname{AS}_{\mathrm{tr}}$.}}
        \label{fig:AS_vs_Attack}
    \end{minipage}
    \hfill
    \begin{minipage}[h]{0.5\linewidth}
        \centering
        \captionof{table}{\textbf{Alignment score, predictive accuracy, and inversion vulnerability for vanilla and three alignment‐aware models.}}
        \label{tab:AS_vs_Attack-acc}
        \fontsize{8}{10}\selectfont
        \setlength\tabcolsep{3pt}
        \centering
        \resizebox{\linewidth}{!}{%
            \begin{tabular}{c|cccc}
                \toprule[1.5pt]
                Model Type & $\operatorname{AS}_{\mathrm{tr}}$ & Test Acc & Acc@1 & KNN Dist \\
                \midrule[0.6pt]
                Vanilla & 0.175 & 96.53 & 77.92 & 1452.20 \\
                Model A & 0.253 & 94.92 & 79.68 & 1413.53 \\
                Model B & 0.339 & 93.75 & 80.76 & 1408.00 \\
                Model C & 0.406 & 91.80 & 69.72 & 1613.96 \\
                \bottomrule[1.5pt]
            \end{tabular}%
        }
    \end{minipage}
\end{figure}

\textbf{Gradient-manifold Alignment Analysis.}
Specifically, we fine-tune the pre-trained vanilla VGG16 and FaceNet models using the alignment-aware training objective, resulting in multiple models with varying training-time alignment scores ($\operatorname{AS}_{\mathrm{tr}}$). As shown in Fig.~\ref{fig:AS_vs_testAcc}, the vanilla models exhibit low alignment scores (approximately $0.15$ and $0.18$), marginally exceeding those expected from random vectors---indicating weak alignment between input gradients and the data manifold. As fine-tuning progresses, $\operatorname{AS}_{\mathrm{tr}}$ steadily increases, and gradient visualizations reveal a corresponding rise in semantically meaningful features.
Notably, this increase in alignment is accompanied by a gradual decline in test accuracy, suggesting an empirical trade-off between gradient–manifold alignment and predictive performance. This trend holds consistently across both architectures.

Fig.~\ref{fig:vanilla_vs_align} compares the distribution of inversion-time alignment scores ($\operatorname{AS}_{\mathrm{inv}}$) between the FaceNet vanilla model and the alignment-aware model (corresponding to Model B in Fig.~\ref{fig:AS_vs_Attack}). The Model trained with the alignment-aware objective exhibits significantly higher $\operatorname{AS}_{\mathrm{inv}}$ values, demonstrating that promoting gradient–manifold alignment during training leads to stronger alignment at inversion time. This validates the effectiveness of our training strategy. Additionally, the inversion loss gradient visualizations also reveal clearer and more semantically meaningful structures.

\textbf{Model Inversion Vulnerability Analysis.}
We further evaluate the vulnerability of both vanilla and alignment-aware models to the GMI (LOM) attack method. As shown in Fig.~\ref{fig:AS_vs_Attack}, model inversion vulnerability initially increases with training-time alignment score, reaching a peak before declining. To illustrate this trend in detail, we select three representative alignment-aware models and report their training-time alignment scores, test accuracies, and MIA vulnerabilities.
Results in Tab.~\ref{tab:AS_vs_Attack-acc} reveal that models A and B, despite having lower test accuracy (\ie predictive power) than the vanilla baseline, exhibit greater vulnerability to MIAs. This increased susceptibility is attributable to their higher $\operatorname{AS}_{\mathrm{tr}}$, which produces more informative loss gradients during inversion. In contrast, model C, with a higher alignment score but lower test accuracy, shows reduced vulnerability, suggesting that excessive alignment may come at the cost of generalization and thereby diminish the attack surface.

Moreover, previous work~\citep{GMI} has shown that models with stronger predictive power tend to be more vulnerable to generative MIAs. We revisit this claim from the perspective of gradient–manifold alignment. As shown in Fig.~\ref{fig:AS2_vs_testAcc}, both $\operatorname{AS}_{\mathrm{tr}}$ and $\operatorname{AS}_{\mathrm{inv}}$ remain relatively steady across models with varying predictive performance. These results suggest that gradient–manifold alignment captures a complementary aspect of model inversion vulnerability---one not explained by predictive power alone---and offer new insights into the factors underlying privacy risks in machine learning models. A more detailed discussion is provided in Appendix~\ref{app:discussion}.

\subsection{Evaluation of Proposed Methods}\label{sec:exp_method}
In this subsection, we evaluate the effectiveness of our training-free AlignMI approach by comparing the inversion performance of the PPA method before and after integration with its two realizations, PAA and TAA. This evaluation focuses on the high-resolution setting, representing a more realistic and challenging attack scenario. Additional experiments, including results on low-resolution MIAs, evaluations under SOTA MIA defenses, and ablation studies, are provided in Appendix~\ref{app:additional_results}.

For all experiments, we configure PAA with Gaussian perturbations of standard deviation $\sigma$ set to $5\%$ of the synthesized images' dynamic range. For TAA, we apply standard semantic-preserving transformations, including random resized cropping with scale $[0.8, 1.0]$ and aspect ratio $[0.9, 1.1]$), horizontal flipping with a probability of 0.5, and random rotations within $\pm 5^\circ$. For both methods, we average the loss gradients over $50$ samples to approximate the expectation in Eq.~(\ref{eq:avg_grad}).

The results in Tab.~\ref{tab:ppa_highres_results} show that our methods consistently enhance inversion performance across all setups, yielding higher attack accuracy and lower KNN distance, thus validating their effectiveness. Notably, TAA outperforms PAA in most cases. This is because PAA improves alignment by adding noise perturbations to loss gradients, which can reduce prediction confidence, as models are typically not trained on noisy inputs. In contrast, TAA uses semantic-preserving augmentations, which maintain input realism and avoid this trade-off. Visualizations of gradient images and reconstructed samples for the target models are provided in Appendix~\ref{app:gradvis} and Appendix~\ref{app:sample_visualization}, respectively.

\begin{table}[t!]
    \caption{Comparison of inversion performance with PPA in the high-resolution setting.
    $\mathcal{D}_{\text{pri}}$ = CelebA or FaceScrub, GANs are pre-trained on $\mathcal{D}_{\text{aux}}$ = FFHQ. The symbol $\downarrow$ (or $\uparrow$) indicates that smaller (or larger) values are preferred, and the \green{green} numbers represent the performance improvement. The running time ratio (denoted as Ratio) between runs with and without our method reflects the additional computational overhead introduced by our approach.}
    \label{tab:ppa_highres_results}
    \centering
    \footnotesize
    \resizebox{\textwidth}{!}{
  \begin{tabular}{ll|lclc|lclc}
    \toprule[1.5pt]
     \multicolumn{2}{c|}{} 
       & \multicolumn{4}{c|}{\textbf{CelebA}} 
       & \multicolumn{4}{c}{\textbf{FaceScrub}} \\
     Target Model & Method 
       & Acc@1$\uparrow$        & Acc@5$\uparrow$ & KNN Dist$\downarrow$      & Ratio$\downarrow$ 
       & Acc@1$\uparrow$        & Acc@5$\uparrow$ & KNN Dist$\downarrow$      & Ratio$\downarrow$ \\
    \midrule[0.6pt]
     \multirow{3}{*}{\textbf{ResNet-18}}
     & PPA 
       & 86.08                  & 95.22          & 0.690                     & / 
       & 81.51                  & 94.92          & 0.797                     & / \\
     & + PAA (ours) 
       & 88.41 \green{(+2.33)}  & 96.43          & 0.670 \green{(-0.019)}    & 1.50
       & 83.76 \green{(+2.25)}  & 95.89          & 0.779 \green{(-0.018)}    & 1.55 \\
     & + TAA (ours) 
       & 91.32 \green{(+5.24)}  & 97.67          & 0.662 \green{(-0.027)}    & 1.61
       & 93.76 \green{(+12.25)} & 98.87          & 0.691 \green{(-0.106)}    & 1.61 \\
    \cmidrule{1-10}
     \multirow{3}{*}{\textbf{DenseNet-121}}
     & PPA 
       & 81.94                  & 93.02          & 0.709                     & / 
       & 76.29                  & 91.31          & 0.783                     & / \\
     & + PAA (ours) 
       & 85.64 \green{(+3.70)}  & 95.02          & 0.686 \green{(-0.023)}    & 2.82
       & 80.47 \green{(+4.18)}  & 93.59          & 0.734 \green{(-0.049)}    & 2.82 \\
     & + TAA (ours) 
       & 88.57 \green{(+6.63)}  & 96.36          & 0.674 \green{(-0.036)}    & 2.87
       & 85.05 \green{(+8.76)}  & 93.31          & 0.725 \green{(-0.058)}    & 2.93 \\
    \cmidrule{1-10}
     \multirow{3}{*}{\textbf{ResNeSt-50}}
     & PPA 
       & 71.06                  & 87.38          & 0.793                     & / 
       & 71.42                  & 90.57          & 0.831                     & / \\
     & + PAA (ours) 
       & 75.91 \green{(+4.85)}  & 90.46          & 0.764 \green{(-0.029)}    & 2.93
       & 72.97 \green{(+1.55)}  & 91.01          & 0.812 \green{(-0.020)}    & 3.12 \\
     & + TAA (ours) 
       & 79.48 \green{(+8.42)}  & 92.36          & 0.754 \green{(-0.039)}    & 3.12
       & 84.13 \green{(+12.71)} & 95.82          & 0.757 \green{(-0.074)}    & 3.13 \\
    \bottomrule[1.5pt]
  \end{tabular}
    }
    \vspace{-5pt}
\end{table}

\section{Conclusion}


In this work, we investigated the underlying mechanism of generative model inversion through a geometric lens. We showed that generative MIAs implicitly perform loss gradients denoising by projecting gradients onto the tangent space of the generator manifold---preserving informative, on-manifold directions while filtering out noisy, off-manifold components. Building on this insight, we identified a previously underexplored vulnerability: models with loss gradients align more strongly with the generator manifold are more susceptible to inversion attacks. We validated this hypothesis using a novel training objective that explicitly encourages gradient-manifold alignment. Finally, we proposed AlignMI, a training-free approach to enhance such alignment during inversion, and demonstrated its effectiveness through extensive experiments across multiple attack methods.



{
\clearpage
\bibliographystyle{plainnat}
\bibliography{reference}

}

	\onecolumn
	\appendix
	
	\part{Appendix} 
	\vspace{-20pt}
	\etocdepthtag.toc{mtappendix}
	\etocsettagdepth{mtchapter}{none}
	\etocsettagdepth{mtappendix}{subsection}
	\renewcommand{\contentsname}{}
	\tableofcontents
	\newpage

\section{Related Work}\label{app:detailed_related_work}

\emph{Model inversion attacks} (MIAs) were first introduced by \citet{fredrikson2014}, who demonstrated the reconstruction of private data in simple regression tasks using shallow models. Their pioneering attack algorithm aimed to infer sensitive attributes, such as genetic markers, via input space optimization, assuming access to both the linear target model and auxiliary information. This work highlighted the privacy risks inherent in exposing model predictions.
Building on this, \citet{Second_MI} extended MIAs to shallow neural networks for reconstructing low-resolution grayscale face images. While effective for simple models, this method fails when applied to deep neural networks (DNNs) handling high-dimensional data, as reconstructions often lack semantic relevance.

To address these limitations, \citet{GMI} introduced the two-stage \emph{generative model inversion} approach, which leverages generative adversarial networks (GANs)~\citep{GAN, DCGAN} to learn an image prior from public auxiliary datasets and constrains the attack optimization to the generator's latent space. This breakthrough significantly improved the visual quality and semantic fidelity of reconstructed samples and has since fueled major advances in the field of MIAs, particularly for high-dimensional image data~\citep{zhou2024model}. Recent works can be categorized by the model inversion adversary's access level: white-box, black-box, and label-only settings—each posing unique challenges and guiding corresponding defense developments.

In the \emph{white-box} setting, where attackers have full access to the model architecture and weights, most works follow the generative model inversion framework. KEDMI~\citep{KEDMI} enhanced this by introducing an advanced discriminator that incorporates knowledge from the target model. VMI~\citep{VMI} recast the problem as variational inference, using a Bayesian framework to balance diversity and fidelity. PPA~\citep{PPA} further pushed the frontier by leveraging pre-trained StyleGAN generators and introducing the $\text{Poincar\'e}$ loss to replace cross-entropy (CE) loss, addressing gradient vanishing issues in the inversion process.
Similarly, \citet{rethink_MI} proposed the \emph{logit maximization} (LOM) loss as an alternative to CE loss, alongside model augmentation techniques to mitigate overfitting. PLG-MI~\citep{PLG-MI} advanced MIAs by integrating a conditional GAN (cGAN) with max-margin loss and pseudo-label guidance, effectively decoupling class-specific search spaces and enhancing the exploitation of target model information.  These methods primarily concentrate on either the initial training process of GANs or the optimization techniques used in the attacks. A Recent work PPDG-MI \citep{peng2024pseudo} took a different direction by fine-tuning the GAN generator post-attack with reconstructed samples, narrowing the distribution gap between prior and private data distributions.

In the \emph{black-box} setting, where attackers can only query the model, \citet{mirror} introduced a genetic search approach to replace gradient-based optimization, while RLB-MI~\citep{RLB-MI} framed the attack as a Markov decision process (MDP) and applied reinforcement learning to optimize the latent vector.
In the \emph{label-only} setting—the most restrictive scenario where only hard labels are accessible—\citet{BREP-MI} proposed the \emph{boundary-repelling model inversion} (BREP-MI) method, which uses zeroth-Order Optimization method to approximate gradient descent and steer the search toward dense class regions. Inspired by transfer learning, \citet{LOKT} introduced \emph{label-only via knowledge transfer} (LOKT), which uses a target model-assisted ACGAN (T-ACGAN) to effectively transform the label-only attack into a white-box setting.

Many studies have also focused on designing defense methods against generative MIAs. Since MIAs exploit the strong correlation between inputs and outputs for successful attacks, \citet{MID} proposed augmenting the standard classification objective with a mutual information regularizer to penalize this correlation. However, this approach can significantly degrade the model's predictive performance. To overcome this limitation, \citet{peng2022BiDO} introduced bilateral dependency optimization (BiDO), which enhances the dependency between input features and latent representations while minimizing the dependency between representations and outputs~\citep{peng2025unknown}. Inspired by BiDO, Stealthy Shield Defense (SSD)~\citep{zhuangstealthy} adopts an inference-time strategy that minimizes mutual information between input features and predictions while maximizing the mutual information between predictions and labels, providing an effective black-box defense. Additionally, \citet{TL-DMI} proposed freezing the early layers of a pre-trained model and fine-tuning the remaining layers on private data to reduce vulnerability for reconstruction attacks. \citet{struppek24smoothing} further observed that negative label smoothing can also mitigate generative MIAs. In a recent work, \citet{hao2024vulnerability} examined the impact of model architecture on MIA robustness and found that residual connections can increase vulnerability to these attacks.

\section{Derivation of the Alignment-aware Training Objective}\label{app:align_proof}
In this section, we provide a derivation of the Inequality~(\ref{eq:relation}), which serves as a relaxation used to obtain the final alignment-aware training objective in Eq.~(\ref{eq:align2_full}).
\begin{lemma}
Under the same notation as in Section~\ref{sec:hypothesis}, and assuming all gradient vectors $\nabla_{\mathbf{x}} f_i(\mathbf{x}; \thetav)$ have equal norm, the following inequality holds:
\begin{equation}\label{eq:neg_relation}
-
\frac{\bigl\lVert \widetilde{\mathbf{P}}_{\mathbf{x}} \sum_{i=1}^{C} \nabla_{\mathbf{x}} f_{i}(\mathbf{x}; \thetav) \bigr\rVert}
     {\bigl\lVert \sum_{i=1}^{C} \nabla_{\mathbf{x}} f_{i}(\mathbf{x}; \thetav) \bigr\rVert}
\;\ge\;
-\frac{1}{C}
\sum_{i=1}^{C}
\frac{\bigl\lVert \widetilde{\mathbf{P}}_{\mathbf{x}}\, \nabla_{\mathbf{x}} f_{i}(\mathbf{x}; \thetav) \bigr\rVert}
     {\bigl\lVert \nabla_{\mathbf{x}} f_{i}(\mathbf{x}; \thetav) \bigr\rVert}\,.
\end{equation}
\end{lemma}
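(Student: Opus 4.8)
The plan is to reduce the inequality to a statement about projection norms under the equal-norm assumption. Write $g_i \coloneqq \nabla_{\mathbf{x}} f_i(\mathbf{x};\thetav)$ and let $r$ be the common norm $\lVert g_i\rVert$. After multiplying both sides by $-1$ (which reverses the inequality) and clearing the common factor, the claim becomes
\[
\frac{1}{C}\sum_{i=1}^{C} \bigl\lVert \widetilde{\mathbf{P}}_{\mathbf{x}} g_i \bigr\rVert
\;\ge\;
\frac{r}{\bigl\lVert \sum_{i=1}^{C} g_i \bigr\rVert}\,
\Bigl\lVert \widetilde{\mathbf{P}}_{\mathbf{x}} \sum_{i=1}^{C} g_i \Bigr\rVert .
\]
So I would first establish the two ingredients separately: a triangle-inequality bound on the numerator of the right-hand side, and a lower bound on $\lVert \sum_i g_i\rVert$ coming from the equal-norm hypothesis.

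First I would handle the numerator: since $\widetilde{\mathbf{P}}_{\mathbf{x}}$ is linear, $\widetilde{\mathbf{P}}_{\mathbf{x}} \sum_i g_i = \sum_i \widetilde{\mathbf{P}}_{\mathbf{x}} g_i$, and the triangle inequality for the Euclidean norm gives $\lVert \widetilde{\mathbf{P}}_{\mathbf{x}} \sum_i g_i\rVert \le \sum_i \lVert \widetilde{\mathbf{P}}_{\mathbf{x}} g_i\rVert$. Second, for the denominator I would use that each $g_i$ has norm $r$, so by the triangle inequality $\lVert \sum_i g_i\rVert \le C r$, hence $r / \lVert \sum_i g_i\rVert \ge 1/C$. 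Wait — that goes the wrong way: combining these two bounds would give the right-hand side is at most $\frac{1}{C}\sum_i \lVert \widetilde{\mathbf{P}}_{\mathbf{x}} g_i\rVert \cdot \frac{Cr}{\lVert\sum_i g_i\rVert}$, which is not immediately $\le \frac{1}{C}\sum_i \lVert \widetilde{\mathbf{P}}_{\mathbf{x}} g_i\rVert$. The correct route is to bound the \emph{whole} right-hand side at once rather than the two factors in isolation. Concretely, write the RHS as $\frac{\lVert\widetilde{\mathbf{P}}_{\mathbf{x}}\sum_i g_i\rVert}{\lVert\sum_i g_i\rVert}\cdot r$ and observe this equals $\mathrm{AS}\!\left(\sum_i g_i\right)\cdot r \le r$ only when $\widetilde{\mathbf{P}}_{\mathbf{x}}$ is an orthogonal projector (so that its operator norm is $1$); then compare against the LHS term-by-term using $\lVert\widetilde{\mathbf{P}}_{\mathbf{x}} g_i\rVert \le \lVert g_i\rVert = r$, giving $\frac{1}{C}\sum_i \lVert\widetilde{\mathbf{P}}_{\mathbf{x}} g_i\rVert \le r$ in the opposite direction. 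This suggests the honest argument needs both facts simultaneously: apply triangle inequality to the numerator, then bound each $\lVert\widetilde{\mathbf{P}}_{\mathbf{x}} g_i\rVert$ from above by a quantity that telescopes against $\lVert\sum_i g_i\rVert$ in the denominator.

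The cleanest path, which I would ultimately take, is: normalize by the common norm first. Dividing numerator and denominator of the LHS by $r$, the LHS becomes $\frac{1}{C}\sum_i \lVert \widetilde{\mathbf{P}}_{\mathbf{x}} \hat g_i\rVert$ where $\hat g_i = g_i/r$ are unit vectors. The RHS becomes $\frac{\lVert \widetilde{\mathbf{P}}_{\mathbf{x}} \sum_i \hat g_i\rVert}{\lVert \sum_i \hat g_i\rVert}$. Now set $s = \sum_i \hat g_i$; the RHS is $\lVert \widetilde{\mathbf{P}}_{\mathbf{x}} s\rVert / \lVert s\rVert \le \lVert\widetilde{\mathbf{P}}_{\mathbf{x}}\rVert_{\mathrm{op}}$, which is $1$ for an orthogonal projector. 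The LHS is an average of $\lVert\widetilde{\mathbf{P}}_{\mathbf{x}} \hat g_i\rVert$, each of which is at most $1$ — so the LHS is also $\le 1$, and this still does not settle things. The genuine mechanism must be convexity: $v \mapsto \lVert \widetilde{\mathbf{P}}_{\mathbf{x}} v\rVert$ is a seminorm hence convex, but Jensen for a convex function gives $\lVert \widetilde{\mathbf{P}}_{\mathbf{x}} \frac{1}{C}\sum_i \hat g_i\rVert \le \frac{1}{C}\sum_i \lVert\widetilde{\mathbf{P}}_{\mathbf{x}} \hat g_i\rVert$, i.e. $\frac{1}{C}\lVert\widetilde{\mathbf{P}}_{\mathbf{x}} s\rVert \le \frac{1}{C}\sum_i \lVert\widetilde{\mathbf{P}}_{\mathbf{x}} \hat g_i\rVert$, so $\lVert\widetilde{\mathbf{P}}_{\mathbf{x}} s\rVert \le \sum_i \lVert\widetilde{\mathbf{P}}_{\mathbf{x}}\hat g_i\rVert$. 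To match the RHS I then need $\lVert s\rVert \ge C$, i.e. $\lVert\sum_i \hat g_i\rVert \ge C$ — which forces all $\hat g_i$ to coincide and so is false in general. This is precisely the crux: the inequality as literally stated is \emph{not} a consequence of convexity alone, and I expect the main obstacle to be identifying the extra structural assumption (beyond equal norms) that the authors implicitly use — most plausibly that the gradients all point into a common half-space or are near-collinear during training — or recognizing that the statement is intended only as a heuristic relaxation justifying the surrogate objective rather than a tight bound. My proof proposal is therefore to (i) reduce to unit vectors, (ii) apply the triangle inequality to the numerator on the right, (iii) make explicit whatever lower bound on $\lVert\sum_i \hat g_i\rVert$ is available from the paper's setting, and (iv) if no such bound holds unconditionally, state the inequality under the additional hypothesis that $\lVert\sum_i g_i\rVert \ge \sum_i \lVert\widetilde{\mathbf{P}}_{\mathbf{x}} g_i\rVert \big/ \big(\tfrac{1}{C}\sum_i \lVert\widetilde{\mathbf{P}}_{\mathbf{x}} g_i\rVert\big)\cdot r$, which is the weakest condition making the chain close. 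I would flag step (iv) as the part most likely to require revision or a sharper hypothesis from the authors.
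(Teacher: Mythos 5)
You do not close the proof, but your diagnosis of \emph{why} it cannot be closed is exactly right, and it is not an artifact of your route: the paper's own argument breaks at precisely the step you flagged. After the (correct) triangle-inequality step, the paper's proof invokes $\lVert g\rVert\le Ca$ to get $\tfrac{1}{\lVert g\rVert}\ge\tfrac{1}{Ca}$ and then substitutes this into the \emph{negated} quantity $-\sum_{i}\lVert\widetilde{\mathbf{P}}_{\mathbf{x}}g_i\rVert/\lVert g\rVert$ without reversing the direction; since the multiplying factor $-\sum_i\lVert\widetilde{\mathbf{P}}_{\mathbf{x}}g_i\rVert$ is nonpositive, the paper's Inequality~(\ref{ineq2}) actually points the wrong way. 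What that chain would need is $\lVert g\rVert\ge Ca$, which under the equal-norm hypothesis forces all $g_i$ to coincide --- the same obstruction you isolated when you observed that your normalized argument would require $\lVert\sum_i\hat g_i\rVert\ge C$.

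In fact the lemma is false as stated, so no repair of either argument is possible without extra hypotheses. Take $C=2$, $d=2$, let $\widetilde{\mathbf{P}}_{\mathbf{x}}$ be the orthogonal projection onto the first coordinate axis (a legitimate instance of $J_{\mathrm{G}}J_{\mathrm{G}}^\top$ with orthonormal columns, so no reading of $\widetilde{\mathbf{P}}_{\mathbf{x}}$ rescues the claim), and set $g_1=(\cos\theta,\sin\theta)$, $g_2=(\cos\theta,-\sin\theta)$ with $\theta\in(0,\pi/2)$. Both gradients have unit norm, the right-hand side of the lemma equals $-\cos\theta$, while $g_1+g_2=(2\cos\theta,0)$ lies entirely in the projection subspace, so the left-hand side equals $-1<-\cos\theta$: the inequality fails. (The reverse inequality fails for $g_1=(\sin\theta,\cos\theta)$, $g_2=(-\sin\theta,\cos\theta)$, so neither direction holds unconditionally.) The failure mode is exactly the cancellation of off-manifold components under summation --- the summed gradient can be strictly better aligned than the average --- which means the surrogate in Eq.~(\ref{eq:align2_full}) is not a true upper bound of the per-class alignment term and can only be justified heuristically or under an additional structural assumption such as the near-collinearity you propose in your step (iv). So: your proposal contains no valid proof, but none exists; the gap lies in the lemma and in the paper's proof at the step you identified, not in your reasoning.
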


\begin{proof}
Let $g_i := \nabla_{\mathbf{x}} f_{i}(\mathbf{x};\thetav)$ and assume $\|g_i\| = a > 0$ for all $i$.  
Put $g := \sum_{i=1}^{C} g_i$ and suppose $g\neq 0$.  
By linearity of the orthogonal projector $\widetilde{\mathbf{P}}_{\mathbf{x}}$,
\[
\widetilde{\mathbf{P}}_{\mathbf{x}} g \;=\; \sum_{i=1}^{C} \widetilde{\mathbf{P}}_{\mathbf{x}} g_i .
\]
Applying the triangle inequality,
\[
\bigl\lVert \widetilde{\mathbf{P}}_{\mathbf{x}} g \bigr\rVert
  \;=\;
  \Bigl\lVert \sum_{i=1}^{C} \widetilde{\mathbf{P}}_{\mathbf{x}} g_i \Bigr\rVert
  \;\le\;
  \sum_{i=1}^{C} \bigl\lVert \widetilde{\mathbf{P}}_{\mathbf{x}} g_i \bigr\rVert .
\]

Dividing both sides by $\|g\|$ and multiplying by $-1$ reverses the inequality:
\begin{equation}\label{ineq1}
-
\frac{\bigl\lVert \widetilde{\mathbf{P}}_{\mathbf{x}} g \bigr\rVert}{\|g\|}
\;\ge\;
-
\frac{\sum_{i=1}^{C} \bigl\lVert \widetilde{\mathbf{P}}_{\mathbf{x}} g_i \bigr\rVert}{\|g\|}.
\end{equation}

Since $\|g\| = \bigl\lVert \sum_{i=1}^{C} g_i\bigr\rVert \le \sum_{i=1}^{C}\|g_i\| = Ca$, we have $\frac{1}{\|g\|}\ge\frac{1}{Ca}$.  Substituting this bound into the right-hand side of Inequality~(\ref{ineq1}) yields
\begin{equation}\label{ineq2}
-
\frac{\sum_{i=1}^{C} \bigl\lVert \widetilde{\mathbf{P}}_{\mathbf{x}} g_i \bigr\rVert}{\|g\|}
\;\ge\;
-
\frac{\sum_{i=1}^{C} \bigl\lVert \widetilde{\mathbf{P}}_{\mathbf{x}} g_i \bigr\rVert}{Ca}
=
-\frac{1}{C}
\sum_{i=1}^{C}
\frac{\bigl\lVert \widetilde{\mathbf{P}}_{\mathbf{x}} g_i \bigr\rVert}{a}.
\end{equation}

Combining Inequalities~(\ref{ineq1}) and~(\ref{ineq2}) and substituting $a=\|g_i\|$ gives
\[
-
\frac{\bigl\lVert \widetilde{\mathbf{P}}_{\mathbf{x}} g \bigr\rVert}{\|g\|}
\;\ge\;
-\frac{1}{C}
\sum_{i=1}^{C}
\frac{\bigl\lVert \widetilde{\mathbf{P}}_{\mathbf{x}} g_i \bigr\rVert}{\|g_i\|},
\]
which is exactly Inequality~(\ref{eq:neg_relation}).  
Equality holds iff both triangle inequalities above are tight, \ie\ (i) all vectors $\widetilde{\mathbf{P}}_{\mathbf{x}} g_i$ are colinear and (ii) all $g_i$ themselves are colinear.
\end{proof}

\section{Algorithmic Realizations of Gradient–Manifold Alignment Methods} \label{app:alg}
This section presents the algorithmic implementations of our proposed training objective for validating the hypothesis, as well as the training-free alignment approach designed to enhance gradient–manifold alignment and improve model inversion performance.

\textbf{(1) Alignment-Aware Training.}  
To validate our hypothesis that stronger alignment between loss gradients and the generator manifold leads to greater inversion vulnerability, we introduce a gradient–manifold alignment-aware training objective. This objective augments the standard classification loss with a geometric alignment term and can be optimized via standard backpropagation. The training procedure is detailed in Algorithm~\ref{alg:alignment-train}.

\textbf{(2) Training-Free Alignment Promotion.}  
Motivated by the above findings, we propose a training-free method that improves gradient–manifold alignment at inversion time. By averaging loss gradients over perturbed or transformed versions of the synthetic input, this approach denoises the gradient signal in a geometry-aware manner. The inference-time procedure is described in Algorithm~\ref{alg:tf-align}.

\begin{algorithm}[H]
\caption{Gradient–Manifold Alignment-Aware Training}
\label{alg:alignment-train}
\textbf{Input:} Classifier $f(\cdot\,;\boldsymbol{\theta})$, pre-trained VAE decoder $\mathcal{D}$, training set $\mathcal{D}_{\text{pri}}$, trade-off hyperparameter $\beta$, number of training steps $T$\\
\textbf{Output:} Updated target model parameters $\boldsymbol{\theta}$
\begin{algorithmic}[1]
\FOR{$t = 1$ to $T$}
    \STATE Sample a minibatch $\{(\mathbf{x}^{(n)}, y^{(n)})\}_{n=1}^{B}$ from $\mathcal{D}_{\text{pri}}$
    \FOR{each $(\mathbf{x}, y)$ in batch}
        \STATE Compute latent code: $\mathbf{z} \leftarrow \mathcal{E}(\mathbf{x})$
        \STATE Compute Jacobian: $J_{\mathcal{D}}(\mathbf{z}) = \frac{\partial \mathcal{D}}{\partial \mathbf{z}}$
        \STATE Compute SVD: $J_{\mathcal{D}}(\mathbf{z}) = \mathbf{U} \mathbf{\Sigma} \mathbf{V}^\top$
        \STATE Let $\mathbf{U}_k$ be the first $k$ columns of $\mathbf{U}$
        \STATE Estimate projection matrix: $\widetilde{\mathbf{P}}_{\mathbf{x}} \leftarrow \mathbf{U}_k \mathbf{U}_k^\top$
        \STATE Compute softmax probabilities: $p = \mathrm{softmax}(f(\mathbf{x}; \boldsymbol{\theta}))$
        \STATE Compute CE loss: $\mathcal{L}_{\text{CE}} = -\log p_y$
        \STATE Compute input gradients of logits: $\{\nabla_{\mathbf{x}} f_i(\mathbf{x}; \boldsymbol{\theta})\}_{i=1}^{C}$
        \STATE Compute gradient sum: $\mathbf{g} = \sum_{i=1}^{C} \nabla_{\mathbf{x}} f_i(\mathbf{x}; \boldsymbol{\theta})$
        \STATE Compute alignment term: 
        $
        \mathcal{L}_{\text{align}}^{\text{geo}} \leftarrow 
        \frac{\left\| \widetilde{\mathbf{P}}_{\mathbf{x}}\, \mathbf{g} \right\|}
             {\left\| \mathbf{g} \right\|}
        $
        \STATE Compute final loss:
        $
        \mathcal{L}_{\text{align}}(\thetav) \leftarrow
        \mathcal{L}_{\text{CE}} - \beta \cdot \mathcal{L}_{\text{align}}^{\text{geo}}
        $
    \ENDFOR
    \STATE Update $\boldsymbol{\theta}$ via backpropagation over average batch loss
\ENDFOR
\STATE \textbf{return} $\boldsymbol{\theta}$
\end{algorithmic}
\end{algorithm}

\begin{algorithm}[H]
\caption{Training‐Free Gradient–Manifold Alignment During Inversion}
\label{alg:tf-align}
\textbf{Input:} Target model $f$, pre-trained generator $\mathrm{G}$, inversion loss $\mathcal{L}$, initial latent code $\mathbf{z}$, number of inversion steps $T$, number of samples $K$, perturbation strength $\alpha$, sampling strategy $\rho\in\{\texttt{PAA},\texttt{TAA}\}$\\
\textbf{Output:} Recovered image $\widehat{\mathbf{x}} = \mathrm{G}(\mathbf{z})$
\begin{algorithmic}[1]
  \FOR{$t = 1$ to $T$}
    \STATE $\mathbf{x} \leftarrow \mathrm{G}(\mathbf{z})$
    \STATE Initialize gradient buffer: $\mathcal{G} \leftarrow \emptyset$
    \FOR{$k = 1$ to $K$}
      \IF{$\rho = \texttt{PAA}$}
        \STATE Compute noise scale: $\sigma \leftarrow \alpha\,\bigl(\max(\mathbf{x}) - \min(\mathbf{x})\bigr)$
        \STATE Sample noise: $\boldsymbol{\epsilon}_k \sim \mathcal{N}(0, \sigma^2 \mathbf{I})$
        \STATE $\mathbf{x}_k \leftarrow \mathbf{x} + \boldsymbol{\epsilon}_k$
      \ELSIF{$\rho = \texttt{TAA}$}
        \STATE Sample transformation: $\tau_k \sim \mathcal{T}$
        \STATE $\mathbf{x}_k \leftarrow \tau_k(\mathbf{x})$
      \ENDIF
      \STATE Compute loss gradient: $\mathbf{g}_k \leftarrow \nabla_{\mathbf{x}_k} \mathcal{L}(\mathbf{x}_k)$
      \STATE Append to buffer: $\mathcal{G} \leftarrow \mathcal{G} \cup \{\mathbf{g}_k\}$
    \ENDFOR
    \STATE Compute averaged gradient: $\widetilde{\nabla}\mathcal{L}(\mathbf{x}) \leftarrow \frac{1}{K} \sum_{k=1}^{K} \mathbf{g}_k$
    \STATE Update latent code: $\mathbf{z} \leftarrow \mathbf{z} - \eta\,J_{\mathrm{G}}(\mathbf{z})^\top \widetilde{\nabla}\mathcal{L}(\mathbf{x})$
  \ENDFOR
  \STATE \textbf{return} $\widehat{\mathbf{x}} = \mathrm{G}(\mathbf{z})$
\end{algorithmic}
\end{algorithm}

\section{Experimental Setup and Implementation Details}\label{app:exp_setup}
\vspace{-10pt}

\begin{table*}[ht]
\caption{
A summary of experimental setups.
}
\vspace{2pt}
\centering
    \fontsize{8}{10}\selectfont
    \setlength\tabcolsep{3pt}
\begin{tabular}{ccccccc}
\toprule[1.5pt]
\textbf{Setting} & \textbf{MIAs} & \textbf{Private Dataset} & \textbf{Public Dataset} & \textbf{Target Model} & \textbf{Evaluation Model}  \\ 
\midrule[0.6pt]
\multirow{3}{*}{Low‐resolution setting} 
  & \multirow{3}{*}{\begin{tabular}[c]{@{}l@{}}GMI (LOMMA) / \\ KEDMI (LOMMA) / \\ PLG-MI \end{tabular}} 
  & \multirow{3}{*}{CelebA} 
  & \multirow{3}{*}{\begin{tabular}[c]{@{}l@{}}CelebA / \\ FFHQ  \end{tabular}} 
  & \multirow{3}{*}{\begin{tabular}[c]{@{}l@{}}VGG16 /  \\ FaceNet (64)   \end{tabular}} 
  & \multirow{3}{*}{FaceNet (112)} 
  &  \\
  & & & & & &  \\
  & & & & & &  \\
\cmidrule[0.6pt]{2-6}
\multirow{3}{*}{High‐resolution setting} 
  & \multirow{3}{*}{PPA} 
  & \multirow{3}{*}{\begin{tabular}[c]{@{}l@{}}CelebA / \\ FaceScrub  \end{tabular}} 
  & \multirow{3}{*}{FFHQ} 
  & \multirow{3}{*}{\begin{tabular}[c]{@{}l@{}}ResNet-18 / \\ DenseNet-121 / \\ ResNeSt-50 \end{tabular}} 
  & \multirow{3}{*}{Inception-v3} 
  &  \\
  & & & & & &  \\
  & & & & & &  \\
\bottomrule[1.5pt]
\end{tabular}
\label{tab:setups}
\end{table*}

\subsection{Hard- and Software Details}\label{app:hardware_softawre_details}

All high-resolution MIA experiments using \emph{Plug \& Play Attacks} (PPA) were conducted on Oracle Linux Server 8.9 with NVIDIA A100-80G GPUs, using CUDA 11.7, Python 3.9.18, and PyTorch 1.13.1.
Low-resolution facial recognition MIAs were run on Ubuntu 20.04.4 LTS with NVIDIA RTX 3090 GPUs, under CUDA 11.6, Python 3.7.12, and PyTorch 1.13.1.

\subsection{Target Models}\label{app:target_models}

\textbf{(1) Empirical Validation of the Hypothesis.}
To validate our hypothesis, we conduct experiments on models pre-trained for a 1000-class classification task using $64 \times 64$ CelebA images. The model and training pipeline are based on the implementation provided at \url{https://github.com/sutd-visual-computing-group/Re-thinking_MI}.
To compute alignment scores, we require estimates of the tangent space at each training point. These are obtained using a pre-trained VAE decoder, which maps latent representations back to the image space. For each training image, we compute the Jacobian of the decoder to extract the local tangent basis and pre-store it for downstream alignment computation. However, this procedure is memory-intensive. For example, estimating and storing tangent bases for approximately $2,700$ training images from the first $100$ classes of CelebA requires about $30$ GB of disk space. Due to this storage constraint and the exploratory nature of the analysis, we restrict our investigation to a $100$-class subset of the full dataset.

To obtain models trained on this 100-class subset, we first adapt the original $1000$-class model by fine-tuning it on the corresponding subset. Fine-tuning is performed for $20$ epochs using stochastic gradient descent with an initial learning rate of $10^{-2}$, momentum of $0.9$, weight decay of $10^{-4}$, and batch size of $128$. The learning rate is scheduled to decrease by a factor of $0.02$ at epochs $10$ and $15$. This procedure yields a $100$-class vanilla model.
Subsequently, to obtain models with varying levels of training-time gradient–manifold alignment, we continue fine-tuning the 100-class vanilla model for $30$ additional epochs using our proposed alignment-aware training objective. The learning rate is fixed throughout this phase. To capture the evolution of training-time alignment scores, we save model checkpoints at intermediate epochs. These models serve as the basis for evaluating the correlation between alignment and model inversion vulnerability in later experiments.

\textbf{(2) Evaluation of Proposed Methods. }
To evaluate our proposed methods, we adopt distinct training configurations for models at different image resolutions. For high-resolution inputs ($224 \times 224$) from the CelebA and FaceScrub datasets, we follow the setup from \citet{PPA}. Models are optimized using Adam~\citep{adam} with an initial learning rate of $10^{-3}$, $\beta$ parameters set to $(0.9, 0.999)$, and a weight decay of $10^{-3}$. Training runs for 100 epochs with a batch size of 128, and the learning rate is reduced by a factor of 0.1 at epochs 75 and 90.
Input preprocessing includes normalization (mean and standard deviation both set to 0.5), followed by a sequence of augmentations: random cropping with a scale range of $[0.85, 1.0]$ and fixed aspect ratio of 1.0, resizing to $224 \times 224$, and horizontal flipping with a probability of 0.5.

For low-resolution images ($64 \times 64$) from CelebA, we follow the training protocol provided by \url{https://github.com/sutd-visual-computing-group/Re-thinking_MI}. Specifically, we use stochastic gradient descent (SGD) with an initial learning rate of $10^{-2}$, momentum of 0.9, and weight decay of $10^{-4}$. Models are trained for 100 epochs with a batch size of 64, and the learning rate is decayed by a factor of 0.1 at epochs 75 and 90.

\subsection{Evaluation Models}

For our PPA-based experiments, we follow the original implementation at \url{https://github.com/LukasStruppek/Plug-and-Play-Attacks} to train Inception-v3 evaluation models, using the training configurations specified in \citet{PPA}. These models achieve test accuracies of $96.53\%$ on FaceScrub and $94.87\%$ on CelebA. To compute K-nearest neighbor (KNN) distances, which serve as a similarity metric between reconstructed and true samples in facial recognition tasks, we adopt the pre-trained FaceNet model~\citep{facenet}, available at \url{https://github.com/timesler/facenet-pytorch}.

For experiments on target models trained on $64 \times 64$ resolution CelebA dataset, we use an evaluation model from \url{https://github.com/sutd-visual-computing-group/Re-thinking_MI}. This model is based on the face.evoLVe architecture~\citep{face.evoLVe} with a modified ResNet-50 backbone, and achieves a reported test accuracy of $95.88\%$. Details on the training procedure are available in \citet{GMI}.

\subsection{Attack Parameters}\label{app:attack_params}

\textbf{High-Resolution Setting.}
In the high-resolution setting, we follow the \emph{Plug \& Play Attack} (PPA) method, which comprises three stages: (1) latent code pre-selection, (2) latent code optimization, and (3) result selection. During pre-selection, we sample $2000$ latent codes per class and retain the top $100$ candidates based on the target model’s response for both CelebA and FaceScrub datasets. In the optimization stage, we perform $70$ iterations of gradient-based latent code updates per class. The final result selection stage is omitted in our implementation in order to include as many as samples for evaluation. We focus on the first $100$ classes, generating $100$ reconstructed samples per class. 

As for the parameters of PAA strategy, we use Gaussian perturbations of standard deviation $\sigma$ set to $5\%$ of the synthesized images' dynamic range. For parameters of TAA strategy, we apply three geometrically constrained transformations: random resized cropping with scale factors spanning $[0.8, 1.0]$ and aspect ratios limited to $[0.9, 1.1]$, horizontal flipping with probability $p=0.5$, and random rotations within $\pm5^\circ$ angular displacement.

\textbf{Low-Resolution Setting.}
In the low-resolution setting, we target the first $100$ classes from CelebA as the private dataset $\mathcal{D}_\text{pri}$ and generate $100$ samples per identity using CelebA, FFHQ and FaceScrub as auxiliary datasets $\mathcal{D}_\text{aux}$. For instantiations of AlignMI, we maintain identical PAA and TAA parameter configurations from the high-resolution setup unless explicitly stated. Implementation details differ slightly across MIAs. For GMI (LOMMA) using StyleGAN, we directly sample and optimize $100$ latent codes for $100$ steps with a batch size of $20$, and set the PAA's Gaussian noise standard deviation $\sigma$ is set to $0.5\%$ of the synthesized images' dynamic range. For KEDMI (LOMMA) with DCGAN, 
we process $100$ samples per identity through $200$ optimization steps with a batch size of $100$.
For PLG-MI with a cGAN prior, the baseline includes a data augmentation pipeline comprising: random resized cropping to $64 \times 64$ with scale in $[0.8, 1.0]$ and fixed aspect ratio $1.0$, color jittering with brightness and contrast set to $\pm 0.2$, random horizontal flips (probability $0.5$), and rotations within $\pm 5^\circ$. In our PAA and TAA configurations, we omit this augmentation pipeline to isolate the effect of gradient–manifold alignment. Optimization for PLG-MI runs for $100$ steps with a batch size of $20$.

Due to the high computational cost of generative MIAs, we perform a single attack per target model. To reduce randomness, we generate at least $100$ inversion samples per class across all configurations.

\subsection{Evaluation Metrics}\label{app:eval_metrics}
{\bf Attack Accuracy (Attack Acc).} We employ an evaluation model (generally more robust and powerful than the target model) trained on the same dataset as the target model to verify whether reconstructed images correctly represent the target class, following the evaluation method of \citet{GMI}. This metric serves as an automated proxy for human evaluation, assessing how well the reconstructed images capture the distinctive characteristics of the target class compared to other classes. The attack accuracy is computed as the percentage of predictions matching the target class, reporting both top-1 (Acc@1) and top-5 (Acc@5) accuracy scores.

{\bf K-Nearest Neighbors Distance (KNN Dist).} KNN distance quantifies reconstruction quality through $l_2$ distance computation in a model's feature embedding space, measuring the similarity between reconstructed images and their nearest original private training samples. This metric serves as a quantitative indicator of visual fidelity, where smaller distances correspond to higher similarity between generated and genuine training data. For high-resolution attacks in PPA~\citep{PPA}, we extract features from FaceNet's penultimate layer~\citep{facenet}, while for low-resolution model inversion attacks, we use the evaluation model's penultimate layer features.

\subsection{Experimental Details for Figure~\ref{fig:cos_measure}}\label{app:cos_measure}

\textbf{Low-Resolution Setting.}
In the low-resolution experiments, we adopt a DCGAN trained on CelebA as the generative prior. The latent space dimension of DCGAN is $100$, corresponding to a random baseline alignment score of approximately $0.090$. The target classifier is a VGG16 model trained on CelebA, and the inversion targets the first $25$ classes, each containing $1,000$ images.
For Fig.~\ref{fig:low_res_cos}, we run the inversion optimization for $1,200$ steps and record the inversion-time alignment scores of the loss gradients every $10$ steps for each reconstructed sample. The figure presents the distribution of all collected alignment scores.
In Fig.~\ref{fig:dynamics}, we further analyze temporal dynamics by averaging alignment scores across all classes at each step, illustrating how gradient–manifold alignment evolves during optimization. 

Additionally, we evaluate gradient–manifold alignment during the inversion process for other attack methods, including KEDMI (LOMMA) and PLG-MI, in the low-resolution setting. The results are present in Fig.~\ref{fig:additional_cos_measure}. Both methods leverage CelebA as the generative prior and target a VGG16 classifier trained on CelebA. Specifically for KEDMI, we adopt a DCGAN with latent space dimension of DCGAN $100$, corresponding to a random baseline alignment score approximately $0.090$. The inversion process targets the first $50$ classes, each containing $500$ images and proceeds $1,200$ optimization steps. For PLG-MI, we use a conditional GAN (cGAN) with $128$ latent dimensions, which corresponds to a random baseline alignment score approximately $0.102$. The inversion process executes $100$ optimization iterations targeting the first $100$ classes, each containing $100$ images. 


Interestingly, the PLG-MI method exhibits higher inversion-time alignment scores than GMI (LOM) and KEDMI (LOM). This improvement can be attributed to its use of a conditional GAN, which incorporates label information throughout the inversion process. The stronger alignment may partially explain PLG-MI’s superior attack performance.

\begin{figure*}[t!]
  \centering
    \subfigure[KEDMI (LOMMA)]{
    \includegraphics[width=0.306\textwidth]{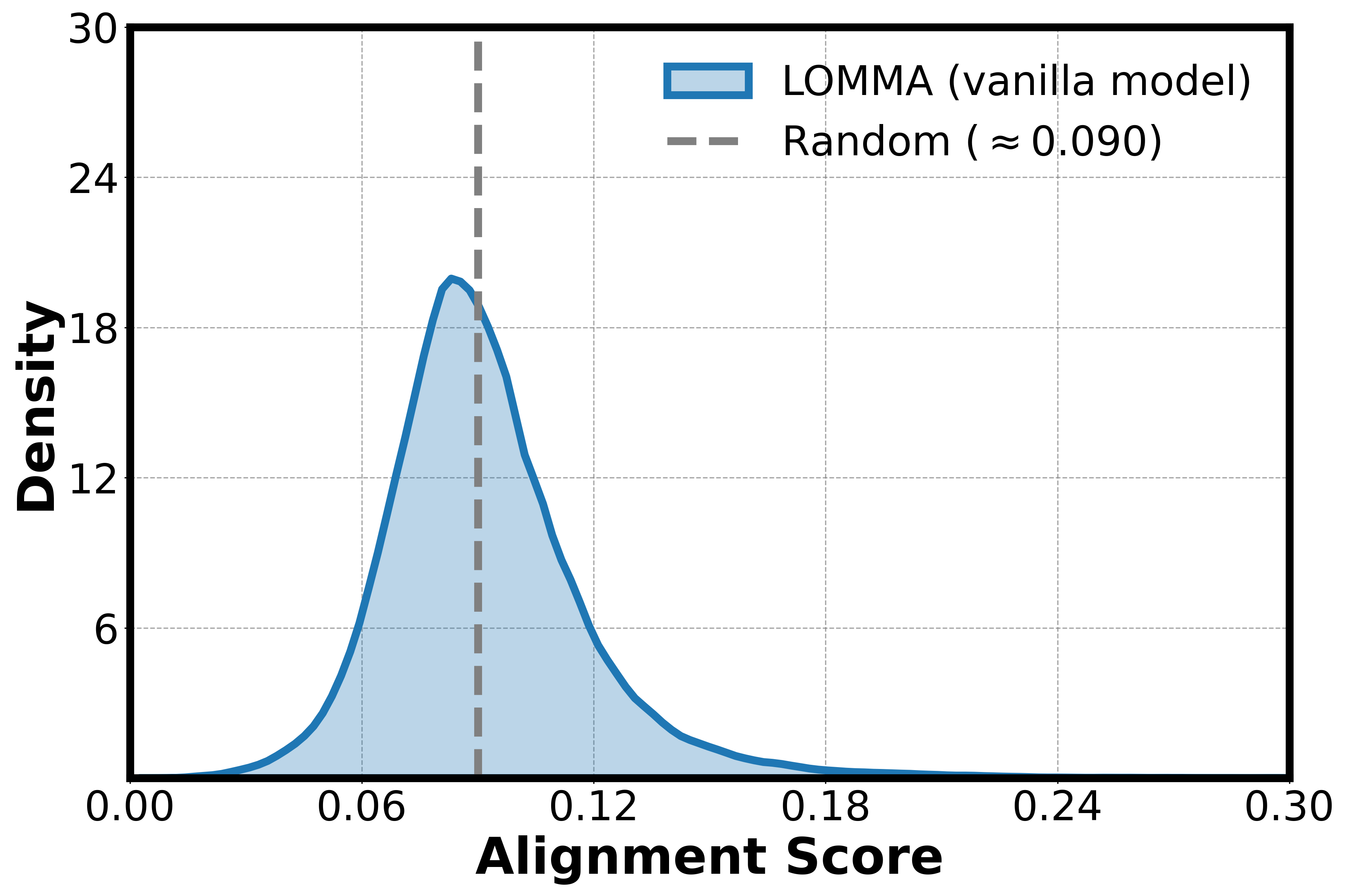}
    \label{fig:additional_kedmi_cos}
  }
    \subfigure[PLG-MI]{
    \includegraphics[width=0.3\textwidth]{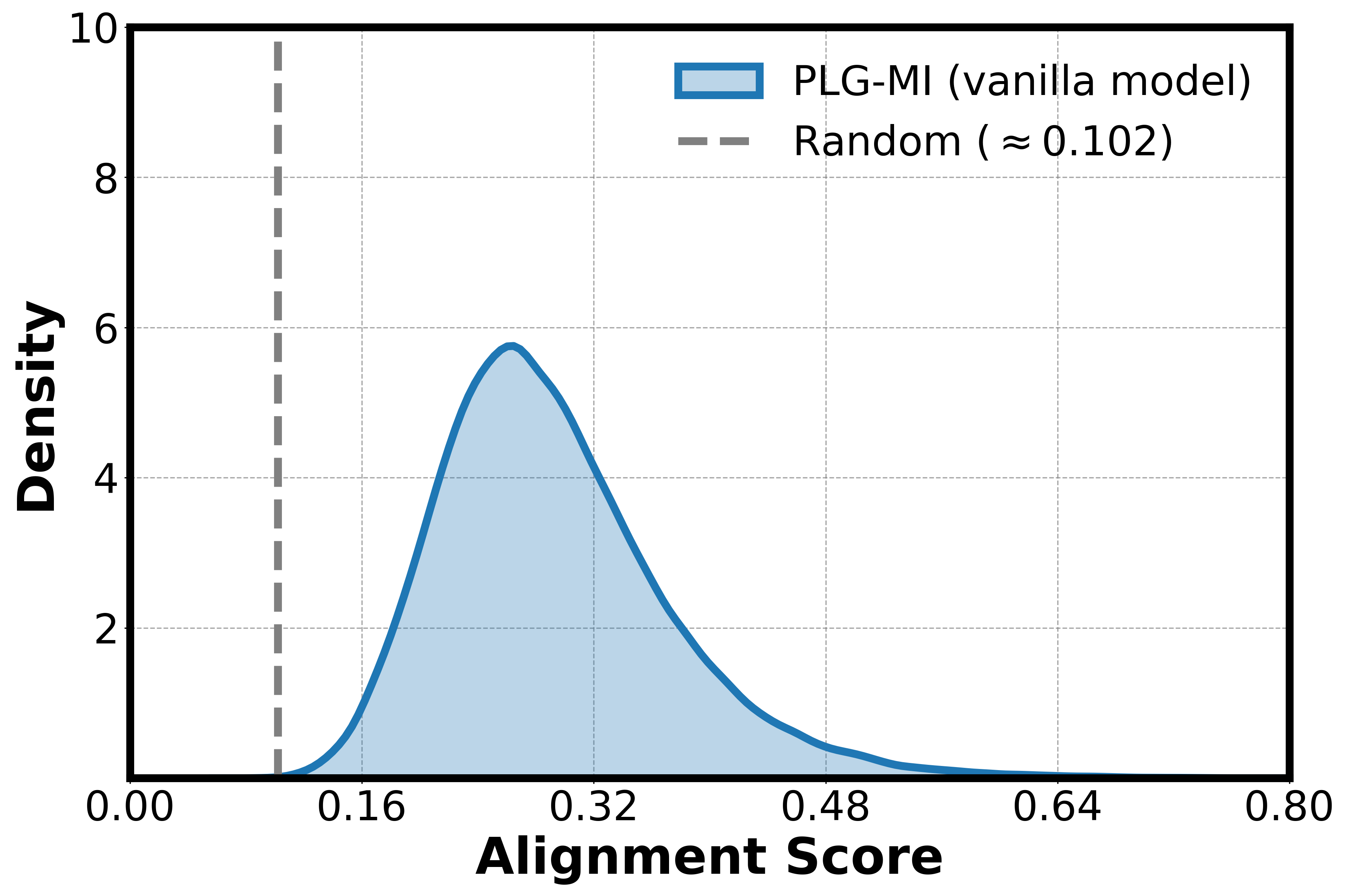}
    \label{fig:additional_plgmi_cos}
  }
    \subfigure[Inversion-phase dynamics]{
    \includegraphics[width=0.288\textwidth]{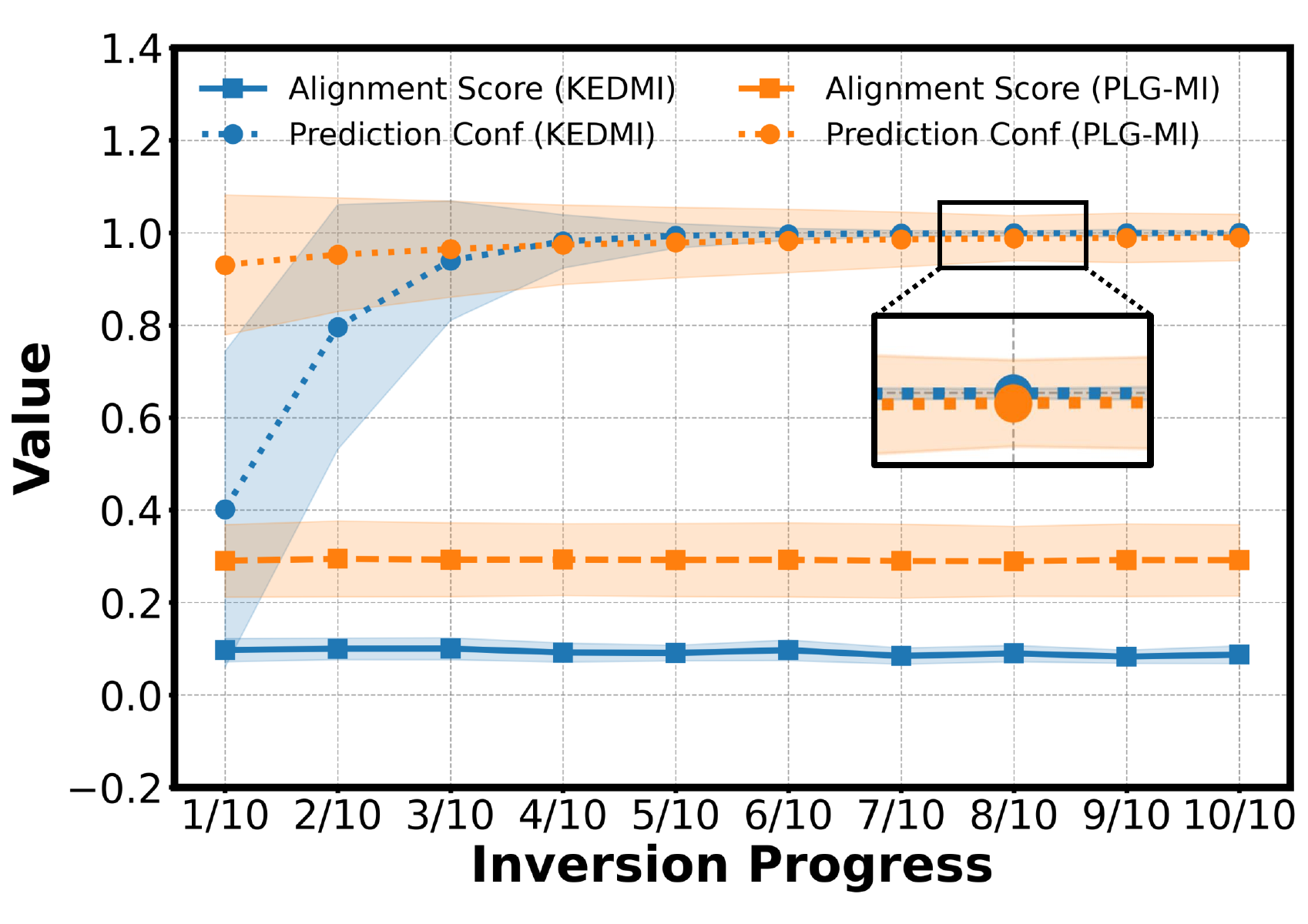}
    \label{fig:additional_dynamics}
  }
  \vspace{-5pt}
    \caption{\textbf{Additional gradient-manifold alignment during inversion process.}  
    \textbf{(a)}Alignment score distribution for KEDMI (LOMMA) using an inversion-specific GAN trained on CelebA. \textbf{(b)} Corresponding results for PLG-MI using a conditional GAN. \textbf{(c)} Evolution of mean alignment scores versus prediction confidence during inversion. Notably, while prediction confidence demonstrates monotonic improvement throughout the inversion process, gradient-manifold alignment in additional attack methods also remains stable and low, reinforcing the lack of correlation between confidence and gradient–manifold alignment.}
    \label{fig:additional_cos_measure}
    \vspace{-6pt}
\end{figure*}

\textbf{High-Resolution Setting.}
In the high-resolution experiments, we use a StyleGAN model trained on FFHQ as the generative prior. The latent space has dimension $512$, yielding a random baseline alignment score of approximately $0.058$. The target classifier is a ResNet18 model trained on CelebA, with inversion targeting the first $50$ classes, each containing $50$ images.
For Fig.~\ref{fig:high_res_cos}, inversion is run for $100$ steps, with alignment scores recorded at $10$ equally spaced intervals per reconstructed sample. The figure shows the distribution of the recorded scores.
In Fig.~\ref{fig:dynamics}, we track temporal alignment by averaging scores over all latent vectors at each interval, capturing how alignment develops throughout the inversion process.

\subsection{Experimental Details for Figure~\ref{fig:hypothesis_val}}\label{app:hypothesis_val}
\begin{figure}
    \centering
    \includegraphics[width=1.0\linewidth]{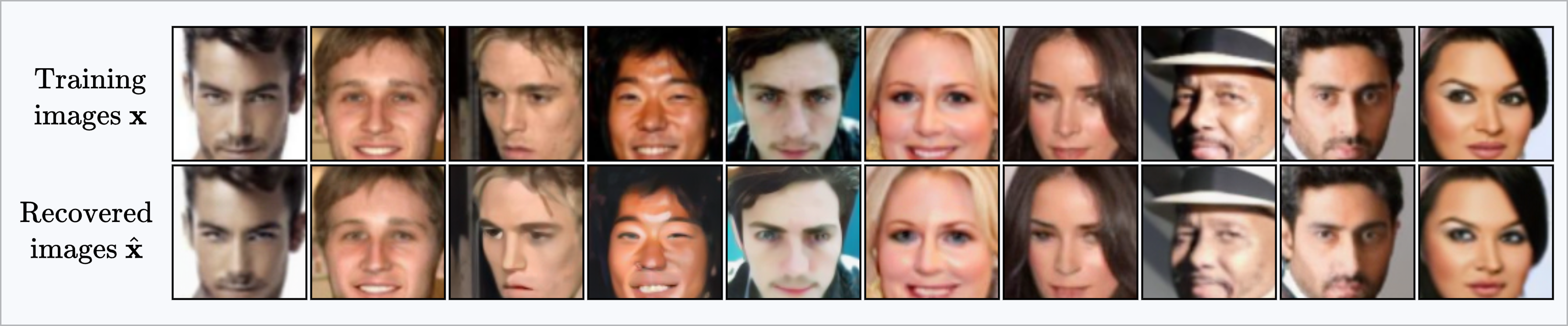}
    \caption{Original training samples (top row) and corresponding reconstructions (bottom row) from the pre-trained VAE used for tangent space estimation. The visual similarity confirms the VAE's ability to approximate the natural image manifold reliably.}
    \label{appx_fig:vae_recon}
\end{figure}

\begin{figure}[t]
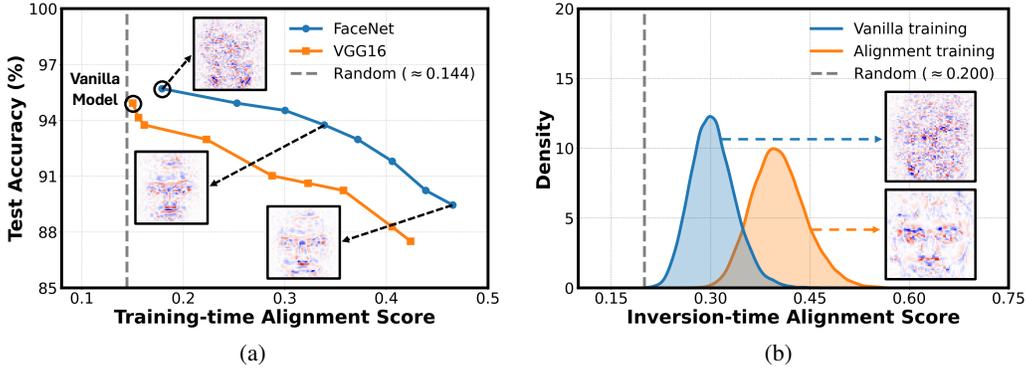

  \centering
  \vspace{-5pt}
  \subfigure[]{ 
    \includegraphics[width=0.475\textwidth]{Figures/understanding/Geo_AS_Acc-align-facenet64-celeba-100cls.pdf}
    \label{appx_fig:AS_vs_testAcc}
  }
  \subfigure[]{
    \includegraphics[width=0.475\textwidth]{Figures/understanding/Geo_cos-align-vanilla-facenet64-celeba-100cls.pdf}
    \label{appx_fig:vanilla_vs_align}
  }
    \caption{\textbf{Empirical evaluation of gradient–manifold alignment (enlarged version).} 
    \textbf{(a)} Test accuracy vs. training-time alignment score ($\operatorname{AS}_{\mathrm{tr}}$) for models sampled during fine-tuning vanilla models with the alignment-aware training objective. Insets show input gradient visualizations for models with varying degrees of alignment. \textbf{(b)} Distribution of inversion-time alignment scores ($\operatorname{AS}_{\text{inv}}$) for the vanilla model compared to the alignment-aware model.
    }
  \label{appx_fig:toy_demo}
  \vspace{-2mm}
\end{figure}

\textbf{Tangent Space Estimation.} 
To compute training-time alignment scores, we estimate the tangent space at each training sample using a pre-trained VAE from Stable Diffusion. Specifically, the VAE encoder maps an input image $\*x$ of shape $64 \times 64 \times 3$ to a latent representation $\*z$ of shape $8 \times 8 \times 4$, which is then decoded back to the image space by the VAE decoder. For each training image, we compute the Jacobian of the decoder to obtain the local tangent basis, resulting in a Jacobian matrix of shape $12{,}288 \times 256$. This process is memory-intensive: for example, estimating and storing tangent bases for approximately $2{,}700$ training samples from the first $100$ classes of CelebA consumes roughly $30$ GB of disk space. As shown in Fig.~\ref{appx_fig:vae_recon}, the reconstructed images closely match the original inputs, indicating that the pre-trained VAE, despite not being trained on the target dataset, offers a reliable approximation of the natural image manifold.

\textbf{Empirical evaluation of gradient-manifold alignment.}
To empirically evaluate the trade-off between test accuracy and training-time alignment score as shown in Fig.~\ref{fig:AS_vs_testAcc} (or Fig.~\ref{appx_fig:AS_vs_testAcc}), we conducted experiments using two 100-class target models: VGG16 and FaceNet. The training procedures for these models followed the same specifications detailed in Appendix~\ref{app:target_models}. During training, we saved intermediate model checkpoints at various epochs to capture the evolution of model performance under our alignment-aware objective.

For analyzing the distribution of inversion-time alignment scores presented in Fig.~\ref{fig:vanilla_vs_align} (or Fig.~\ref{appx_fig:vanilla_vs_align}), we select two 100-class FaceNet models as target models. The vanilla model achieves a test accuracy of $96.53\%$ with training-time alignment score $\operatorname{AS}_{\mathrm{tr}}=0.175$, while the aligned model achieves a test accuracy of $93.75\%$ with $\operatorname{AS}_{\mathrm{tr}} = 0.339$. We use the GMI (LOM) attack method with StyleGAN as a prior, targeting the first $25$ classes and running the optimization for $100$ steps with batch size $20$ for both the vanilla and aligned models.

In Fig.~\ref{fig:AS2_vs_testAcc}, we extend our evaluation to 1000-class VGG16 models, following the same training protocol as described in Appendix~\ref{app:target_models}. We save checkpoints at intermediate training epochs to obtain models with varying test accuracies. The alignment scores $\operatorname{AS}_{\mathrm{tr}}$ are recorded throughout the training process. Additionally, we compute the alignment scores $\operatorname{AS}_{\mathrm{inv}}$ using the GMI (LOM) attack with StyleGAN, again targeting the first $25$ classes and running the optimization for $100$ steps.

\begin{figure}
    \centering
    \includegraphics[width=1\linewidth]{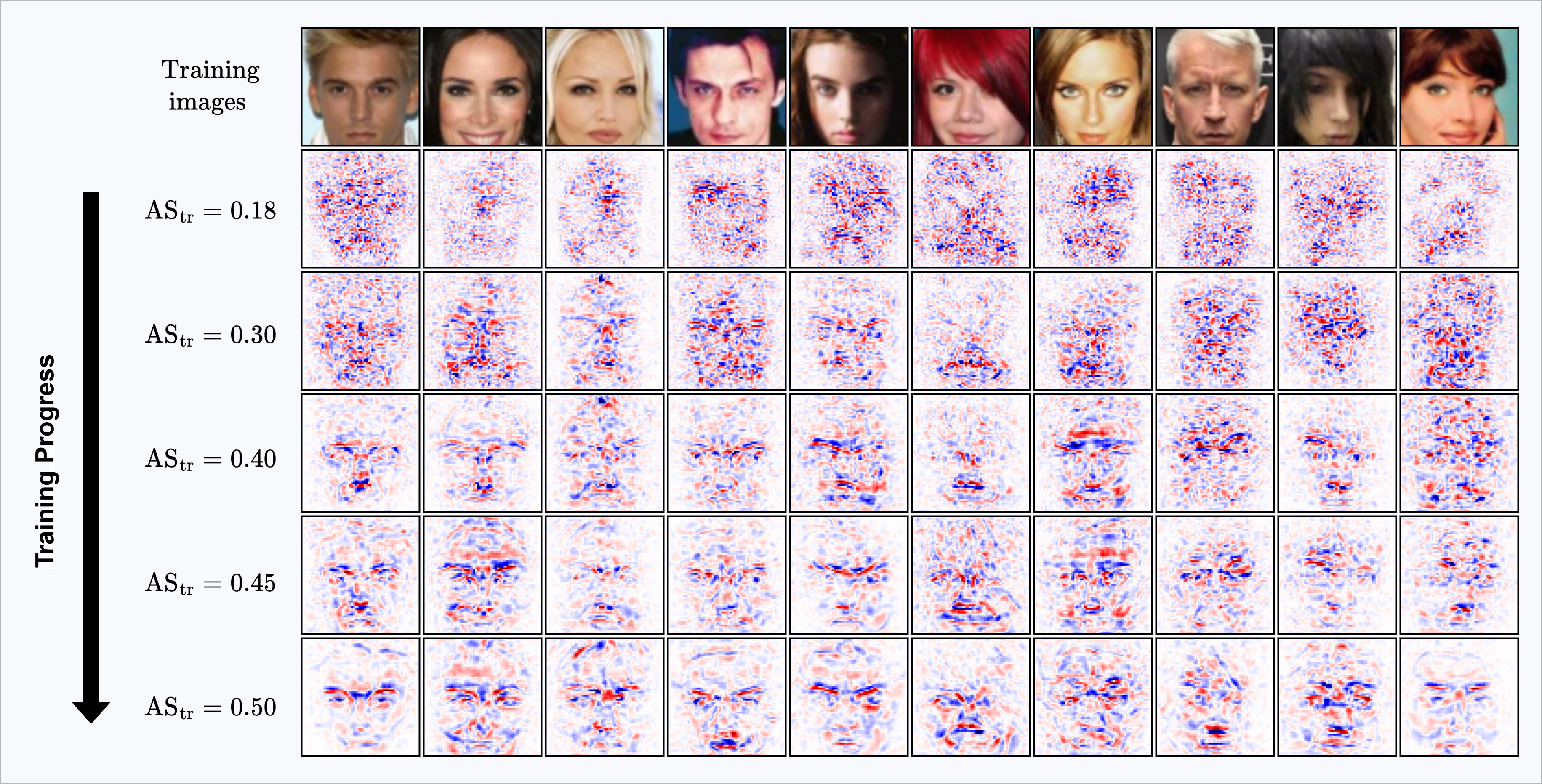}
    \caption{\textbf{Training-time alignment progression with alignment-aware training.} 
    Evolution of training-time alignment score ($\operatorname{AS}_{\mathrm{tr}}$) and gradient visualizations during fine-tuning of FaceNet using our alignment-aware objective. As alignment improves, loss gradients exhibit increasingly structured and semantically meaningful patterns. (Best viewed with zoom.)}
    \label{appx_fig:Grad_align_train_phase}
\end{figure}

\begin{figure}
    \centering
    \includegraphics[width=1\linewidth]{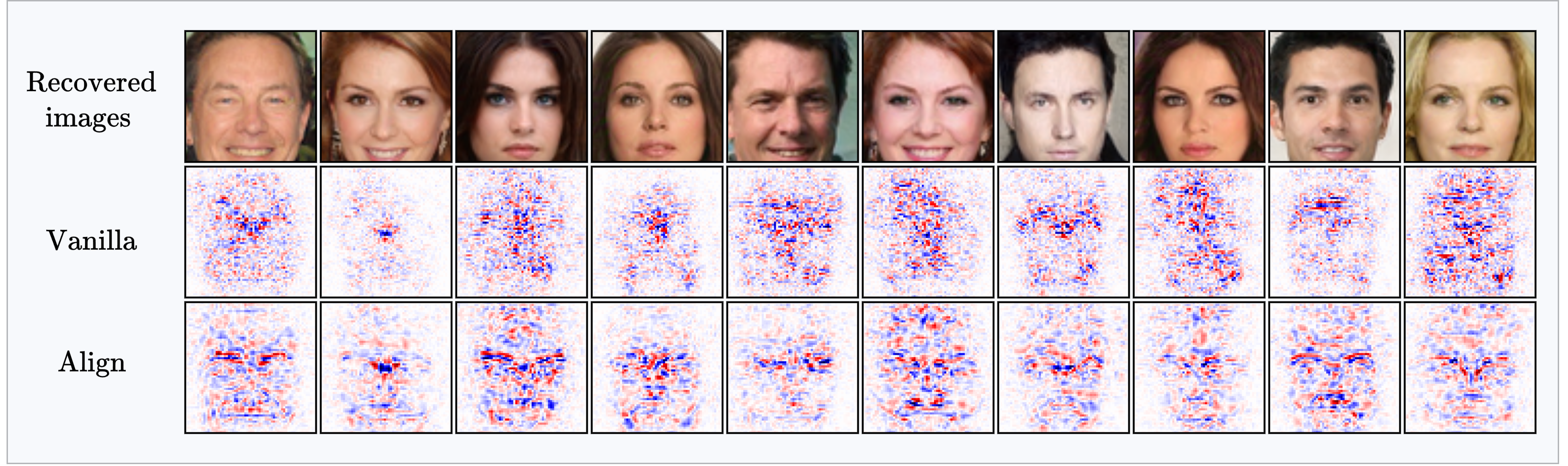}
    \caption{\textbf{Comparison of inversion-time loss gradients.}
    Visualization of loss gradients from the vanilla model (top) and the alignment-aware model (bottom). The alignment-aware model produces gradients that are sharper and more semantically aligned with facial structures, indicating stronger alignment with the generator manifold. (Best viewed with zoom.)}
    \label{appx_fig:Grad_vanilla_vs_aligntrain}
\end{figure}

\section{Additional Experimental Results}\label{app:additional_results}

\subsection{Additional Empirical Validation of the Hypothesis}\label{app:additional_hypothesis_results}

We illustrate the fine-tuning progress of a FaceNet model optimized with our alignment-aware objective in Fig.~\ref{appx_fig:Grad_vanilla_vs_aligntrain}. As fine-tuning proceeds, the training-time alignment score ($\operatorname{AS}_{\mathrm{tr}}$) consistently increases, and corresponding gradient visualizations exhibit progressively clearer and more semantically meaningful structures. This demonstrates the effectiveness of our alignment-aware training strategy in promoting geometrically informative gradients.

For comparison, Fig.~\ref{appx_fig:Grad_vanilla_vs_aligntrain} also presents inversion-time loss gradient images from both the vanilla and alignment-aware models. The gradients from the alignment-aware model reveal clearer, semantically meaningful structures, highlighting improved alignment with the underlying generator manifold.

To further validate our hypothesis, we extend our experiments to include IR152 as the target model, using the GMI (LOM) attack method. As shown in Fig.~\ref{appx_fig:cosine_measure_ir152}, the results are consistent with our earlier findings in Fig.~\ref{fig:AS_vs_testAcc} (Sec.~\ref{sec:exp_hypothesis}): as fine-tuning progresses, the training-time alignment score ($\operatorname{AS}_{\mathrm{tr}}$) steadily increases, and corresponding gradient visualizations reveal increasingly semantically meaningful features. Notably, this rise in alignment is accompanied by a gradual decline in test accuracy, reaffirming the trade-off between alignment and generalization.

Additionally, we evaluate model inversion performance across both vanilla and alignment-aware models with varying levels of $\operatorname{AS}_{\mathrm{tr}}$. As shown in Fig.~\ref{appx_fig:AS_Attack}, the trend mirrors Fig.~\ref{fig:AS_vs_Attack}: MIA vulnerability increases with alignment up to a certain threshold, after which further increases in $\operatorname{AS}_{\mathrm{tr}}$ reduce attack success.  This characteristic inverted V-shaped relationship supports our hypothesis and demonstrates that the correlation between gradient–manifold alignment and inversion vulnerability holds across different model architectures.

\begin{figure}[t]
  \centering
  \vspace{-5pt}
  \subfigure[]{ 
    \includegraphics[width=0.475\textwidth]{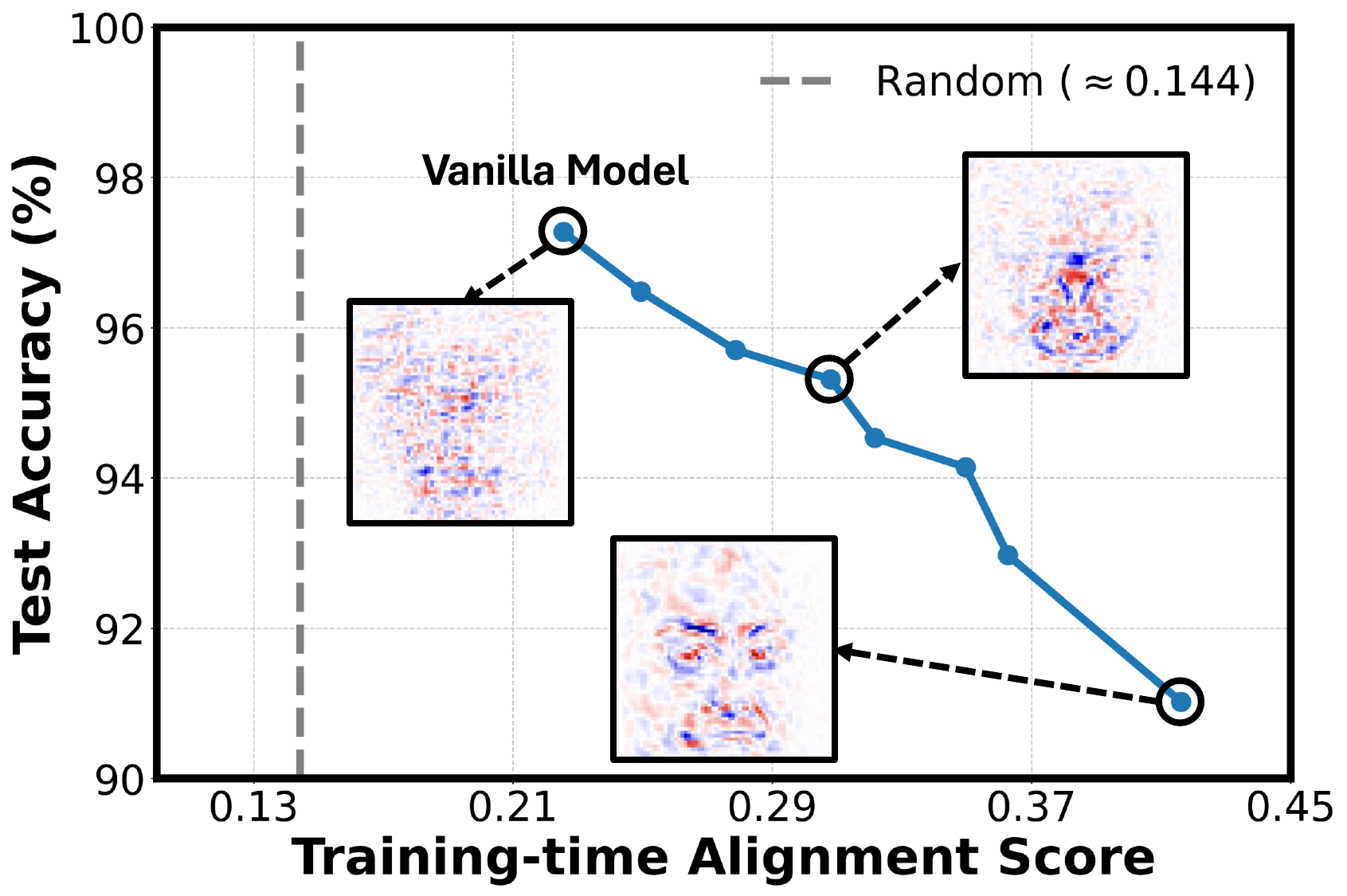}
    \label{appx_fig:cosine_measure_ir152}
  }
  \subfigure[]{
    \includegraphics[width=0.475\textwidth]{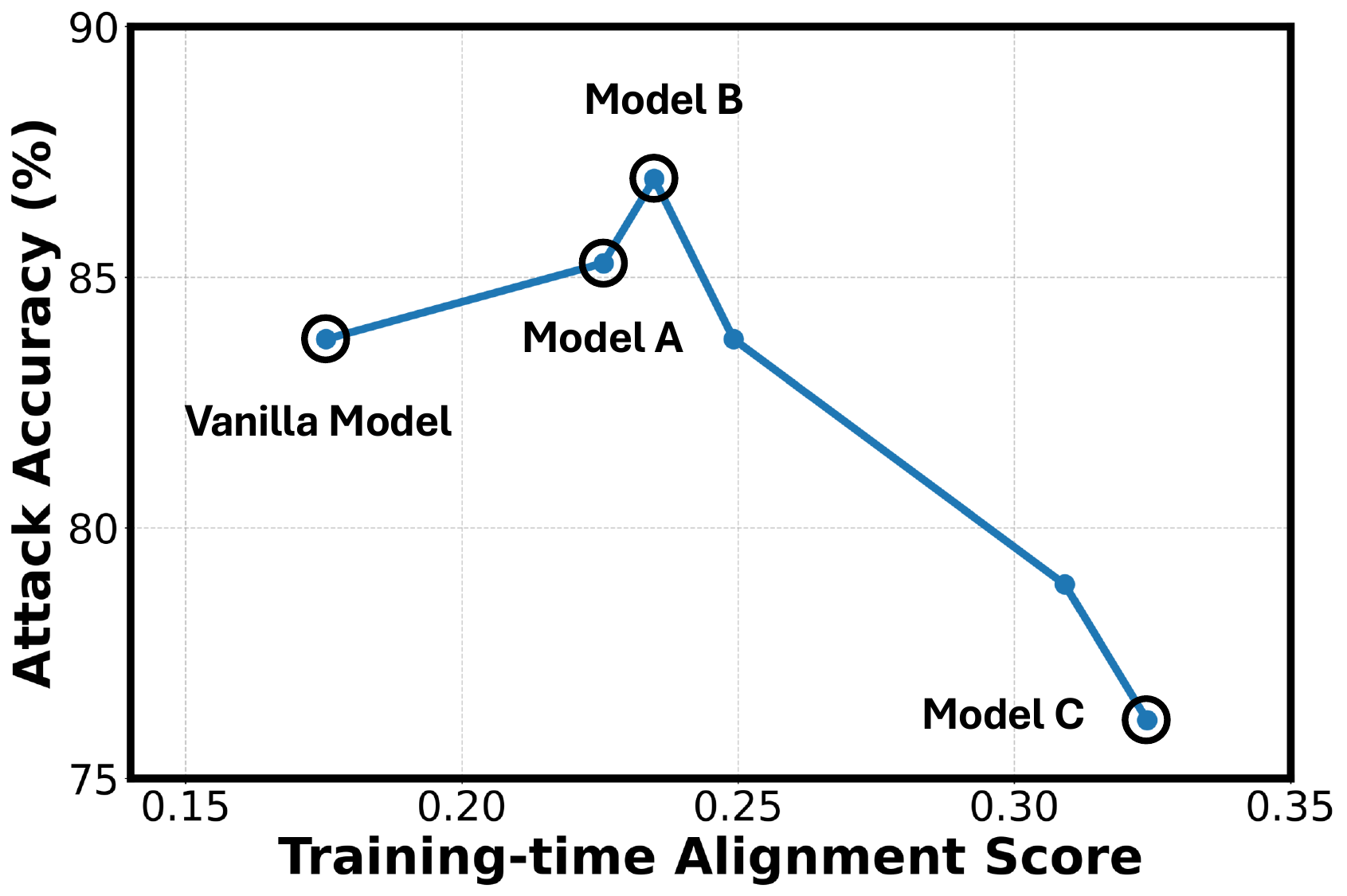}
    \label{appx_fig:AS_Attack}
  }
    \caption{\textbf{Additional empirical evaluation of gradient–manifold alignment.} 
    \textbf{(a)} Test accuracy vs. training-time alignment score ($\operatorname{AS}_{\mathrm{tr}}$) for IR152 models sampled during fine-tuning vanilla models with the alignment-aware training objective. Insets show input gradient visualizations for models with varying degrees of alignment. \textbf{(b)} MIA success on vanilla and alignment-aware IR152 models with different $\operatorname{AS}_{\text{tr}}$.
    }
  \label{appx_fig:hypothesis}
  \vspace{-2mm}
\end{figure}

\begin{table}[t!] 
    \caption{Comparison of inversion performance with white-box MIAs in the low-resolution setting. Target model $f$ = VGG16 trained on $\mathcal{D}_{\text{pri}}$ = CelebA. GANs are trained on $\mathcal{D}_{\text{aux}}$ = CelebA or FFHQ.}\vspace{2pt}
    \centering
    \footnotesize
    \resizebox{\textwidth}{!}{
    \begin{tabular}{l|lllc|lllc}
        \toprule[1.5pt]
        \multicolumn{1}{c|}{} & \multicolumn{4}{c|}{\textbf{CelebA}} & \multicolumn{4}{c}{\textbf{FFHQ}} \\
        Method & Acc@1$\uparrow$ & Acc@5$\uparrow$ & KNN Dist$\downarrow$ & Ratio$\downarrow$ & Acc@1$\uparrow$ & Acc@5$\uparrow$ & KNN Dist$\downarrow$ & Ratio$\downarrow$ \\
        \midrule[0.6pt]
        GMI (LOMMA) & 94.12 & 98.93 & 1155.02 & / & 73.07 & 92.95 & 1288.08 & / \\
        + PAA (ours) & 94.65 \green{(+0.53)} & 99.00 \green{(+0.07)} & 1104.52 \green{(-50.50)} & 10.79 & 72.11 \green{(-0.96)} & 92.55 \green{(-0.40)} & 1292.79 \green{(+4.71)} & 11.11 \\
        + TAA (ours) & 96.36 \green{(+2.24)} & 99.44 \green{(+0.51)} & 1105.63 \green{(-49.38)} & 4.76 & 81.25 \green{(+8.18)} & 96.02 \green{(+3.07)} & 1255.01 \green{(-33.07)} & 12.38 \\
        \cmidrule{1-9}
        KEDMI (LOMMA) & 60.46 & 87.35 & 1275.10 & / & 26.32 & 52.65 & 1592.32 & / \\
        + PAA (ours) & 76.75 \green{(+16.29)} & 95.55 \green{(+8.20)} & 1266.46 \green{(-8.65)} & 14.72 & 25.86 \green{(-0.46)} & 52.74 \green{(+0.09)} & 1595.91 \green{(+3.59)} & 17.27 \\
        + TAA (ours) & 59.67 \green{(-0.79)} & 86.83 \green{(-0.52)} & 1364.61 \green{(+89.51)} & 9.33 & 26.12 \green{(-0.20)} & 52.95 \green{(+0.30)} & 1595.83 \green{(+3.51)} & 18.42 \\
        \bottomrule[1.5pt]
    \end{tabular}
    \label{tab:low_res_white_vgg16}
    }
    \vspace{-10pt}
\end{table}

\begin{table}[t!] 
    \caption{Comparison of inversion performance with white-box MIAs in the low-resolution setting. Target model $f$ = FaceNet trained on $\mathcal{D}_{\text{pri}}$ = CelebA. GANs are trained on $\mathcal{D}_{\text{aux}}$ = CelebA or FFHQ.}\vspace{2pt}
    \centering
    \footnotesize
    \resizebox{\textwidth}{!}{
    \begin{tabular}{l|lllc|lllc}
        \toprule[1.5pt]
        \multicolumn{1}{c|}{} & \multicolumn{4}{c|}{\textbf{CelebA}} & \multicolumn{4}{c}{\textbf{FFHQ}} \\
        Method & Acc@1$\uparrow$ & Acc@5$\uparrow$ & KNN Dist$\downarrow$ & Ratio$\downarrow$ & Acc@1$\uparrow$ & Acc@5$\uparrow$ & KNN Dist$\downarrow$ & Ratio$\downarrow$ \\
        \midrule[0.6pt]
        GMI (LOMMA) & 93.66 & 98.25 & 1084.60 & / & 74.01 & 92.91 & 1279.53 & / \\
        + PAA (ours) & 93.73 \green{(+0.07)} & 98.31 \green{(+0.06)} & 1082.41 \green{(-2.19)} & 11.64 & 74.36 \green{(+0.35)} & 93.22 \green{(+0.31)} & 1278.74 \green{(-0.79)} & 11.27 \\
        + TAA (ours) & 96.74 \green{(+3.08)} & 99.11 \green{(+0.86)} & 1077.23 \green{(-7.37)} & 12.60 & 84.90 \green{(+10.89)} & 96.64 \green{(+3.73)} & 1234.00 \green{(-45.53)} & 15.20 \\
        \cmidrule{1-9}
        KEDMI (LOMMA) & 60.42 & 89.47 & 1331.94 & / & 30.33 & 61.20 & 1542.77 & / \\
        + PAA (ours) & 61.20 \green{(+0.78)} & 89.50 \green{(+0.03)} & 1342.33 \green{(+10.39)} & 15.15 & 29.29 \green{(-1.04)} & 61.05 \green{(-0.15)} & 1540.16 \green{(-2.61)} & 16.13 \\
        + TAA (ours) & 60.55 \green{(+0.13)} & 89.50 \green{(+0.03)} & 1336.16 \green{(+4.22)} & 14.73 & 30.28 \green{(-0.05)} & 61.43 \green{(+0.23)} & 1540.33 \green{(-2.44)} & 15.81 \\
        \bottomrule[1.5pt]
    \end{tabular}\label{tab:low_res_White_fn}
    }
    \vspace{-10pt}
\end{table}

\begin{table}[t!] 
    \caption{Comparison of inversion performance with PLG-MI in the low-resolution setting. Target model $f$ = FaceNet trained on $\mathcal{D}_{\text{pri}}$ = CelebA. GANs are trained on $\mathcal{D}_{\text{aux}}$ = FaceScrub or FFHQ.}\vspace{2pt}
    \centering
    \footnotesize
    \resizebox{\textwidth}{!}{
    \begin{tabular}{l|lllc|lllc}
        \toprule[1.5pt]
        \multicolumn{1}{c|}{} & \multicolumn{4}{c|}{\textbf{FaceScrub}} & \multicolumn{4}{c}{\textbf{FFHQ}} \\
        Method & Acc@1$\uparrow$ & Acc@5$\uparrow$ & KNN Dist$\downarrow$ & Ratio$\downarrow$ & Acc@1$\uparrow$ & Acc@5$\uparrow$ & KNN Dist$\downarrow$ & Ratio$\downarrow$ \\
        \midrule[0.6pt]
        PLG & 32.06 & 58.17 & 1558.26 & / & 88.68 & 97.06 & 1267.12 & / \\
        + PAA (ours) & 29.93 \green{(-2.13)} & 53.99 & 1557.11 \green{(-1.15)} & 9.07 & 87.32 \green{(-1.36)} & 96.37 & 1270.54 \green{(+3.42)} & 9.04 \\
        + TAA (ours) & 35.99 \green{(+3.93)} & 62.87 & 1539.27 \green{(-18.99)} & 11.07 & 90.79 \green{(+2.11)} & 97.56 & 1256.07 \green{(-11.05)} & 11.07 \\
        \bottomrule[1.5pt]
    \end{tabular}\label{tab:low_res_White_plg}
    }
    \vspace{-10pt}
\end{table}

\subsection{Additional Evaluations of Proposed Methods}\label{app:additional_eval_results}

\begin{figure}[t]
  \centering
  \vspace{-5pt}
  \subfigure[]{ 
    \includegraphics[width=0.475\textwidth]{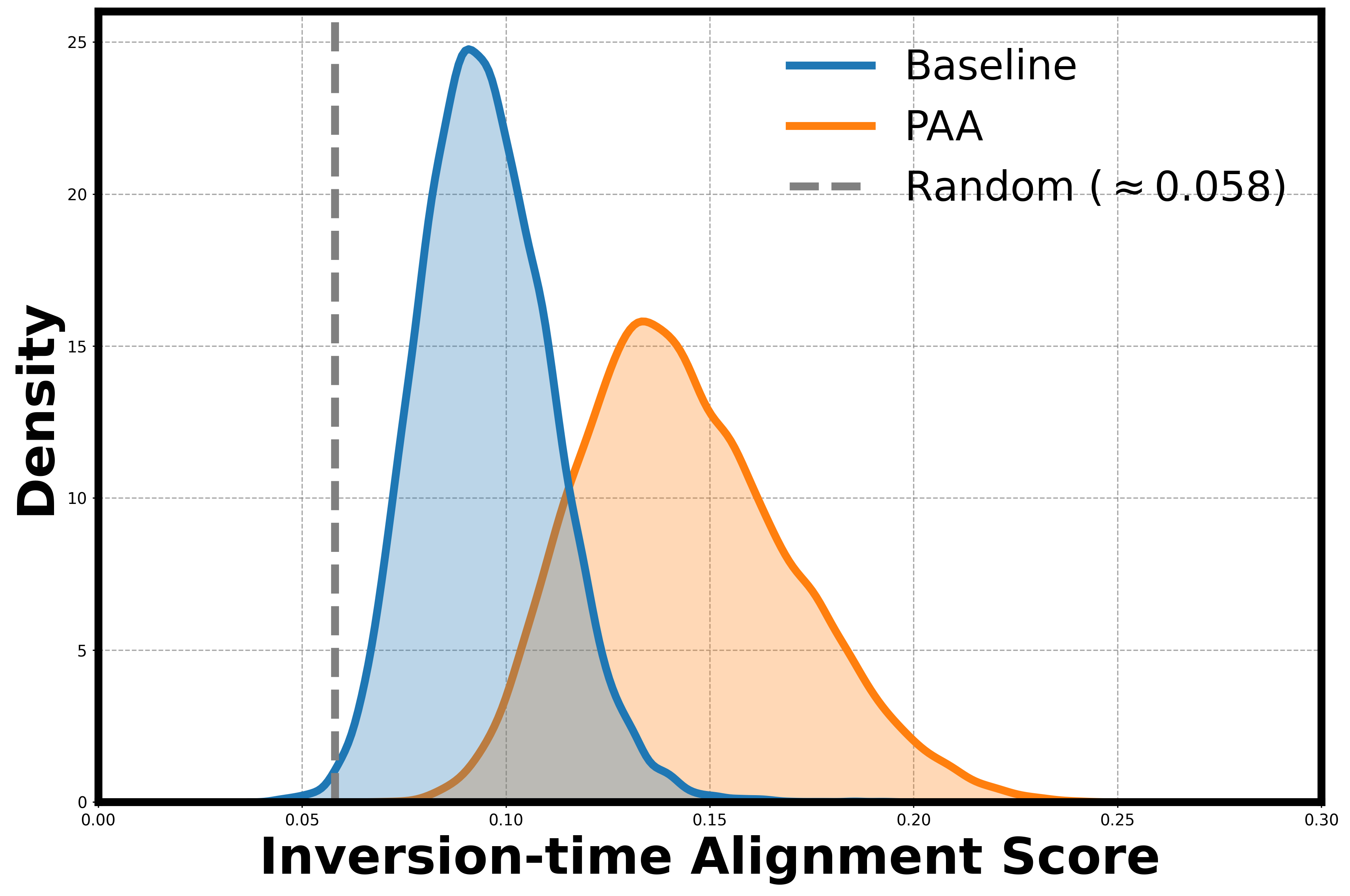}
    \label{appx_fig:paa_cos_dist}
  }
  \subfigure[]{
    \includegraphics[width=0.475\textwidth]{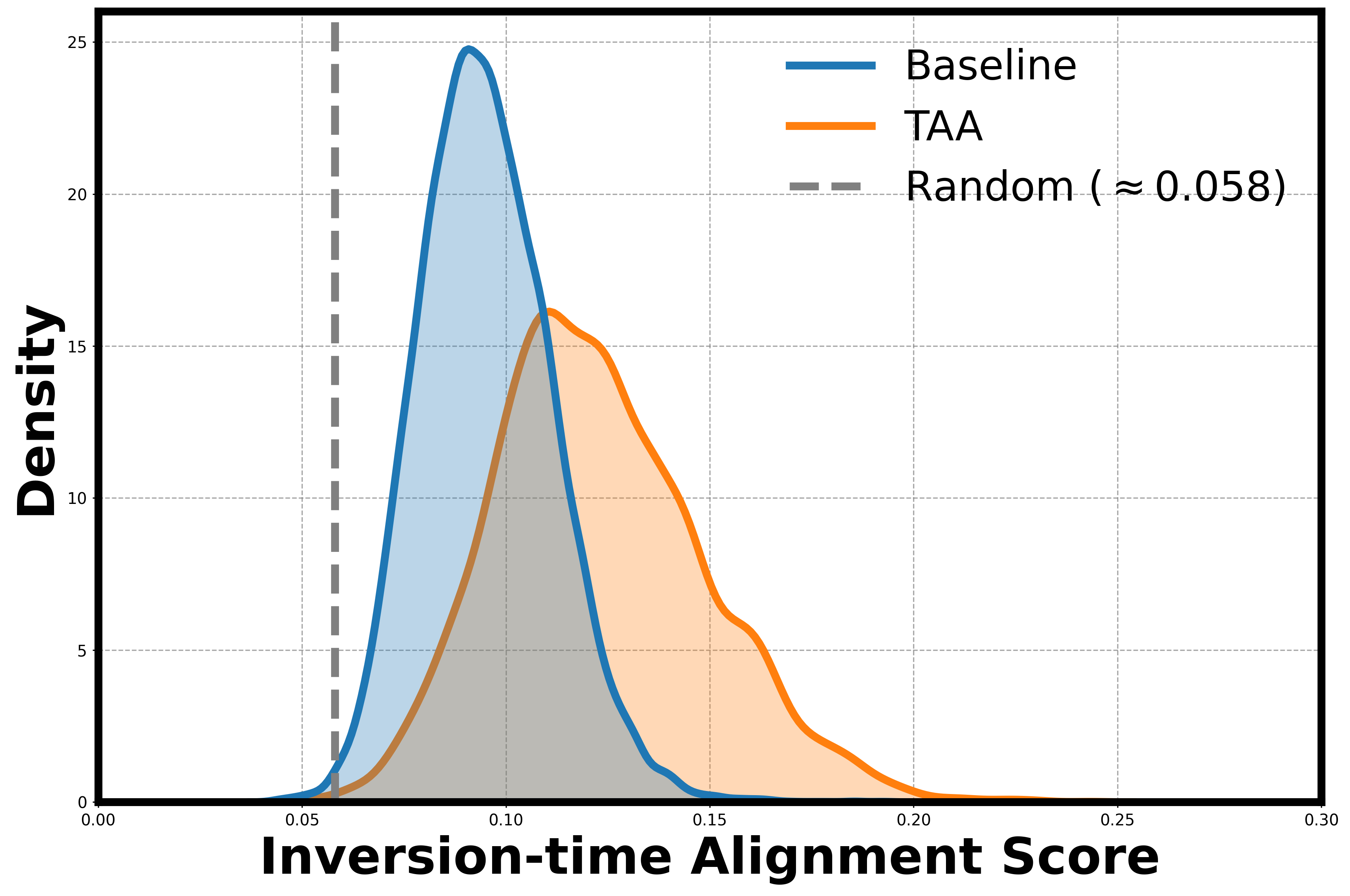}
    \label{appx_fig:taa_cos_dist}
  }
    \caption{\textbf{Distribution of inversion-time alignment scores.} 
    \textbf{(a)} Comparison between baseline and PAA method. 
    \textbf{(b)} Comparison between baseline and TAA method.
    Each plot shows the distribution of alignment scores between the inversion-time loss gradients and the generator manifold. 
    The measurement is performed using the PPA method with a StyleGAN generator trained on FFHQ, and the target model is a ResNet-18 trained on CelebA. 
    Both PAA and TAA lead to a rightward shift in the score distribution, indicating stronger alignment with the generator manifold. }
  \label{appx_fig:paa_taa_cos_dist}
  \vspace{-2mm}
\end{figure}

\textbf{Inversion-Time Alignment Score Comparison with PPA in High-Resolution Setting.}
Fig.\ref{appx_fig:paa_taa_cos_dist} shows the distribution of inversion-time alignment scores for the baseline method and our training-free variants, PAA and TAA. These results are obtained using the PPA attack on a ResNet-18 model trained on CelebA, with a StyleGAN generator pre-trained on FFHQ. Both PAA and TAA significantly shift the alignment score distribution to the right compared to the vanilla baseline, indicating stronger alignment between the loss gradients and the generator manifold. This enhanced alignment aligns well with the improved gradient visualizations shown in Fig.~\ref{fig:Grad_RN18}.

\textbf{Comparison with white-box MIAs in the low-resolution setting.}
In this experiment, we evaluate the performance of two target models, namely VGG16 and FaceNet, under three attack methods: GMI (LOMMA), KEDMI (LOMMA), and PLG-MI. Quantitative results are presented in Tabs.~\ref{tab:low_res_white_vgg16}, \ref{tab:low_res_White_fn}, and \ref{tab:low_res_White_plg}. Overall, AlignMI consistently outperforms baseline methods in most setups, achieving gains in both attack accuracy and KNN distance across different auxiliary datasets.
For example, when attacking VGG16 using GMI (LOMMA), PAA increases top-1 accuracy from $94.12\%$ to $94.65\%$ on CelebA, while TAA achieves an additional $2.54\%$ improvement and reduces the KNN distance from $1155.02$ to $1105.63$. Similar trends are observed for KEDMI (LOMMA) and PLG-MI, demonstrating the broad effectiveness of our proposed techniques.
However, we also observe occasional performance drops, particularly with PAA in certain KEDMI (LOMMA) and PLG-MI scenarios. This degradation likely arises from the poor visual quality of reconstructions produced by certain low-resolution attacks, especially under significant distribution shifts between the private and public auxiliary datasets. In such cases, additional perturbations further compromise image fidelity, diminishing the effectiveness of neighborhood sampling. As a result, the derived gradients become less informative, leading to occasional failures in inversion.

\begin{table}[t]
    \caption{Model inversion performance against SOTA defense methods in high-resolution settings. Target model $f$ = ResNet-152, trained on $\mathcal{D}_{\text{pri}}$ = FaceScrub. GAN is pre-trained on $\mathcal{D}_{\text{aux}}$ = FFHQ.}\vspace{2pt}
    \centering
    \fontsize{8}{9}\selectfont
    \setlength\tabcolsep{6pt}
    \resizebox{0.7\textwidth}{!}{
    \begin{tabular}{l|ccc}
        \toprule[1.5pt]
        \textbf{Method} & \textbf{Acc@1$\uparrow$} & \textbf{Acc@5$\uparrow$} & \textbf{KNN Dist$\downarrow$} \\
        \midrule[0.6pt]
        No Defense      & 57.89 & 81.25 & 0.893 \\
        \midrule
        BiDO-HSIC       & 35.11 & 59.14 & 1.031 \\
        + PAA           & 39.06 \green{(+3.95)} & 67.46 \green{(+8.32)} & 0.975 \green{(-0.056)} \\
        + TAA           & 62.58 \green{(+27.47)} & 84.09 \green{(+24.95)} & 0.855 \green{(-0.176)} \\
        \midrule
        NegLS  & 8.40 & 23.50 & 1.309 \\
        + PAA           & 8.62 \green{(+0.22)} & 23.67 \green{(+0.17)} & 1.303 \green{(-0.006)} \\
        + TAA           & 10.61 \green{(+2.21)} & 27.31 \green{(+3.81)} & 1.278 \green{(-0.031)} \\
        \midrule
        TL-DMI          & 25.14 & 51.72 & 1.026 \\
        + PAA           & 34.93 \green{(+9.79)} & 63.66 \green{(+11.94)} & 1.022  \green{(-0.004)} \\
        + TAA           & 47.80 \green{(+22.66)} & 75.51 \green{(+23.79)} & 0.971 \green{(-0.055)} \\
        \bottomrule[1.5pt]
    \end{tabular}}
    \label{tab:defense_high}
\end{table}

\textbf{Comparisons under SOTA MIA defenses.}
Our evaluation focuses on the high-resolution setting, where we assess the effectiveness of our proposed training-free alignment enhancement methods, PAA and TAA, when integrated with state-of-the-art (SOTA) generative model inversion attacks against leading MIA defenses, including BiDO-HSIC~\citep{peng2022BiDO}, NegLS~\citep{struppek24smoothing}, and TL-DMI~\citep{TL-DMI}. The results, summarized in Tab.~\ref{tab:defense_high}, show that both PAA and TAA improve inversion performance across all defense scenarios, with TAA consistently achieving the strongest results. All attacks are conducted using the \emph{Plug \& Play Attack} (PPA) method, targeting a ResNet-152 classifier trained on $\mathcal{D}_{\text{pri}} = \text{FaceScrub}$, with the generative prior provided by a StyleGAN model trained on $\mathcal{D}_{\text{aux}} = \text{FFHQ}$. Detailed results are shown in Tab.~\ref{tab:defense_high}.

For the BiDO-HSIC defense, the baseline inversion performance drops significantly, with top-1 accuracy (Acc@1) of $35.11\%$, top-5 accuracy (Acc@5) of $59.14\%$, and KNN distance of $1.031$. Integrating PAA yields moderate gains, raising Acc@1 to $39.06\%$ and Acc@5 to $67.46\%$, while reducing the KNN distance to $0.975$. In contrast, TAA achieves substantial improvements, boosting Acc@1 to $62.58\%$ and Acc@5 to $84.09\%$, alongside a sharper drop in KNN distance to $0.855$. This suggests that TAA more effectively recovers semantically meaningful gradients that better align with the generator manifold.

Under the stronger NegLS defense, which imposes stronger regularization and suppresses inversion more aggressively, the baseline Acc@1 is just $8.40\%$. Although this setting presents a more challenging scenario, PAA still offers slight improvements, raising Acc@1 to $8.62\%$ and reducing KNN distance from $1.309$ to $1.303$. TAA further improves Acc@1 to $10.61\%$ and reduces KNN distance to $1.278$. While the absolute gains are smaller due to the strength of the defense, the consistent improvements across all metrics indicate enhanced gradient informativeness.

Finally, the TL-DMI defense, which involves partial model freezing during fine-tuning, the baseline attack achieves Acc@1 of $25.14\%$, Acc@5 of $51.72\%$, and KNN distance of $1.026$. PAA improves Acc@1 to $34.93\%$ and Acc@5 to $63.66\%$, slightly reducing the KNN distance to $1.022$. TAA again shows superior performance, reaching Acc@1 of $47.80\%$, Acc@5 of $75.51\%$, and decreasing KNN distance to $0.971$.

Overall, across all three defenses, both PAA and TAA enhance inversion performance, with TAA consistently outperforming PAA in all metrics. These results highlight the generality and robustness of our alignment-enhancing framework. TAA, in particular, effectively boosts attack success rates while recovering reconstructions that are perceptually and semantically closer to the true data distribution, even under strong privacy-preserving defenses.

\begin{table}[t]
\parbox{.48\linewidth}{
    \caption{Ablation study on PAA sample size $K$ with $\alpha=0.03$. Higher $K$ improves results slightly, but gains saturate.}\vspace{2pt}
    \centering
    \fontsize{8}{9}\selectfont
    \begin{tabular}{lc ccc}
        \toprule[1.5pt]
        Method & $K$ & Acc@1$\uparrow$ & Acc@5$\uparrow$ & KNN Dist$\downarrow$ \\
        \midrule[0.6pt]
        PPA & - & 77.00 & 92.44 & 0.807 \\
        + PAA & 20 & 79.56 & 93.24 & 0.804 \\
        + PAA & 60 & 78.64 & 92.84 & 0.804 \\
        + PAA & 100 & 78.60 & 93.32 & 0.802 \\
        + PAA & 150 & 79.16 & 93.44 & 0.797 \\
        \bottomrule[1.5pt]
    \end{tabular}\label{abl:paa_k_alpha003}
}
\hfill
\parbox{.48\linewidth}{
    \caption{Ablation study on PAA sample size $K$ with $\alpha=0.05$. Higher $K$ improves results slightly, but gains saturate.}\vspace{2pt}
    \centering
    \fontsize{8}{9}\selectfont
    \begin{tabular}{lc ccc}
        \toprule[1.5pt]
        Method & $K$ & Acc@1$\uparrow$ & Acc@5$\uparrow$ & KNN Dist$\downarrow$ \\
        \midrule[0.6pt]
        PPA & - & 77.77 & 92.73 & 0.798 \\
        + PAA & 20 & 82.52 & 94.48 & 0.789 \\
        + PAA & 60 & 82.04 & 94.08 & 0.789 \\
        + PAA & 100 & 81.92 & 94.55 & 0.788 \\
        + PAA & 150 & 82.28 & 94.16 & 0.788 \\
        \bottomrule[1.5pt]
    \end{tabular}\label{abl:paa_k_alpha005}
}
\end{table}

\begin{table}[t]
\parbox{.48\linewidth}{
    \caption{Ablation study on TAA sample size $K$. Higher $K$ yields marginal gains.}
    \centering
    \fontsize{8}{8}\selectfont
    \begin{tabular}{lc ccc}
        \toprule[1.5pt]
        Method & $K$ & Acc@1$\uparrow$ & Acc@5$\uparrow$ & KNN Dist$\downarrow$ \\
        \midrule[0.6pt]
        PPA & - & 77.77\% & 92.73\% & 0.798 \\
        + TAA & 20 & 87.64\% & 96.04\% & 0.748 \\
        + TAA & 60 & 88.28\% & 96.44\% & 0.746 \\
        + TAA & 100 & 88.44\% & 96.16\% & 0.745 \\
        + TAA & 150 & 88.16\% & 96.44\% & 0.745 \\
        \bottomrule[1.5pt]
    \end{tabular}\label{abl:taa_k}
}
\hfill
\parbox{.48\linewidth}{
    \caption{Ablation study on PAA perturbation scale $\alpha$ at fixed $K=60$. Increasing $\alpha$ improves alignment but saturates.}
    \centering
    \fontsize{8}{8}\selectfont
    \begin{tabular}{lc ccc}
        \toprule[1.5pt]
        Method & $\alpha$ & Acc@1$\uparrow$ & Acc@5$\uparrow$ & KNN Dist$\downarrow$ \\
        \midrule[0.6pt]
        PAA & 0.01 & 74.72\% & 91.80\% & 0.822 \\
        PAA & 0.03 & 78.64\% & 92.84\% & 0.804 \\
        PAA & 0.05 & 82.04\% & 94.08\% & 0.789 \\
        PAA & 0.10 & 82.84\% & 94.48\% & 0.780 \\
        PAA & 0.15 & 79.16\% & 93.44\% & 0.797 \\
        \bottomrule[1.5pt]
    \end{tabular}\label{abl:paa_alpha}
}
\end{table}

\subsection{Ablation Study}\label{app:add_abl}

In this subsection, we perform an ablation study to examine the sensitivity of our proposed \emph{AlignMI} approach to two key hyperparameters: (1) the number of samples $K$ used to compute the smoothed, alignment-enhanced gradients, and (2) the perturbation strength $\alpha$ used in the perturbation-averaged alignment (PAA) method. All experiments are conducted using a DenseNet-121 target model trained on the FaceScrub dataset at $224 \times 224$ resolution, with a StyleGAN generator pre-trained on FFHQ serving as the prior model.

\textbf{Effect of Sample Number $K$ in PAA.}
We first investigate the influence of the sample number $K$ on PAA under two different perturbation strengths. As shown in Tabs.~\ref{abl:paa_k_alpha003} and \ref{abl:paa_k_alpha005}, we observe that increasing $K$ has a limited effect on attack accuracy, which remains relatively stable across settings. However, the KNN distance continues to decrease slightly as $K$ grows, indicating progressively finer reconstruction fidelity. These findings suggest that while larger $K$ offers marginal improvements, even a relatively small sample number (\eg $K=20$) is sufficient to achieve substantial gains over the baseline. This highlights the practicality of PAA in improving inversion performance with minimal computational overhead.

\textbf{Effect of Sample Number $K$ in TAA.}
We conduct a similar evaluation for the TAA method. As presented in Tab.~\ref{abl:taa_k}, both attack accuracy and KNN distance improve as $K$ increases, with performance gains tapering off beyond $K=100$. Notably, TAA achieves strong results even with $K=20$, outperforming the baseline by a significant margin. This again demonstrates that our training-free alignment promotion strategy enhances inversion performance effectively, even with limited sampling, thus making it computationally efficient.

\textbf{Effect of Perturbation Strength $\alpha$ in PAA.}
Finally, we analyze the role of the perturbation strength $\alpha$ in PAA. As shown in Tab.~\ref{abl:paa_alpha}, increasing $\alpha$ initially boosts both attack accuracy and KNN distance, with performance peaking around $\alpha=0.1$. However, beyond this threshold (\eg $\alpha=0.15$), both metrics begin to deteriorate, likely due to the perturbations introducing excessive noise that destabilizes the model’s prediction and results in unreliable gradients. This suggests that careful tuning of $\alpha$ is critical, and moderate values around $0.05$ to $0.1$ provide a favorable balance between denoising and preserving informative signals.

\subsection{Visualization of Gradient Images}\label{app:gradvis}

In this subsection, we qualitatively demonstrate that both PAA and TAA produce loss gradients that are better aligned with the generator manifold. Our analysis focuses on the high-resolution setting, which enables high-quality visualizations of gradient structures. Figs.~\ref{fig:Grad_RN18}, \ref{fig:Grad_DN121}, and \ref{fig:Grad_RN50} present gradient visualizations from ResNet-18, DenseNet-121, and ResNeSt-50 models trained on CelebA. Each figure compares gradient maps produced by the baseline, PAA, and TAA methods, using GANs pre-trained on FFHQ. We also visualize the inversion-time loss gradient images for three attack methods in the low-resolution setting (see Fig.~\ref{fig:Grad_low_res}), as a complementary comparison to Fig.~\ref{fig:dL/dx}.

\begin{figure*}[p]
    \centering
    \includegraphics[width=\linewidth]{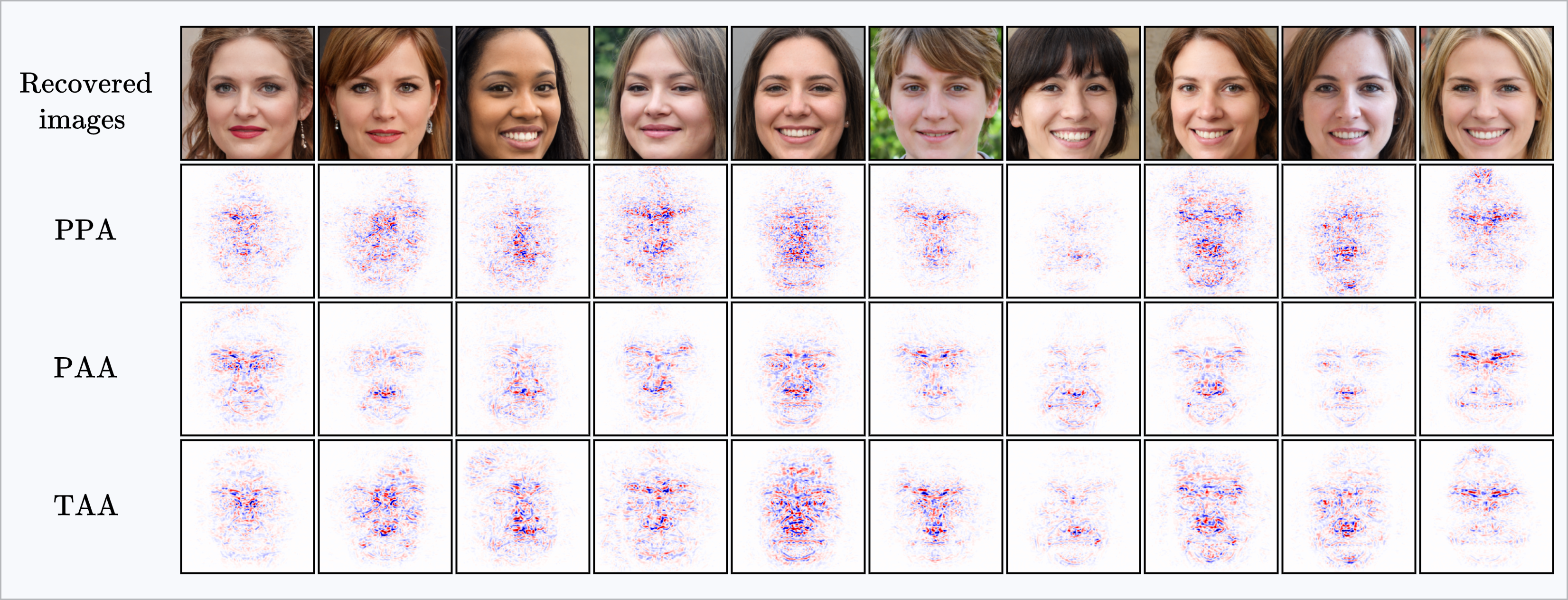}
    \caption{Visual comparison of inversion-time loss gradients for PPA in the high-resolution setting. We illustrate reconstructed samples for ten classes in $\mathcal{D}_{\text{pri}}$ = CelebA using GANs pre-trained on $\mathcal{D}_{\text{aux}}$ = FFHQ. The target model is ResNet-18. (Best viewed with zoom.)}
    \label{fig:Grad_RN18}
\end{figure*}

\begin{figure*}[p]
    \centering
    \includegraphics[width=1\linewidth]{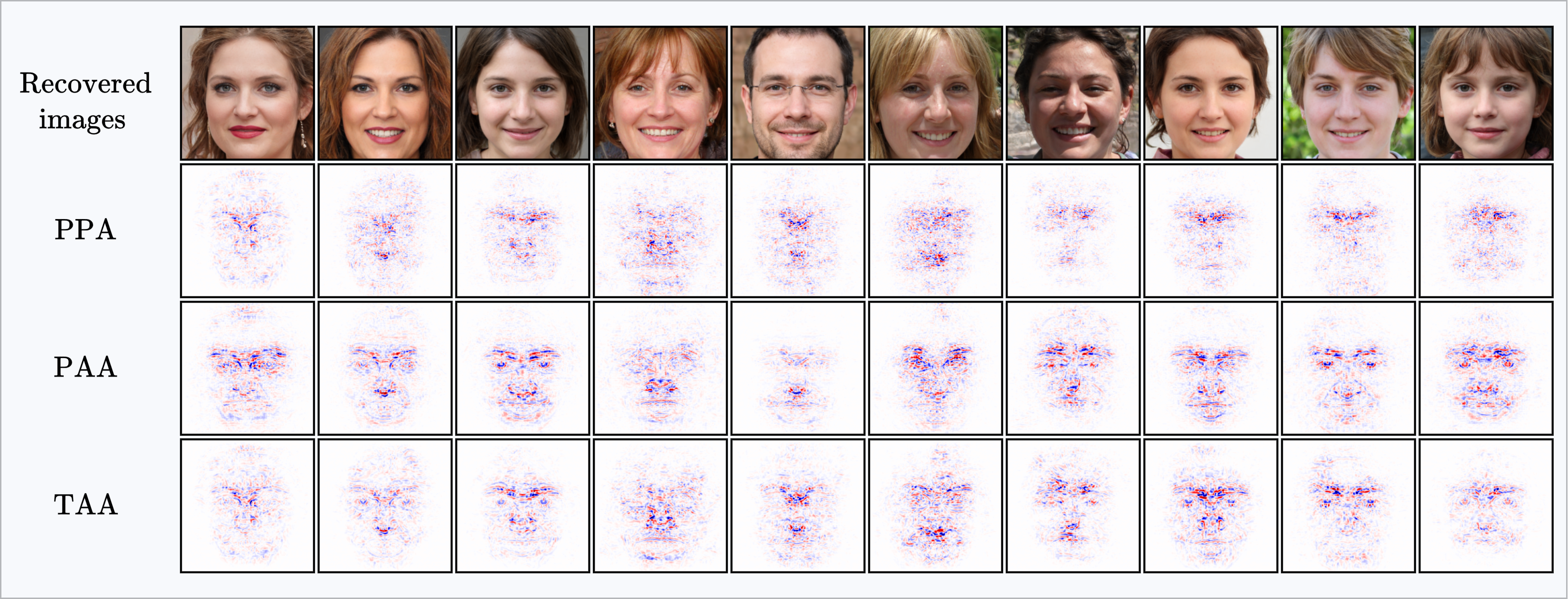}
    \caption{Visual comparison of inversion-time loss gradients for PPA in the high-resolution setting. We illustrate reconstructed samples for ten classes in $\mathcal{D}_{\text{pri}}$ = CelebA using GANs pre-trained on $\mathcal{D}_{\text{aux}}$ = FFHQ. The target model is DenseNet-121. (Best viewed with zoom.)}
    \label{fig:Grad_DN121}
\end{figure*}

\begin{figure*}[p]
    \centering
    \includegraphics[width=1\linewidth]{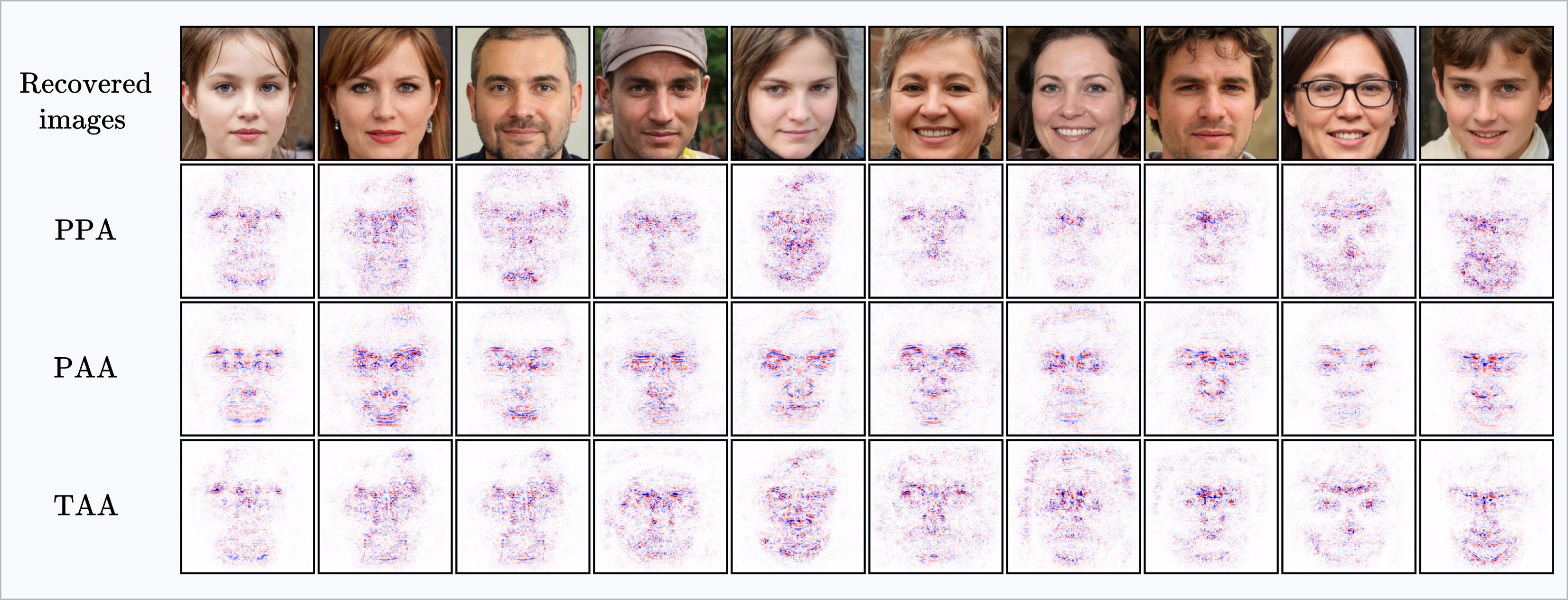}
    \caption{Visual comparison of inversion-time loss gradients for PPA in the high-resolution setting. We illustrate reconstructed samples for ten classes in $\mathcal{D}_{\text{pri}}$ = CelebA using GANs pre-trained on $\mathcal{D}_{\text{aux}}$ = FFHQ. The target model is ResNeSt-50. (Best viewed with zoom.)}
    \label{fig:Grad_RN50}
\end{figure*}

\begin{figure}
    \centering
    \includegraphics[width=1\linewidth]{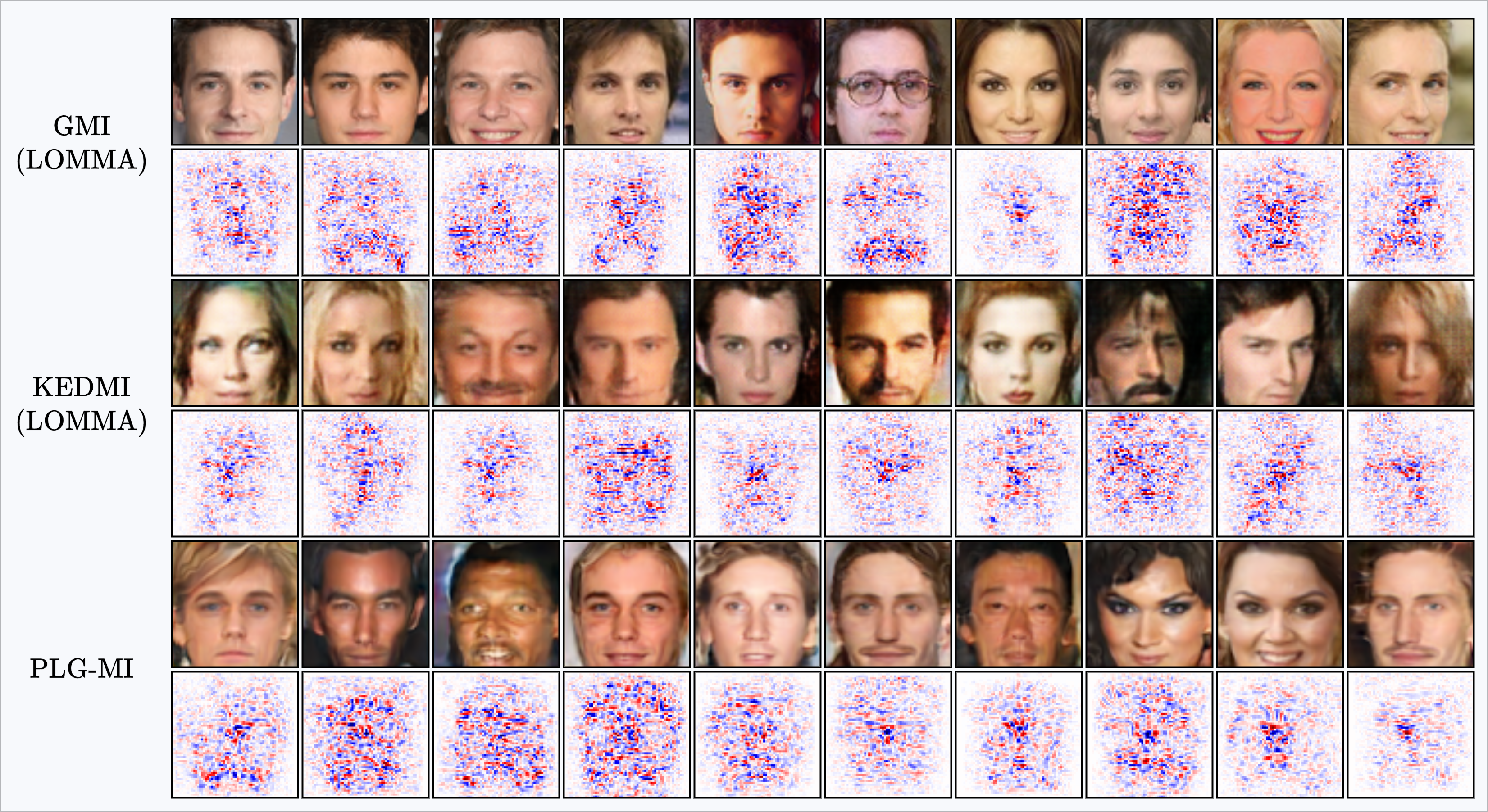}
    \caption{Visual of inversion-time loss gradients for three attack methods in the low-resolution setting. The target model is FaceNet. (Best viewed with zoom.)}
    \label{fig:Grad_low_res}
\end{figure}

\subsection{Visualization of Reconstructed Images}\label{app:sample_visualization}


In this subsection, we present qualitative results of the baseline attack methods and our proposed AlignMI approach. High-resolution reconstructions are shown in Figs.~\ref{fig:PPA_FFHQ_CelebA} and \ref{fig:PPA_FFHQ_FaceScrub}. Fig.~\ref{fig:PPA_FFHQ_CelebA} compares reconstructed samples from the first ten classes using ResNet-18, DenseNet-121, and ResNeSt-50 trained on CelebA, with GANs pre-trained on FFHQ. Fig.~\ref{fig:PPA_FFHQ_FaceScrub} provides similar results for the same target models trained on FaceScrub, also using FFHQ-pretrained GANs.


In low-resolution setting, we evaluate reconstruction quality by comparing samples from the first ten classes generated by GMI (LOMMA) and KEDMI (LOMMA) attack methods. These experiments employ VGG16 and FaceNet trained on CelebA as target models, with GANs pre-trained on both CelebA and FFHQ datasets, as shown in Figs.~\ref{fig:visualization_low_resolution_gmi_ked_vgg16}, and \ref{fig:visualization_low_resolution_gmi_ked_facenet} respectively. 
Additionally, we present PLG-MI reconstructions on FaceNet using GANs trained on FFHQ and FaceScrub datasets in Fig.~\ref{fig:visualization_low_resolution_plg_facenet}.

\section{Discussion}\label{app:discussion}

\textbf{Limitations.} 
Although our experiments validate the proposed hypothesis in the low-resolution setting, gradient–manifold alignment-aware training is currently feasible only at this scale. We observe an empirical trade-off between alignment and predictive performance, suggesting that stronger alignment may come at the cost of generalization. However, due to computational limitations, we are unable to assess whether this trend persists in high-resolution settings.
In particular, high-resolution inputs of size $224 \times 224 \times 3$ produce latent representations of size $28 \times 28 \times 4$ from the VAE encoder, resulting in a decoder Jacobian of size $150{,}528 \times 3136$. This is roughly 150 times larger than in the low-resolution case, rendering tangent space estimation computationally and memory intensive. Moreover, the underlying cause of the observed alignment–accuracy trade-off remains unclear and warrants further investigation in future work.

\textbf{Broader Impacts.}
From a geometric standpoint, our analysis uncovers a previously overlooked dimension of model inversion vulnerability, complementing existing perspectives focused on predictive power. This insight sheds new light on the mechanisms behind privacy risks in machine learning models. From a broader societal perspective, the AlignMI approach, if misused, could increase the risk of exposing sensitive training data. Conversely, this geometric viewpoint also enables the development of principled defenses against generative MIAs. Specifically, reducing gradient-manifold alignment as a defense is a promising direction for future work.

\begin{figure*}[p]
    \centering
    \includegraphics[width=\linewidth]{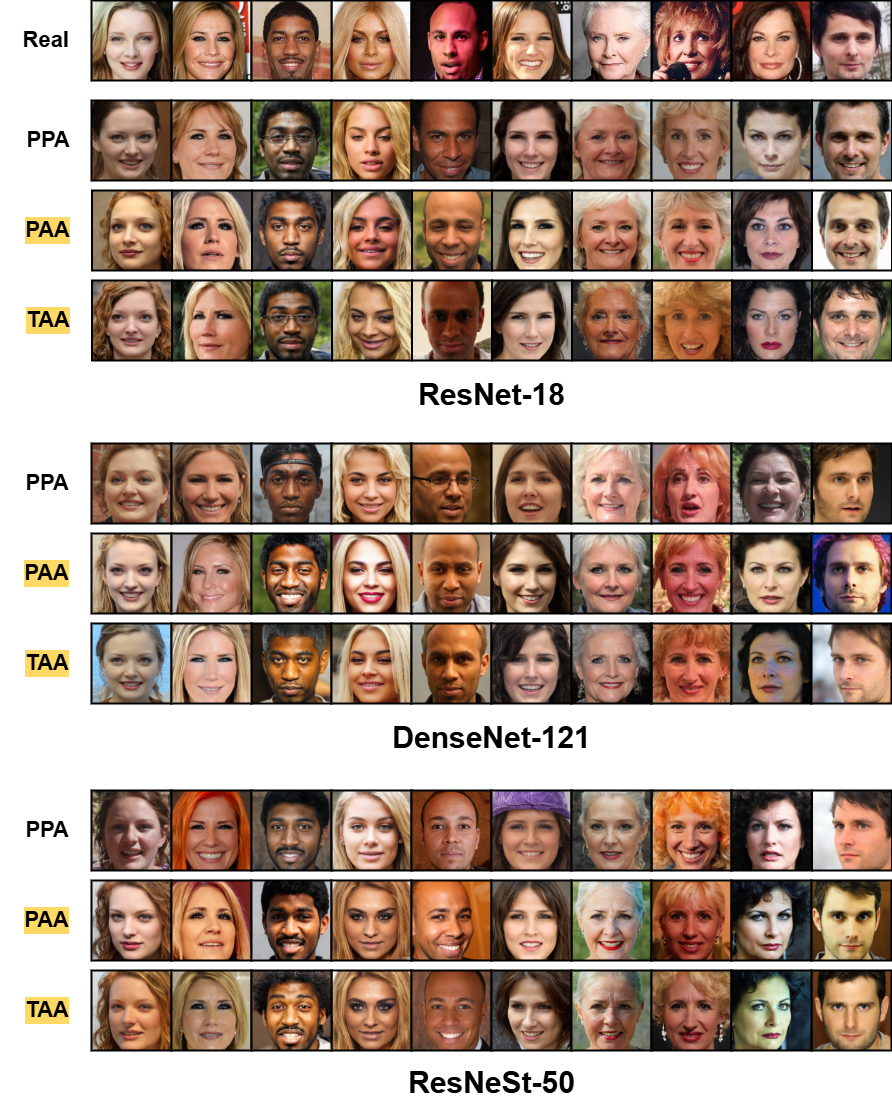}
    \caption{Visual comparison in high-resolution settings. We illustrate reconstructed samples for the first ten classes in $\mathcal{D}_{\text{pri}}$ = CelebA using GANs pre-trained on $\mathcal{D}_{\text{aux}}$ = FFHQ.}
    \label{fig:PPA_FFHQ_CelebA}
\end{figure*}

\begin{figure*}[p]
    \centering
    \includegraphics[width=\linewidth]{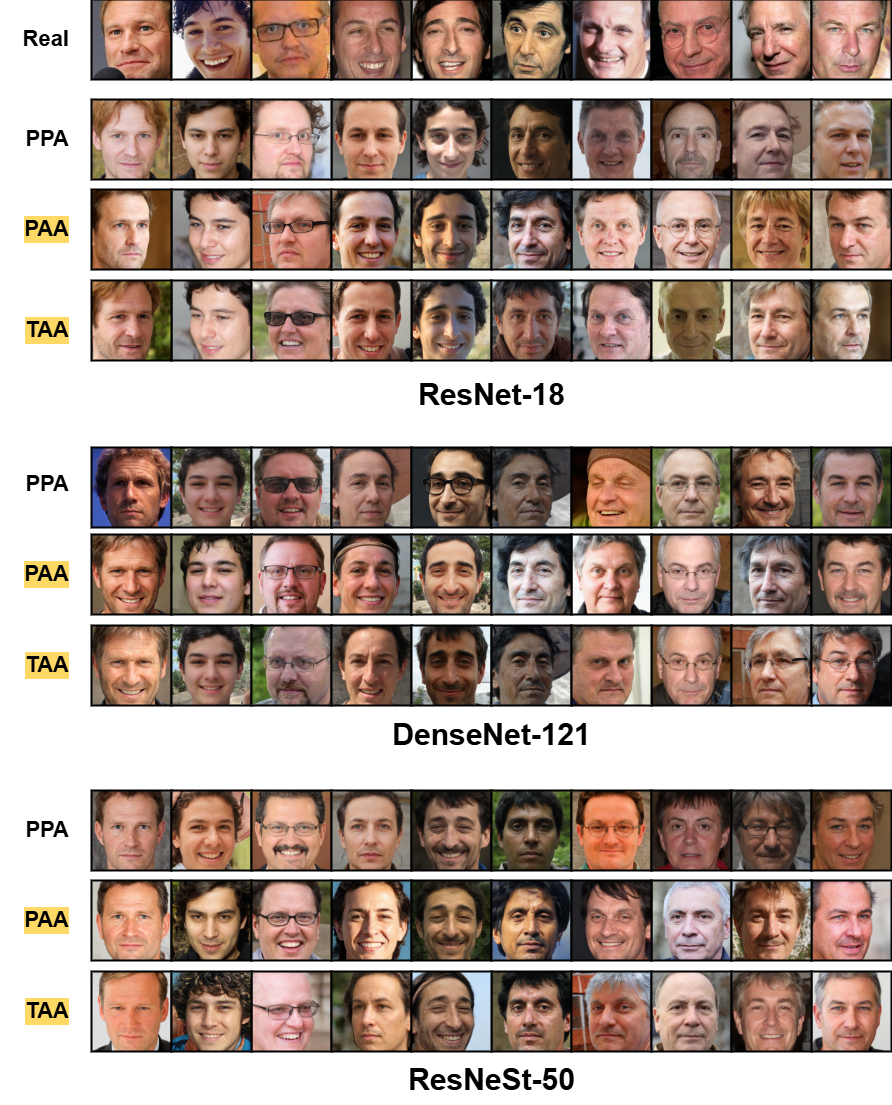}
    \caption{Visual comparison in high-resolution settings. We illustrate reconstructed samples for the first ten classes in $\mathcal{D}_{\text{pri}}$ = FaceScrub using GANs pre-trained on $\mathcal{D}_{\text{aux}}$ = FFHQ.}
    \label{fig:PPA_FFHQ_FaceScrub}
\end{figure*}

\begin{figure*}[p]
    \centering
    \includegraphics[width=\linewidth]{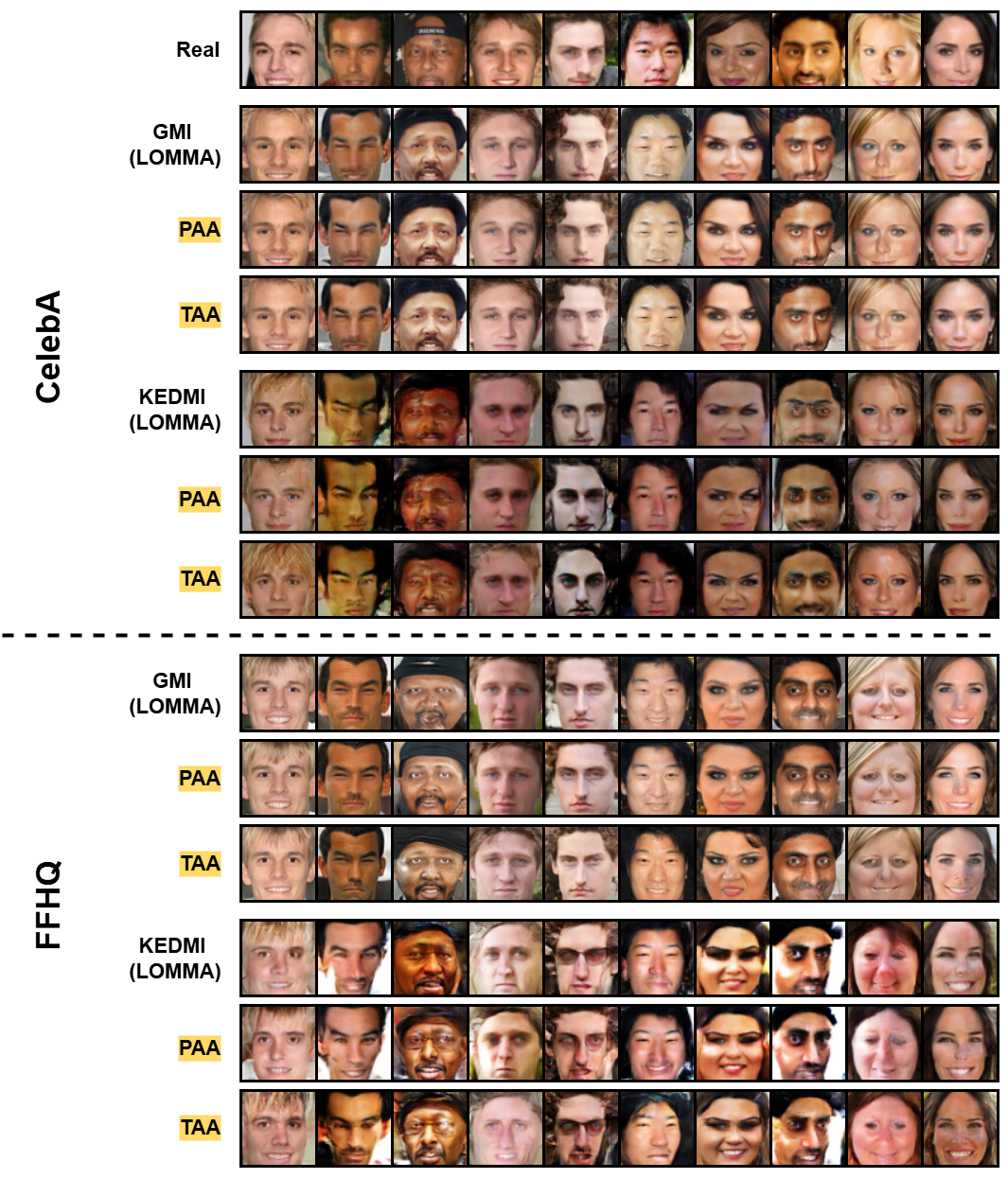}
    \caption{Visual comparison in low-resolutions settings. We illustrate reconstructed samples for the first ten classes in $\mathcal{D}_{\text{pri}}$ = CelebA using GANs trained from scratch on $\mathcal{D}_{\text{aux}}$ = CelebA / FFHQ. The target model is VGG16.}
    \label{fig:visualization_low_resolution_gmi_ked_vgg16}
\end{figure*}

\begin{figure*}[p]
    \centering
    \includegraphics[width=\linewidth]{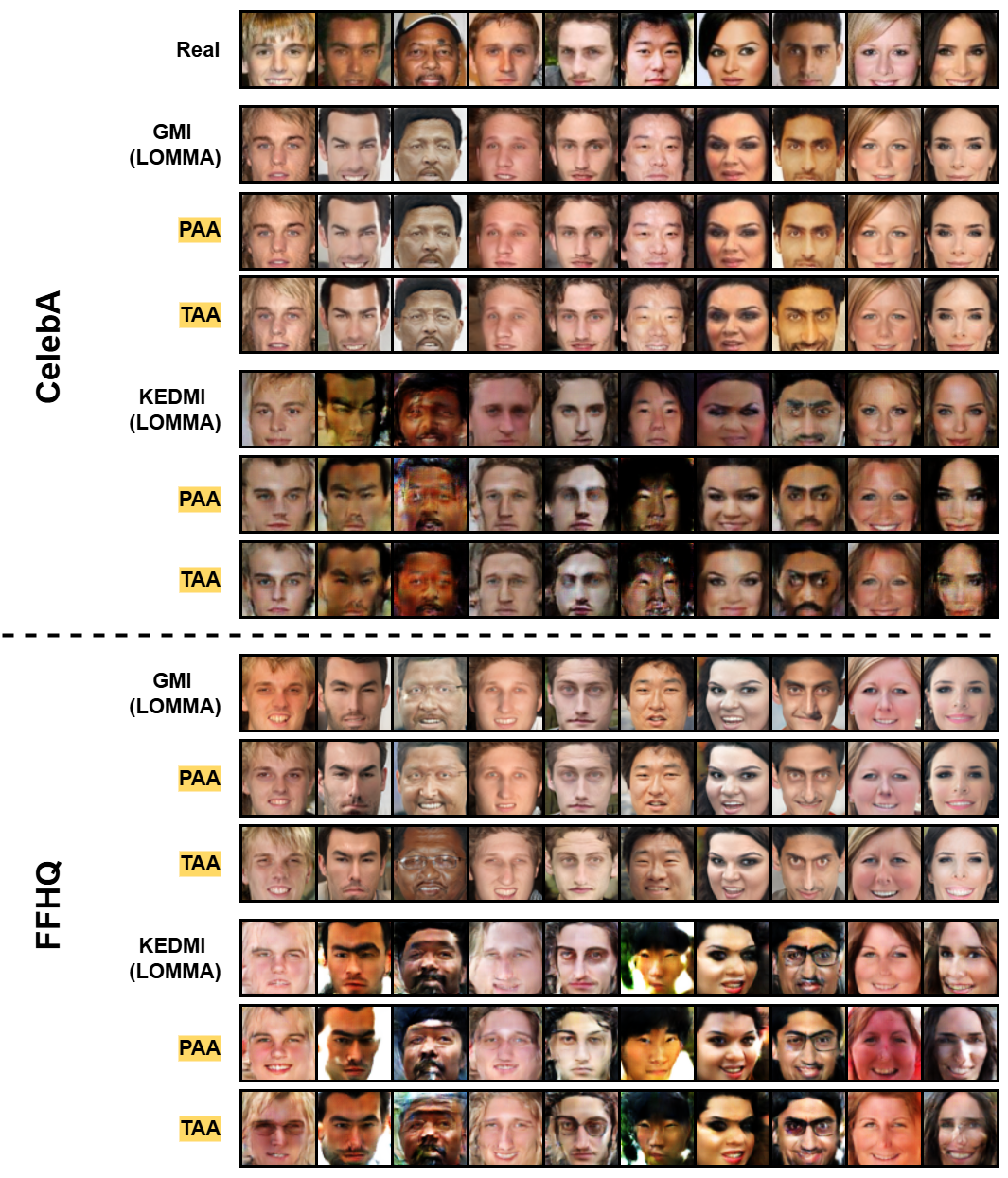}
    \caption{Visual comparison in low-resolutions settings. We illustrate reconstructed samples for the first ten classes in $\mathcal{D}_{\text{pri}}$ = CelebA using GANs trained from scratch on $\mathcal{D}_{\text{aux}}$ = CelebA / FFHQ. The target model is FaceNet.}
    \label{fig:visualization_low_resolution_gmi_ked_facenet}
\end{figure*}

\begin{figure*}[p]
    \centering
    \includegraphics[width=\linewidth]{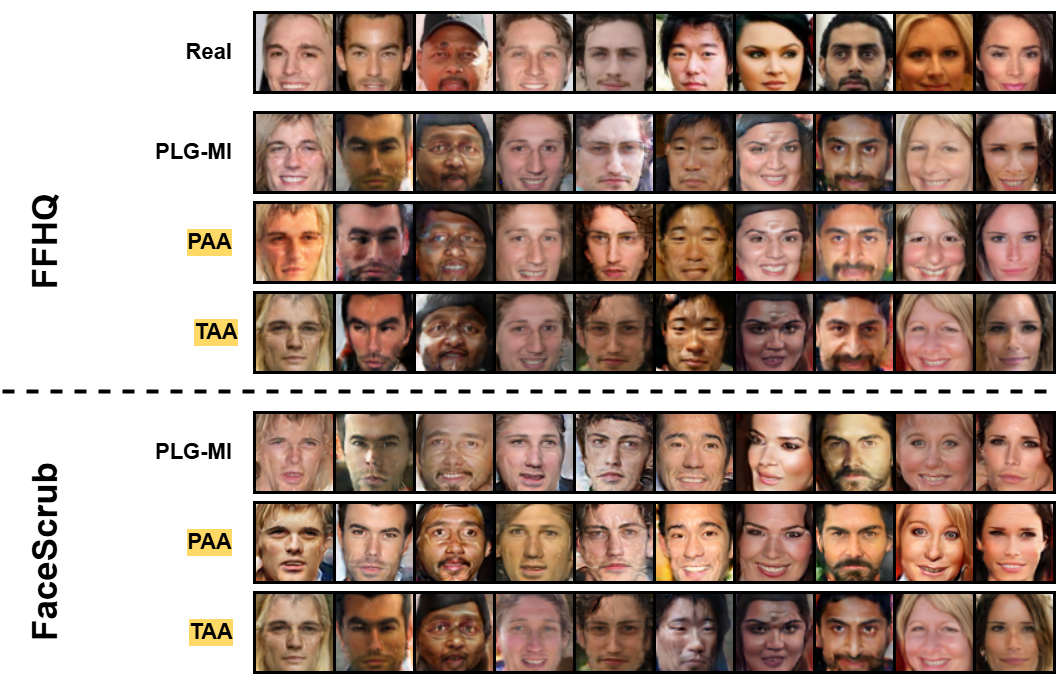}
    \caption{Visual comparison in low-resolutions settings. We illustrate reconstructed samples for the first ten classes in $\mathcal{D}_{\text{pri}}$ = CelebA using GANs trained from scratch on $\mathcal{D}_{\text{aux}}$ = FFHQ / FaceScrub. The target model is FaceNet.}
    \label{fig:visualization_low_resolution_plg_facenet}
\end{figure*}



\end{document}